\author{%
  Till Freihaut \\
  Department of Computer Science\\
  University of Zurich\\
  \texttt{freihaut@ifi.uzh.ch} \\
  \And
  Giorgia Ramponi \\
  Department of Computer Science\\
  University of Zurich\\
  \texttt{ramponi@ifi.uzh.ch} \\
}
\theoremstyle{plain}
\newtheorem{theorem}{Theorem}[section]
\newtheorem{proposition}[theorem]{Proposition}
\newtheorem{lemma}[theorem]{Lemma}
\newtheorem{corollary}[theorem]{Corollary}
\theoremstyle{definition}
\newtheorem{definition}[theorem]{Definition}
\newtheorem{assumption}[theorem]{Assumption}
\theoremstyle{remark}
\newcommand\supp{\mathrm{supp}}
\def\gA{{\mathcal{A}}}
\def\gB{{\mathcal{B}}}
\def\gG{{\mathcal{G}}}
\def\gR{{\mathcal{R}}}
\def\gS{{\mathcal{S}}}
\def\gX{{\mathcal{X}}}
\def\joa{{\textbf{a}}}
\def\R{{\mathbb{R}}}
\def\Exp{\mathbb{E}}
\def\Nash{{\boldsymbol{\pi}^{\mathrm{Nash}}}}
\def\Nashi{{\pi^{i,\mathrm{Nash}}}}
\def\onash{{\pi^{-i,\mathrm{Nash}}}}
\def\alpi{\boldsymbol{\pi}}
\def\jopolicy{{\boldsymbol{\pi}}}
\newcommand\optadv{A_{\mathcal{G} \cup R}^{i, \Nash}}
\newcommand\optQ{Q_{\mathcal{G} \cup R}^{i, \Nash}}
\newcommand\optV{V_{\mathcal{G} \cup R}^{i, \Nash}}
\newcommand\MAIRL{(\mathcal{G}, \Nash)}
\def\gwr{{\mathcal{G} \cup R}}
\newcommand\Qval{Q_{\mathcal{G} \cup R}^{i, \boldsymbol{\pi}}}
\newcommand\Vval{V_{\mathcal{G} \cup R}^{i, \boldsymbol{\pi}}}
\newcommand\estMAIRL{(\hat{\mathcal{G}}, \hat{\boldsymbol{\pi}}^{\mathrm{Nash}})}
\newcommand\feasible{\mathcal{R}_{(\gG,\boldsymbol{\pi}^{\mathrm{Nash}})}}
\newcommand\estfeasible{\mathcal{R}_{(\hat{\mathcal{G}},\hat{\boldsymbol{\pi}}^{\mathrm{Nash}})}}
\newcommand\Nashgap{\mathcal{E}(\boldsymbol{\hat{\pi}}, R)}
\newcommand\maxRi{R^{i}_{\max}}
\newcommand\maxR{R_{\max}}
\newcommand\indi{\mathbf{1}_{\{\Nashi(a^i \mid s) = 0\}}}
\newcommand\oind{\mathbf{1}_{\{\onash(a^{-i} \mid s) = 1\}}}
\newcommand\estnashi{\hat{\boldsymbol{\pi}}^{i,\mathrm{Nash}}}
\newcommand\estonash{\hat{\boldsymbol{\pi}}^{-i,\mathrm{Nash}}}
\newcommand\estnash{\hat{\boldsymbol{\pi}}^{\mathrm{Nash}}}
\newcommand\estexperts{\mathbf{1}_{\{\hat{\pi}^{i,\mathrm{Nash}}(a^i \mid s) = 0\}}}
\newcommand\estoexperts{\mathbf{1}_{\{\hat{\pi}^{-i,\mathrm{Nash}}(a^{-i} \mid s) = 1\}}}
\newcommand\visitmu{\overline{w}_{s, \textbf{a}}^{\boldsymbol{\pi}, \rho}}
\newcommand\visit{\overline{w}_{s, \textbf{a}}^{\boldsymbol{\pi}}}
\newcommand\goodP{\sqrt{\frac{8 l_k(s,\joa)}{N_{k}^{+}(s,\joa)}}}
\newcommand\goodmu{\frac{1}{\Delta_{\min}} \sqrt{\frac{2\log(10\vert\gS \vert\vert \gA\vert (N_k^+(s))^2/\delta)}{N_k^+(s)}}}
\newcommand\goodinverse{\frac{1}{\Delta_{\min}^2} \sqrt{2 \frac{\log(10\vert \gS\vert\vert \gB\vert (N_k^+(s))^2/\delta)}{N^+_k(s)}}}
\newcommand\goodTV{\sqrt{\frac{2 \vert \gB\vert\log(10\vert \gS\vert \vert \gB\vert (N_k^+(s))^2/\delta)}{N_k^+(s)}}}
\newcommand\explicit{\frac{\lambda \log(a \mid s) +  V(s) - \gamma \sum_{s'} \sum_{b' \in \gB} \nu^*(b' \mid s) P(s' \mid s, a, b') V(s') - \sum_{b' \neq b} \nu^*(b' \mid s) R(s, a, b')}{\nu^*(b \mid s)}}
\newcommand\estexplicit{\frac{ \left(\lambda\log(\hat{\mu}^*(a \mid s) )+ \hat{V}(s) - \gamma \sum_s \sum_{b'}\hat{\nu}^*(b' \mid s)\hat{P}(s'\mid s,a,b') \hat{V}(s') - \sum_{b'} \hat{\nu}^*(b' \mid s)\hat{R}(s,a,b') \right)}{\hat{\nu}^*(b \mid s)}}
\definecolor{generalColor}{rgb}{0.04706, 0.13725, 0.26667} 
\definecolor{redp}{rgb}{0.78, 0.03, 0.08}
\definecolor{greenp}{rgb}{0.0, 0.51, 0.5}
\definecolor{yellowp}{rgb}{0.59, 0.44, 0.09}
\definecolor{greencol}{rgb}{0.0,0.4,0.0}
\definecolor{fcolor}{rgb}{0.8, 0.4, 0.0}
\definecolor{bluep}{rgb}{205,219,194}
\definecolor{vibrantBlue}{RGB}{0, 119, 187}
\definecolor{vibrantCyan}{RGB}{51, 187, 238}
\definecolor{vibrantTeal}{RGB}{0, 153, 136}
\definecolor{vibrantOrange}{RGB}{238, 119, 51}
\definecolor{vibrantRed}{RGB}{204, 51, 17}
\definecolor{vibrantMagenta}{RGB}{238, 51, 119}
\definecolor{vibrantGrey}{RGB}{100, 100, 100}
\definecolor{tf}{rgb}{0.1,0.1,0.8}
\definecolor{gr}{rgb}{0.1,0.5,0.1}
\newtheorem{example}{Example}[section]
\begin{document}

\title{On Feasible Rewards in\\
Multi-agent Inverse Reinforcement Learning}

\maketitle
\begin{abstract}
Multi-agent Inverse Reinforcement Learning (MAIRL) aims to recover agent reward functions from expert demonstrations. We characterize the feasible reward set in Markov games, identifying all reward functions that rationalize a given equilibrium. However, equilibrium-based observations are often ambiguous: a single Nash equilibrium can correspond to many reward structures, potentially changing the game's nature in multi-agent systems. We address this by introducing entropy-regularized Markov games, which yield a unique equilibrium while preserving strategic incentives. For this setting, we provide a sample complexity analysis detailing how errors affect learned policy performance. Our work establishes theoretical foundations and practical insights for MAIRL.
\end{abstract}

\section{Introduction}
Multi-agent Reinforcement Learning (MARL) has garnered substantial attention in recent years due to its capacity to model scenarios involving interacting agents. Notable successes have been achieved across diverse domains, including autonomous driving~\citep{DBLP:journals/corr/Shalev-ShwartzS16a, DBLP:journals/corr/abs-2010-09776}, internet marketing~\citep{Jin_2018}, multi-robot control~\citep{dawood2023safemultiagentreinforcementlearning}, traffic control~\citep{DBLP:journals/corr/abs-1908-03761}, and multi-player games~\citep{DBLP:journals/corr/abs-1909-07528,samvelyan2019starcraftmultiagentchallenge}. A critical prerequisite for these applications is the careful design of reward functions, a task that proves challenging even in single-agent settings~\citep{DBLP:journals/corr/AmodeiOSCSM16, DBLP:journals/corr/abs-1711-02827} and becomes significantly more complex in multi-agent environments where each agent's reward function must be tailored to their specific, potentially conflicting, objectives.

In numerous real-world scenarios, expert demonstrations of optimal behavior may be observable, while the underlying reward function driving these actions remains unknown. This is precisely the domain of Inverse Reinforcement Learning (IRL)~\citep{NGRUSSEL}. The objective of IRL is to recover plausible reward functions that can rationalize the observed behavior as optimal. However, early research in IRL highlighted a fundamental challenge: the problem is inherently ill-posed, as multiple reward functions can potentially explain the same observed behavior. Subsequent research has therefore focused on reformulating the IRL problem to enhance its practicality and applicability in real-world contexts~\citep{10.1145/1015330.1015430,10.5555/1620270.1620297,10.5555/1625275.1625692,Ratliff2006BoostingSP}.

The extension of IRL to the multi-agent setting introduces novel complexities, particularly concerning the definition of optimality and the multiplicity of Nash equilibria, given that each agent's optimal strategy is dependent on the strategies of all other agents. This necessitates the adoption of game-theoretic solution concepts, with the Nash equilibrium being the most prevalent~\citep{goktas2024efficient, DBLP:journals/corr/abs-1807-09936, ramponi2023imitation, EvaluatingStrategic}. In contrast to the substantial progress in understanding the theoretical underpinnings of single-agent IRL, the theoretical foundations of Multi-agent Inverse Reinforcement Learning remain comparatively underexplored. In single-agent IRL~\citet{pmlr-v139-metelli21a} established explicit conditions for feasible reward functions and developed efficient algorithms for unknown transition models and expert policies, assuming access to a generative model. This work has been extended to settings without a generative model~\citep{lindner2023active}, stricter optimality metrics~\citep{zhao2024inverse,metelli2023theoretical}, and offline settings~\cite{lazzati2024offlineinverserlnew, zhao2024inverse}. However, these studies are confined to single-agent scenarios and evaluate performance based on criteria that are not directly transferable to general-sum Markov games. In \cref{sec:related_work} we provide an extensive discussion on related works.

This paper aims to bridge the existing gap between the theoretical understanding of IRL in single-agent systems and its application to multi-agent systems. Specifically, we first address the research question:

\begin{center}
\label{question_1}
(Q1) \emph{What constitutes a rigorous definition of Multi-agent Inverse Reinforcement Learning?}
\end{center}

To address this question independently of a specific MAIRL algorithm, we derive properties of the feasible reward set, drawing inspiration from the initial work in single-agent settings by \citet{pmlr-v139-metelli21a}. First, we define a straight-forward extension from the single-agent feasible reward set to the multi-agent setting as all the rewards under which a \emph{single} Nash equilibrium expert is optimal, meaning it is indeed a Nash equilibrium. Then, we demonstrate that a single observed equilibrium is insufficient for identifying expressive reward sets, as distinct observed equilibria can induce different feasible reward sets (see \cref{fig:reward_sets} for an illustration).  This can result in a Nash Gap of order $(1-\gamma)^{-1}$ due to the multiplicity of the Nash equilibria. To mitigate the equilibrium selection problem, we introduce entropy-regularized Multi-agent IRL. Then, we formally characterize the inherent increase in complexity associated with the multi-agent setting. Within this framework, we characterize feasible rewards and establish sample complexity bounds that account for errors in transition dynamics and policy estimation, assuming access to a generative model.

In single-agent settings, the introduction of entropy-regularized experts has facilitated the derivation of conditions under which the reward function is identifiable~\citep{cao2021identifiabilityinversereinforcementlearning, identify}. This motivates our second research question:

\begin{center}
\label{question_2}
(Q2) \emph{Is reward identifiability achievable in Multi-agent settings?}
\end{center}

We provide a partial positive answer to this question. We show that in general-sum Markov games without additional structural assumptions, reward identifiability is only possible in the average reward sense. However, we prove that if the underlying reward structure is linearly separable, meaning that the reward can be decomposed into a reward for player 1 and player 2, $R(s,a,b) = R_A(s,a) + R_B(s,b)$, then reward identification (up to additive constants) is possible.

\begin{figure}
\centering

\begin{tikzpicture}[scale=0.8] %

\draw[line width=1.2pt, vibrantCyan, fill=vibrantCyan, fill opacity=0.2] (0,0) ellipse (2.2cm and 1.1cm);
\draw (-4, 0) node[rectangle, draw, vibrantCyan] {\large $\textcolor{vibrantCyan}{\mathcal{R}_{(\gG,\Pi^{\textrm{Nash}})}}$};

\draw[line width=1.2pt, vibrantRed, fill=vibrantRed, fill opacity=0.2, rotate around={20:(2.5,1)}] (2, 1) ellipse (2.2cm and 1.1cm);
\draw[line width=1.2pt, vibrantMagenta, fill=vibrantMagenta, fill opacity=0.2, rotate around={-20:(2.5,-1)}] (2,-1) ellipse (2.2cm and 1.1cm);

\draw (6, 1.5) node[rectangle, draw, vibrantRed] {\large $\textcolor{vibrantRed}{\widehat{\mathcal{R}}}_{(\hat{\gG}, \hat{\jopolicy}^{\textrm{Nash}}_1)}$};

\draw (6, -1.5) node[rectangle, draw, vibrantMagenta] {\large $\textcolor{vibrantMagenta}{\widehat{\mathcal{R}}}_{(\hat{\gG}, \hat{\jopolicy}^{\textrm{Nash}}_2)}$};

\end{tikzpicture}

\caption{Feasible Reward Sets of true sets of Nash equilibria and the recovered feasible reward sets for two different observed Nash equilibria.}
\label{fig:reward_sets}
\end{figure}

\section{Preliminaries}
\label{Preliminaries}
We present the essential background and notation used throughout this paper, also summarized in \cref{notation}.

\noindent\textbf{Mathematical background.} Let $\gX$ be a finite set, then we denote by $\mathbb{R}^{\gX}$ all functions mapping from $\gX$ to $ \R$. Additionally, we denote by $\Delta^{\gX}$ the set of probability measures over $\gX$. For $n \in \mathbb{N}$ we use $[n]:= \{1, \ldots, n\}$. We introduce for a (pre)metric space $(\gX,d)$ with $\mathcal{Y}, \mathcal{Y}' \subseteq \gX$ two non-empty sets the \emph{Hausdorff (pre)metric} $\mathcal{H}_d: 2^\gX \times 2^\gX \to [0, +\infty)$ as \label{Hausdorff} $\mathcal{H}_d(\mathcal{Y}, \mathcal{Y}') := \max\left\{ \sup_{y \in \mathcal{Y}} \inf_{y' \in \mathcal{Y}'} d(y,y'), \sup_{y' \in \mathcal{Y}'} \inf_{y \in \mathcal{Y}} d(y,y')\right\}.$ Additionally, for two probability distributions $\mu, \mu'$ over a finite set $\mathcal{X},$ we denote the total variation as $\mathrm{TV}(\mu, \mu') := \sum_{x \in \mathcal{X}} \vert \mu(x) - \mu'(x) \vert.$

\noindent\textbf{Markov games.} An infinite time, discounted n-person general-sum Markov game~\citep{Shapley1953StochasticG,Takahashi1964EquilibriumPO,Fink1964EquilibriumIA} without reward function $(\text{MG} \setminus R )$ is characterized by a tuple $ \gG = (n, \gS, \gA, P, \gamma, \rho)$, where $n\in \mathbb{N}$ denotes the finite number of players; $\gS$ the finite state space; $\gA:= \gA^1 \times \ldots \times \gA^n$ the joint action space of the individual action spaces $\gA^i$; $P: \gS \times \gA \to \Delta^\gS$ the transition model; $\gamma$ is the discount factor and $\rho$ is the initial state distribution. We will make use of the words persons, agents and players interchangeably. The strategy of a single agent, also called the policy, we denote by $\pi^i : S \to \Delta^{\gA^{i}}$. A joint strategy is given by $\alpi = (\pi^{1}, \ldots, \pi^{n}) = (\pi^i,\pi^{-i})$,  where $\pi^{-i}$ refers to the (joint-) policy of all players except player $i$. A joint action is denoted by $\joa = (a^1, \ldots, a^n) \in \gA.$ Therefore, the probability of a joint strategy is given by $\alpi(\joa \mid s):= \prod_{j=1}^n \pi^{j}(a^j \mid s)$. $\Pi^{i}$ denotes the set of all policies for agent $i$. The discounted probability of visiting a state-(joint-)action pair, given that the starting state is drawn from $\rho$, is defined as $\visitmu = \sum_{t=0}^{\infty} \gamma^t \mathbb{P}^t(s,\joa, \rho)$, where $\mathbb{P}^t$ denotes the probability of visiting the joint state action pair when drawing the initial state from $\rho$ and following the joint policy $\jopolicy$ for $t$ steps. If the starting distribution is deterministic for a state $s$, we omit the dependence on $\rho$ and simply write $\visit$.

\noindent\textbf{Reward function.} The reward function for an agent, $R^{i}: \gS \times \gA \to [-\maxRi, \maxRi]$, takes a state and a joint action as inputs, mapping them to a bounded real number. The joint reward is represented as $R = (R^1, \ldots, R^n)$. The uniform reward bound across all agents is defined by $\maxR := \max_{i \in [n]} \maxRi$. A Markov game without reward $\gG$ combined with a joint reward results in a standard Markov game denoted as $\gwr$.  

\noindent\textbf{Value functions and equilibrium concepts.} For a Markov game $\gwr$ with a policy $\alpi$ we define the \emph{Q-function} and the \emph{value-function} of an agent $i$ for a given state and action as  $\Qval(S, \textbf{A}) = \Exp^{\jopolicy}[\sum_{t=0}^{\infty} \gamma ^t R^i(S_t,\textbf{A}_t) \mid S_t=S, \textbf{A}_t = \textbf{A}]$ and $\Vval(s) = \sum_{\joa} \alpi(\joa \mid s) \Qval(s, \joa)$. If it is clear from the context what the underlying Markov game is, we omit the subscript. 

\noindent\textbf{Nash Equilibrium.} Various types of equilibrium solutions have been proposed to model optimal strategies in Markov games. In this work, we focus on the NE similarly to previous works on MAIRL \citep{goktas2024efficient, DBLP:journals/corr/abs-1807-09936}.A strategy profile is a Nash equilibrium if no agent can improve their expected return by unilaterally deviating from their strategy, assuming the strategies of the other agents remain unchanged. Formally, a policy $\Nash$ is a (perfect) Nash equilibrium strategy, if for every state $s$ and every agent $i$
$V^{i, \Nash}_{\mathcal{G} \cup R}(s) \geq V^{i, (\pi^i, \onash)}_{\mathcal{G} \cup R}(s) \quad \forall \pi^{i} \in \Pi^i.$ 
To simplify the notation used in the remainder of this work, we will denote this as $V^i(\Nash) \geq V^i(\pi^{i},\onash)$. The set of Nash equilibria, depending on the underlying reward, we will denote as $\Pi^{\mathrm{Nash}}(R).$

\noindent\textbf{Entropy Regularized Markov games.} For better readability, let us consider the case $\jopolicy=(\mu, \nu)$ and $\gA^2 := \gB$. Then, the value function of player 1 in a $\lambda$ entropy regularized Markov game is defined as $V^{1,(\mu, \nu)}_{\lambda}(s) = \Exp^{(\mu, \nu)}[\sum_{t=0}^{\infty} \gamma^t \left(R^1(S_t,A_t,B_t) - \lambda \log(\mu(A_t \mid S_t))\right) \mid S_0 = s].$ Additionally, we have $V^{1,(\mu, \nu)}_{\lambda}(\rho) = \Exp_{S_0 \sim \rho}[V^{1,(\mu, \nu)}_{\lambda}(S_0)].$ The value function for player 2 is defined analogously. Additionally, the Q-function is defined as $Q^{1,(\mu, \nu)}_{\lambda}(s,a,b) = R^1(s,a,b) + \sum_{s'} P(s' \mid s,a,b) V^{1,(\mu, \nu)}_{\lambda}(s').$

\noindent\textbf{Multi-agent Inverse Reinforcement Learning.} Given a Markov game without a reward function $\gG$ and a Nash equilibrium expert, the MAIRL problem is defined as the tuple $(\gG, \Nash).$ If we only have access to an estimated version of $(\gG, \Nash),$ we call it recovered MAIRL problem and denote it as $(\hat{\gG}, \hat{\boldsymbol{\pi}}^{\mathrm{Nash}})$ and analogously for entropy equilibria by replacing $\hat{\boldsymbol{\pi}}^{\mathrm{Nash}}$.
\vspace{-3mm}
\section{Feasible reward set in multi-agent systems}
\vspace{-3mm}
\label{sec:chapter3}
This section addresses research question \hyperref[question_1]{(Q1)}. First, we will define MAIRL for a single NE observation. Then, we will show, that this can be ill-suited for multi-agent systems due to the multiplicity of equilibria. This motivates to consider entropy regularized Markov games to guarantee a unique equilibrium.

\subsection{Nash equilibrium observations}
In this section, we begin by revisiting single-agent Inverse Reinforcement Learning. The feasible reward set in single-agent IRL, first defined by \citet{pmlr-v139-metelli21a} and later refined for different settings as e.g. the offline case in~\citep{zhao2024inverse, metelli2023theoretical, lazzati2024offlineinverserlnew}, lacks a multi-agent counterpart for observed expert equilibria. We thus translate the single-agent definition, formalizing feasible rewards as in prior works~\citep{DBLP:journals/corr/LinBC14,DBLP:journals/corr/abs-1806-09795}.
\begin{definition}
    \label{def:feasible_reward_set}
     Let a MAIRL problem $(\gG, \Nash)$ with a single (observed) Nash equilibrium policy be given. Then, the feasible reward set for general-sum Markov games is given by
\begin{align*}
        \gR_{(\gG,\Nash)} = \left\{R \in \gR \mid \forall i \in [n], \forall s \in \gS, \forall \pi_i \in \Pi^i:V^{i, (\pi_i^{\mathrm{Nash}},\pi_{-i}^{\mathrm{Nash}}) }_{\gG \cup R}(s) \geq V^{i, (\pi_i,\pi_{-i}^{\mathrm{Nash}}) }_{\gG \cup R}(s)\right\}.
\end{align*}
\end{definition}
Here, $\Nash \in \Pi^{\mathrm{Nash}}(R)$ is any Nash equilibrium, analogous to an optimal policy in single-agent IRL. A key difference in MAIRL is that different NEs can yield varying values, and we impose no restriction on the observed NE (pure or mixed) from $\Pi^{\mathrm{Nash}}(R)$. This is the first fundamental difference between IRL and MAIRL.

Since varying NE values make single-agent value-based gap objectives~\citep{pmlr-v139-metelli21a,metelli2023theoretical,zhao2024inverse,lazzati2024offlineinverserlnew} unsuitable for MAIRL, we adapt the Nash Gap, recently used in multi-agent imitation learning~\citep{ramponi2023imitation,tang2024multiagentimitationlearningvalue}, as our objective.
\looseness=-1
\begin{definition}[Nash Imitation Gap for MAIRL]
\label{def:Nash_gap}
    Let $\gwr$ be the underlying $n$-person general-sum Markov game. Furthermore, let $\boldsymbol{\hat{\pi}}$ be the policy recovered from the corresponding MAIRL problem. Then we define the Nash Imitation Gap of $\boldsymbol{\hat{\pi}}$ as
    \[\Nashgap := \max_{i \in [n]} \max_{\pi^i \in \Pi^i} V_{\gG\cup R}^{i}(\pi^i,\hat{\pi}^{-i}) - V_{\gG\cup R}^{i}(\boldsymbol{\hat{\pi}}).\]
\end{definition}
The definition possesses the desirable property that it equals $0$ if $\boldsymbol{\hat{\pi}}$ is an NE, and, it is $>0$ if $\boldsymbol{\hat{\pi}}$ is not an NE in the underlying Markov game.

Normally, we cannot assume to know the expert equilibrium nor the transition function. Therefore, to analyze how estimation errors in the transition probability and expert policy affect the recovered feasible reward set, we relate them to our proposed optimality criterion.

\looseness=-1
\begin{definition}[Optimality Criterion]
    \label{optimal}
    Let $\gR:=\gR_{(\gG,\boldsymbol{\pi}^{\mathrm{Nash}})}$ be the exact feasible set and $\hat{\gR}:= \gR_{(\hat{\gG}, \hat{\boldsymbol{\pi}})}$ the recovered feasible set after observing \( N \geq 0 \) samples from the underlying MAIRL problem $\MAIRL$. An algorithm is $ (\varepsilon, \delta, N)$-correct after $ N $ samples if, with probability at least  $1 - \delta$, it holds that:
    \begin{align*}
        & \sup_{R \in \gR} \inf_{\hat{R} \in \hat{\gR}} \sup_{\hat{\jopolicy} \in \Pi^{\mathrm{Nash}}(\hat{R})} \Nashgap\leq \varepsilon\, , \quad\sup_{\hat{R} \in \hat{\gR}} \inf_{R \in \gR} \sup_{\hat{\jopolicy} \in \Pi^{\mathrm{Nash}}(\hat{R})} \Nashgap\leq \varepsilon,
    \end{align*}
    where $\Pi^{\mathrm{Nash}}(\hat{R}) := \{\jopolicy\mid   V_{\hat{\gG} \cup \hat{R}}^{\jopolicy}(s) > V_{\hat{\gG} \cup \hat{R}}^{\tilde{\pi}^i, \pi_{-i}}(s) \, \forall \tilde{\pi}^i \in \Pi^i\,, \forall s\in \gS, \forall i \in [n]\}.$
\end{definition}

The optimality criterion is in the Hausdorff metric style, see \ref{Hausdorff}. The first condition ensures that the recovered feasible set captures a reward function that makes sure that the recovered policy is at most an \( \varepsilon \)-NE in the true Markov game. However, this would support choosing a set that captures all possible reward functions \( \R^{\gS \times \gA}\). Consequently, the second condition ensures that this is not possible by requiring every recovered reward to also have a true reward function that captures the desired behavior. Additionally, note that the optimality criterion depends on $\hat{R}$ as $\hat{\jopolicy}$ is the Nash equilibrium of the recovered Markov game $\hat{\gG} \cup \hat{R}.$

The shortcomings of observing only a single equilibrium are discussed next. For completeness, this framework is further analyzed in \cref{Proofs of Chapter 4}.

\paragraph{Feasible reward set and equilibrium ambiguity.}
In this section, we show that observing only a single equilibrium solution leads to a non-expressive feasible reward set. \citet{EvaluatingStrategic} noted that relying on a single Nash equilibrium for inverse learning introduces inherent limitations. Here, we extend this finding to the feasible reward set, rather than focusing solely on specific reward functions.

\citet{tang2024multiagentimitationlearningvalue} showed that minimizing the regret gap is hard in Imitation Learning problems. However, in this work we do not consider the setting of mimicking the expert policy instead we examine the feasible reward set. In general, (MA)IRL can be more powerful as it allows to transfer the reward function to new environments. In \cref{Appendix:Experiments} we provide an experiment in a simple Grid World game that emphasizes this. Unfortunately, the next theorem shows, that learning under a recovered reward function from the feasible set stemming from a MAIRL problem with a single equilibrium observation can lead to a NE that has a Nash Gap of the order $(1-\gamma)^{-1}$ in the original Markov game.

\begin{proposition}
\label{prop:counter_example}
    Let us consider any MAIRL algorithm $\mathrm{Alg}_{\mathrm{MAIRL}}$ that chooses $\hat{R} \in \mathcal{R}_{(\hat{\gG}, \hat{\jopolicy}^{\mathrm{Nash}})}$ that is not a constant reward, i.e. $\hat{R} \neq C$ for $C \in [-R_{\max}, R_{\max}].$ Furthermore, consider a MARL algorithm $\mathrm{Alg}_{\mathrm{MARL}}$ that guarantees learning a policy $\tilde{\jopolicy} \in \Pi^{\mathrm{Nash}}(\hat{R}).$ Then, there exists a Markov game, such that even if $\hat{\jopolicy} \in \Pi_{\mathrm{Nash}}$ and  $\hat{R} \in \gR_{(\gG,\Nash)}$ it holds true that $\mathcal{E}(\tilde{\jopolicy})$ is of order $(1-\gamma)^{-1}.$ 
\end{proposition}

The construction of the underlying general-sum Markov game can be found in \cref{fig:lowerbound}. The idea of the proof is that \cref{def:feasible_reward_set} only ensures that the recovered expert $\hat{\jopolicy}^{\mathrm{Nash}}$ is a NE under $\hat{R},$ an MAIRL algorithm cannot capture a meaningful reward for other equilibria that potentially have different values. Therefore, the constraints on the rewards inside the feasible reward set only gives a relation for a fixed strategy of the opponent. Consider a simple example with two players where player 2 plays action $b$ with probability one. Then, for player one the constraints only tell us something about $R^1(s,a^{\mathrm{Nash}}, b) \geq R^1(s,a, b) \, \forall a \in \gA^1 $, but nothing on rewards $R^1(s, a, b')\, \forall (a,b') \in \gA^1 \times \gA^2.$ This allows that the recovered reward functions allow for new equilibria, i.e. $\Pi^{\mathrm{Nash}}(\hat{R}) \supset \Pi^{\mathrm{Nash}}(R)$ that can be exploited in the original Markov game. We empirically investigate this for the considered Game in \cref{prop:counter_example}. Summarized, the reward set captures too many reward functions and this flexibility in reward specification allows for undesirable scenarios, such as \textbf{changing the nature of the game}. This means the game can transform the set of all Nash equilibria e.g. from a coordination game into an anti-coordination variant. For further intuition we provided additional examples in \cref{Fig:antiexample}.

\begin{figure}[h!]
    \centering

\begin{tikzpicture}[>=stealth',shorten >=1pt,auto,node distance=2.4cm]
  \node[state] (s0)   {$s_0$};
  \node[state] (s1) [above right of=s0] {$s_1$};
  \node[state] (s2) [right of=s0] {$s_2$};
  \node[state] (s3) [below right of=s0] {$s_3$};
  \path[->] 
    (s0) edge node {$a_1a_1$} (s1)
    (s0) edge  node {$a_2a_2$} (s3)
    (s0) edge node {$a_1a_2, a_2a_1$} (s2)
    
    (s1) edge [loop right] node {all} (s1)
    (s2) edge [loop right] node {all} (s2)
    (s3) edge [loop right] node {all} (s3);
\end{tikzpicture}
\includegraphics[scale=0.9]{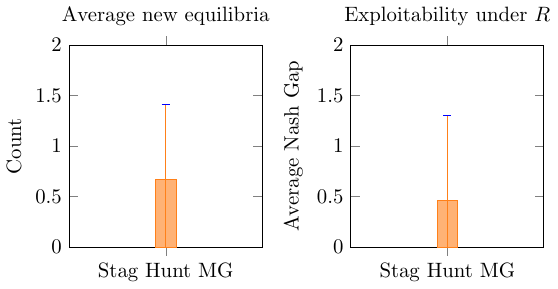}
    \caption{Failure of single equilibrium observation.}
    \label{fig:lowerbound}
\end{figure}

To address this, we propose the following definition of the feasible reward set.

\begin{definition}
\label{def:all_equilibria}
   Let \(\Pi^{\mathrm{Nash}}(R)\) denote the set of Nash equilibrium in the underlying Markov game \(\mathcal{G} \cup R\). Then, the feasible reward set for general-sum Markov games is defined as:
\resizebox{\textwidth}{!}{%
$\displaystyle
\gR_{(\gG,\Pi^{\mathrm{Nash}})} = \left\{ R \in \gR \mid 
\forall\, \Nash \in \Pi^{\mathrm{Nash}},\, i \in [n],\, s \in \gS,\, \pi_i \in \Pi^i:\,
V^{i, (\pi^{\mathrm{Nash}}_i, \pi_{-i}^{\mathrm{Nash}}) }_{\gG \cup R}(s) 
> V^{i, (\pi_i, \pi_{-i}^{\mathrm{Nash}}) }_{\gG \cup R}(s)
\right\}.
$%
}
\end{definition}

Note that, the subscript is now on the set of equilibria instead of a single one. This definition ensures that the feasible reward set aligns with all equilibrium solutions of the original game, rather than a single observed equilibrium. While this improves the interpretability of the feasible reward set, calculating even one Nash equilibrium is computationally intractable in general-sum Markov games. Hence, this definition does not fully resolve the tractability issue in MAIRL. 

The multiplicity of equilibria and the absence of a unique value pose a significant challenge in Inverse Reinforcement Learning, in particular which equilibria should be chosen, commonly referred to as the \textit{Equilibrium Selection problem}. This ambiguity complicates reward inference, as different equilibria can lead to inconsistent or unreliable outcomes.

\citet{leonardos2021explorationexploitationmultiagentcompetitionconvergence} address this by proposing game structure modifications, such as \textit{regularized Markov games}. Techniques like entropy regularization refine the equilibrium set to a unique one, eliminating the selection problem. We will transfer this concept to MAIRL and investigate its implications in subsequent chapters.
\looseness=-1

\subsection{Feasible rewards for entropy regularized games}
\label{section:regularizedMAIRL}
This section examines \textit{Entropy-Regularized Markov games} and their unique \textit{Quantal Response Equilibrium (QRE)}, which reflects bounded rationality. We show that QREs help avoid problematic reward configurations (cf. \cref{fig:lowerbound}) and yield recovered rewards more similar to the true game rewards. We then formally define the feasible reward set for QRE experts and provide a sample complexity analysis for MAIRL to achieve the entropy version of \cref{optimal}. Technically, this is not the Nash Gap anymore, instead it only measures the exploitability of a policy in the entropy regularized game.
\vspace{-3mm}
\noindent\paragraph{Avoiding the ambiguity with QRE expert.}
We begin by demonstrating that entropy regularization, yielding a unique fully mixed QRE, provides sufficient structure for reward recovery to avoid the degenerate cases highlighted in \cref{fig:lowerbound}. Recalling our empirical validations (\cref{fig:lowerbound}), single, potentially pure, equilibrium observations permitted exploitable new equilibria in the original game. We now present analogous experiments using QRE observations (see \cref{fig:qre_plots}).
\looseness=-1
\begin{figure}[h!]
    \centering
\includegraphics[scale=0.9]{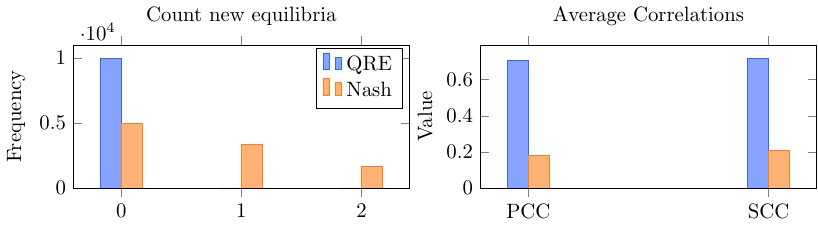}
    \caption{Recovered rewards under QRE equilibrium observations.}
    \label{fig:qre_plots}
\end{figure}

We can observe that in simple games the QRE ensures that no new pure equilibria arise and additionally the correlation of the recovered reward function and the true reward function measured by the Pearson Correlation Coefficient (PCC) and Spearman Correlation Coefficient (SCC) are significantly higher. The reason for this is that the QRE enforces the equilibrium observation to explore the environment better and gets rid of the equilibrium selection problem. This can be further motivated by assumptions needed also in the Multi-agent Imitation Learning setting, where~\citet{tang2024multiagentimitationlearningvalue} showed that under coverage assumption equilibria can successfully be recovered. This is automatically fulfilled if the observed expert is a QRE expert.
\vspace{-3mm}
\noindent\paragraph{Characterization of feasible rewards.}

In the single-agent IRL setting, an explicit characterization of the reward function in entropy-regularized Markov Decision Processes (MDPs) was first derived by~\cite{cao2021identifiabilityinversereinforcementlearning}. The authors introduced entropy regularization to tackle the ill-posedness of the IRL problem. In particular, the authors established conditions under which the reward function is identifiable up to a constant. These conditions were subsequently simplified by~\cite{identify}, providing further insights into the structure of the reward function. This naturally motivates our second research question \hyperref[question_2]{(Q2)}, which we will answer in \cref{section:Identifiability}. In this section we will first focus on the feasible reward set.

We begin our analysis in a manner similar to the single-agent case. For better readability we from now on assume to only have two players, with policies $\mu, \nu$. We can give an explicit characterization of the optimal policy for the agents. Next, we give this definition for player 1, it analogously holds also for player 2 
\begin{equation}
\label{eq:opt_policy}
\mu^*(a \mid s) = \frac{\exp\left(\frac{1}{\lambda} \sum_{b' \in \gB}\nu^*(b' \mid s)Q_\lambda^{*,1}(s,a,b')\right)}{\sum_{a' \in \gA}\exp\left(\frac{1}{\lambda} \sum_{b' \in \gB}\nu^*(b' \mid s)Q_\lambda^{*,1}(s,a',b')\right)},
\end{equation}
where $\mu^*$ denotes the optimal policy, i.e. the quantal response equilibrium policy of player 1 and $Q^{*,1}$ the corresponding Q-function for player 1.
Several remarks are in order to clarify this equation. First, the optimal strategy depends not only on the $Q$-function but also on the strategy of the opposing agent. This can be interpreted as fixing one agent and considering the induced MDP (see, for example, Definition 4.1 in~\citet{EvaluatingStrategic}). 

From now on we will state everything from the perspective of player one. The results for player 2 follow analogously. Therefore, we will also omit the index of the reward and value functions. Next, using the definition of the Q-function and the value function, we can rewrite \eqref{eq:opt_policy} to get an explicit formulation of the \textit{average} reward that agent 1 receives for playing a specific action $a \in \gA$:
\begin{equation}
\label{eq:average reward}
\sum_{b' \in \gB} \nu^*(b' \mid s) R(s, a, b') = \lambda \log\left(\mu^*(a \mid s)\right) +V^*_{\lambda}(s) - \gamma \sum_{s'}\sum_{b' \in \gB} \nu^*(b' \mid s) P(s' \mid s, a, b')V^*_{\lambda}(s').   
\end{equation}

Regarding the feasible reward set, our focus is to find an explicit reward formulation to characterize the reward functions in the feasible reward set. Rewriting equation \eqref{eq:average reward} in terms of a specific reward, we get the following characterization.
\begin{lemma}
    \label{cor:explicit}
    Let $(\mu^*,\nu^*)$ be two equilibrium policies for the 2 Person $\lambda$ Entropy-Regularized Markov game $\gG$. Then for the MAIRL problem $(\gG, (\mu^*,\nu^*))$ a reward $R$ is feasible if and only if there exists a function $V \in \R^{\gS}$ and $ \vert \gB \vert -1$ functions $R: \R^{\gS \times \gA \times \gB}$, such that for all $(s,a,b) \in \gS \times \gA \times \gB$
\[
\resizebox{\textwidth}{!}{$
R(s,a, b) 
= \frac{1}{\nu^*(b \mid s)
}\left(\lambda \log(\mu^*(a \mid s)) + V(s)  - \gamma\sum_{s'} \sum_{b' \in \gB} \nu^*(b' \mid s) P(s' \mid s, a, b') V(s') - \sum_{b'\neq b} \nu^*(b' \mid s) R(s, a, b')\right).
$}
\]
\end{lemma}

  As in practice, we do not have access to $P, \nu^*$ and $\mu^*$, our goal is to analyze the impact of estimating the transition probability and the expert's policy, and how this affects the existence of a recovered feasible reward. Therefore, we now want to analyze how a recovered MAIRL problem $(\hat{\gG}, (\hat{\mu^*}, \hat{\nu^*}))$ translates to the original MAIRL problem. Care is required for this analysis, as we must consider the estimation of the expert itself, the induced transition model, which incorporates the estimation of the other expert's policy and their deviations for alternative actions.

\begin{theorem}[Error propagation]
    \label{error propagation}
    Let the MAIRL problem be given by $(\gG, (\mu^*, \nu^*))$ and another MAIRL problem by $(\hat{\gG}, (\hat{\mu}^*, \hat{\nu}^*))$. Then, we have that

    \begin{align*}
        |R(s,a,b) - \hat{R}(s,a,b)| &\leq \frac{1}{\nu^*(b \mid s)\hat{\nu}^*(b \mid s)}\bigg( \lambda |\log\mu^*(a \mid s) - \log\hat{\mu}^*(a \mid s)| \\
        & \quad + \gamma \max_{b} |\sum_{s'} V(s')P(s' \mid s,a,b) - \hat{P}(s' \mid s,a,b)| + \maxR \mathrm{TV}(\nu, \hat{\nu} )\bigg).
    \end{align*}
\end{theorem}

The introduced theorem highlights the additional complexity of the problem. Specifically, it reveals two key challenges that arise in error propagation for multi-agent IRL. First, the maximum of the joint transition probabilities plays a critical role, amplifying the sensitivity of the system to inaccuracies in transition estimation. Second, any deviation in estimating the other expert's policy, quantified by the total variation distance, directly contributes to errors in the recovered reward. 
\vspace{-3mm}
\noindent\paragraph{Recovering feasible rewards.}
The previous section revealed, that also in the Multi-agent case, the explicit feasible reward can be decomposed into parts that depend on the policy of both agents and the transition model. Therefore, we will now analyze the amount of samples required to obtain a meaningful reward function. 
Let us first introduce the assumption, also common in single-agent IRL, that the lowest probability of an action taken from the experts is bounded away from zero by some constant (see e.g. Assumption D.1. in \cite{metelli2023theoretical}).
\begin{assumption}
\label{minAssumption}
    Let $\mu^*, \nu^*$ be the QRE equilibrium expert policies. Then we assume that 
    \[\min_{a\in \gA, b \in \gB}(\mu^*(a\mid s), \nu^*(b \mid s)) \geq \Delta_{\min} \, \forall s\in \gS.\]
\end{assumption}
For both estimation tasks, the expert policies and the transition probability, we employ empirical estimators. For each iteration \( k \in [K] \), let \( n_k(s, a,b, s') = \sum_{t=1}^{k} \mathbf{1}_{(s_t, a_t,b_t, s'_t) = (s,a,b,s')} \) denote the count of visits to the triplet \( (s, a,b, s') \in \gS \times (\gA \times \gB)\times \gS \), and let \( n_k(s, a,b) = \sum_{s' \in \gS} n_k(s, a,b, s') \) denote the count of visits to the state-action pair \( (s, a) \). Additionally, we introduce \( n_{k}(s, a) = \sum_{t=1}^{k} \mathbf{1}_{(s_t, a_t) = (s,a)} \) and \( n_{k}(s, b) = \sum_{t=1}^{k} \mathbf{1}_{(s_t, b_t) = (s,b)} \) as the count of times action \( a \) and respectively  \( b \) was sampled in state \( s \in \gS \) for each agent \( i \), and \( n_k(s) = \sum_{a \in \gA} n_{k}(s, a) \) as the count of visits to state \( s \) for any agent.

It is important to note the distinction here: the count of actions must be done separately for each agent, whereas the count of state visits needs to be done for both of the agents.

In the following theorem we will assume to have access to a generative model, an assumption that is common in initial theoretical works on IRL \citep{pmlr-v139-metelli21a, lindner2023active, metelli2023theoretical}. We will discuss potential directions that loosen this assumption in \cref{sec:conclusion}.
\looseness=-1
\begin{theorem}
\label{thm:sample_complexity_qre}
    Let \cref{minAssumption} hold true. Then, allocating the samples uniformly over $\gS \times \gA \times \gB$ and using the empirical estimators introduced in \cref{estprop} and \cref{estpol}, we can stop the sampling procedure with a probability of at least $1-\delta$ after iteration $\tau$ and satisfy the optimality criterion \cref{def:optimal_qre}, where the sample complexity is of order $\tilde{\mathcal{O}} \left(\frac{\gamma^2\maxR^2 \vert \gS  \vert\vert \gA \vert \vert \gB \vert}{(1-\gamma)^4 \varepsilon^2 \Delta^4_{\min}} \right).$
\end{theorem}

Some remarks are in order for this complexity bound. We observe that the sample complexity bound depends on the product of the action space of both players. Translating this to the n-player setting would result in an exponential dependency in the number of players. Although this might seem unfavorable, it is generally known that learning an NE in the worst case has an exponential bound. \citet{DBLP:journals/corr/abs-2007-07461} show that even in model-based two player zero-sum games with access to a generative model, where the reward knowledge can not be used during learning the sample complexity depends on $\vert\gA\vert \vert\gB\vert.$ Therefore, the derived bound for the MAIRL setting aligns with the bounds derived for learning NE in the MARL setting. Additionally, we can see that the sample complexity bound is related to estimating the expert policies. This requires to estimate the log probabilities as well as the inverse probabilities of specific action taken by one player. To do so, we need the assumptions that the probabilities are bounded away from zero (\cref{minAssumption}). 

\section{Identifiability in multi-agent games?}
\label{section:Identifiability}
At the beginning of \cref{section:regularizedMAIRL}, we noted that entropy regularization has been introduced in the single-agent setting to derive conditions to identify the reward (up to constants). Unlike the single-agent case, multi-agent systems introduce additional challenges due to the interplay between agents' strategies and the underlying reward structure. These challenges make identifiability like in single-agent settings not possible, unless further assumptions on the underlying Markov game are posed. We split this section into two parts. First, we consider average reward characterization. Then, we introduce linear separable Markov games and investigate identification in this setting.
\vspace{-3mm}
\noindent\paragraph{Average reward identification.}

First, let us revisit the derivations from the last section. In particular, note that the left-hand side of \cref{eq:average reward} shows, that agent 1's average reward depends on agent 2's policy $\nu(b' \mid s)$ which averages over agent 2's action. Additionally, note that the reward function is in the multi-agent setting, has dimensionality $|\gS||\gA||\gB|,$ while \cref{eq:opt_policy} only gives condition for $|\gS||\gA|$ and $|\gS||\gB|$ respectively. This shows immediately that the resulting system of equations is under-determined.

However, the left hand side can be interpreted in terms of the induced MDP. In particular we derive an explicit reward function for the average reward $R^{\nu}(s, a) = \sum_{b' \in \gB} \nu(b' \mid s) R(s, a, b').$ Next, let us additionally define the induced transition function, keeping the strategy of agent 2 fixed. Then, we have
$P^{\nu}(\cdot \mid s,a) := \sum_{b' \in \gB} \nu(b' \mid s) P (s' \mid s,a,b').$ Thus, agent 1's decision problem, when facing a fixed opponent policy $\nu$, is equivalent to a single-agent MDP with the average reward function $R^{\nu}$ and the transition function $P^{\nu}$. 

These findings indicate that single-agent IRL theory \citep{cao2021identifiabilityinversereinforcementlearning,identify} can be applied. However, note that for identifiability in single-agent settings at least two environments with different transition dynamics and discount factors that induce experts with the same reward functions are necessary. In multi-agent systems, one obtains a different transition model by varying the policy of the opponent and observing the best responds to this change of dynamics. This can be less restrictive than requiring a new environment as in the single-agent case.  Next, we state the result for the average reward case for multi-agents, this is similar to Theorem 3 by \citep{identify}.
\begin{theorem}
\label{thm:identify}
    Let a Markov game be given with two different opponents $\nu_1,\nu_2$ that induce different dynamics $P^{\nu_1}, P^{\nu_2}$ and discount factors $\gamma_1, \gamma_2.$ Suppose that in both Games we observe QRE equilibrium policy pairs $(\mu_1, \nu_1)$ and  a different $\nu_2$ with a best responding policy $\mu_2$ such that they have same average reward functions $R^{\nu_1} = R^{\nu_2}.$ Additionally, define $P^{\nu_i}_{a} \in \R^{\gS \times \gS}$ the induced transition matrix of expert $i \in \{1,2\}$. Then, the \emph{average} reward  player 1 receives can be recovered up to a constant if and only if 
    \begin{equation}
    \label{eq:rank_conditions}
\text{rank} \left(
\begin{array}{cc}
I - \gamma_1 P_{a_1}^{\nu_1} & I - \gamma_2  P_{a_1}^{\nu_2}  \\
\vdots & \vdots \\
I - \gamma_1  P_{a_{|\mathcal{A}|}}^{\nu_1}  & I - \gamma_2  P_{a_{|\mathcal{A}|}}^{\nu_2} 
\end{array}
\right) = 2|\mathcal{S}| - 1.
 \end{equation}
Analogously this holds for player 2.
\end{theorem}

Therefore, for the average case this closely resembles the single-agent case. However, if we want to estimate the underlying problem, which is the more realistic setting, things change. In particular the error in the estimated induced transition $\|P^{\nu} - P^{\tilde{\nu}}\|$ is bounded by terms dependent on the policy estimation error and the underlying transition model error. Using the $L1$ norm we receive for a given $(s,a):\|P^{\nu}(\cdot \mid s,a) - P^{\tilde{\nu}}(\cdot \mid s,a)\| \leq \| \nu(\cdot \mid s) - \hat{\nu}(\cdot \mid s)\|_1 + \max_{b'} \|P(s,a,b') - \hat{P}(\cdot \mid s,a,b')\|_1.$
We can derive the following sample complexity for this.
\begin{theorem}[Sample Complexity for Induced Transitions]
\label{thm:sample_complexity_induced_transitions}
To estimate the induced transition model $P^{\nu}$ for Player 1 such that the maximum $L_1$ error over all $(s,a)$ rows is bounded by $\varepsilon$ with probability at least $1-\delta$, the total number of samples $N_{\text{total}}$ is in the order of $\mathcal{O}\left(\frac{|\mathcal{S}||\mathcal{A}||\mathcal{B}| }{\varepsilon^2}\right).$
\end{theorem}
With this theorem we can recover the same result of the single agent case. In particular, we get for every player that if the estimated transition model satisfies \cref{eq:rank_conditions}, then also the true transition matrices satisfies this condition given that a condition on the second smallest eigenvalue holds true. We state the complete theorem in \cref{Appendix:Identifiability}.
\vspace{-3mm}
\noindent\paragraph{Reward identification in linearly separable Markov games.}
To get an identifiable reward not only in the average sense one needs to disentangle the reward from player 2's strategy. This shows that the multi-agent case is fundamentally harder than then the single-agent case. Intuitively, the reason is that the definition of the NE only ensures optimality against a fixed opponent strategy. Additionally, note that the reward function is a matrix of size $|\gS||\gA||\gB|$ while the optimal policy is only defined on $|\gS||\gA|$ and $|\gS||\gB|$ respectively.

A potential assumption that disentangles the joint dependency of the reward is fulfilled in \textit{linearly separable reward Markov games}, first introduced in the seminal work of~\citet{Parthasarathy1984StochasticGW}. This framework has also been leveraged in more recent studies (e.g.,~\cite{ScalingUpMeanFieldGames}). A reward function is said to be \textit{linearly separable} if it can be decomposed into two independent terms, each depending solely on one player's action $R^1(s, a, b) = R^1_A(s,a) + R^1_B(s,b).$ This formulation immediately mitigates the complexity introduced by the product space dependency, as the reward function now operates over individual action spaces instead. Now, we can rewrite the condition on a reward for a specific state action pair $(s,a)$. We give the formulation directly for two observed environments, meaning that for every $(s,a) \in \gS \times \gA$
\begin{align}
\label{eq:reward_identification}
    R_A(s,a) &= \lambda \log(\mu_1^*(a \mid s)) + V_1(s) - \gamma \sum_{s'}P^{\nu_1^*}(s' \mid, s,a) V_1(s') - \sum_{b \in \gB} \nu_1^*(b \mid s) R_B(s,b) \nonumber\\
    &= \lambda \log(\mu_2^*(a \mid s)) + V_2(s) - \gamma \sum_{s'}P^{\nu_2^*}(s' \mid, s,a) V_2(s') - \sum_{b \in \gB} \nu_2^*(b \mid s) R_B(s,b)
\end{align}
 We can observe the following two cases. If we have two environments for which the reward $R_A(s,a)$ is the same, and player 1 faces the same opponent strategy $\nu^*$, but different transition dynamics $P_1$ and $P_2,$ then we notice that $\sum_{b \in \gB} \nu^*(b \mid s) R_B(s,b)$ cancels out. Instead if we have different opponent policies $\nu^*_1 \neq \nu_2^*,$ then we get a new rank requirement for the resulting system of equation. We give a detailed discussion in \cref{Appendix:Identifiability} and summarize the findings in the following proposition. 

\begin{proposition}[Identifiability with Linearly Separable Rewards]
Let two Markov games with linearly separable rewards be given. Then, the resulting system of equations from \cref{eq:reward_identification} is solvable, i.e. the reward is identifiable up to a constant, if the matrix has rank $2|\gS| - 1$ and $\nu^*_1 = \nu^*_2$ or rank $2|\gS|(|\gB| +1) - 1$ in case $\nu^*_1 \neq \nu^*_2$ and for one action $b_0 \in \gB$ we have $R_B(s,b_0) = 0.$
\end{proposition}

Summarized, we have shown that identifiability is not possible in general in the multi-agent setting for a particular $R(s,a,b).$ Additionally, we have given scenarios and conditions that make identifiability possible.
\section{Conclusion and future work}
\label{sec:conclusion}
We formalized Multi-agent Inverse Reinforcement Learning (MAIRL), highlighting its unique challenges over single-agent IRL, particularly the insufficiency of single equilibrium observations for meaningful reward construction due to equilibrium multiplicity and selection ambiguity. To address this, we focused on regularized Markov games to ensure equilibrium uniqueness. In this setting, we extended single-agent IRL bounds~\cite{pmlr-v139-metelli21a} to analyze error propagation from these estimations to the recovered reward set, resulting in a sample complexity bound for the Uniform Sampling algorithm for Quantal Response Equilibria. Additionally, we addressed the question of reward identifiability in multi-agent systems. This work gives theoretical foundation of MAIRL and opens many potential directions for future work. We outline a few of them next.

 \paragraph{Removing assumption of generative model.} In the provided sample complexity analysis we have assumed having to a generative model and sampled uniformly from this. While this is a common assumption in initial works also in single-agent IRL \citep{pmlr-v139-metelli21a, lindner2023active, metelli2023theoretical}, this can be restrictive in general. Therefore, it could be of interest to explore the offline setting as done in \citep{lazzati2024offlineinverserlnew, zhao2024inverse} for multi-agents. However, this might not be easy as in offline learning for multi-agents different coverage assumptions are needed compared to the single-agent setting even in case of reward knowledge \citep{NEURIPS2022_a57483b3, zhong2022pessimisticminimaxvalueiteration}. Another direction would be to consider the active exploration direction, meaning how to construct policies that actively the environment as done in~\citet{lindner2023active} for the single-agent case.

 \paragraph{Designing algorithms.} Our analysis showed that single Nash equilibrium observations allow for an uninformative feasible reward set. One potential algorithmic implication of this finding could be to design an algorithm that guarantees that the observed NE is the only equilibrium under this reward function. This algorithm could be in a similar spirit as the Max-Gap IRL algorithm for single-agents \citep{10.1145/1143844.1143936}, but now on an equilibrium level instead of a value-based design.

 \paragraph{Observing multiple equilibria.} As pointed out in the discussion around \cref{def:all_equilibria} and in \citet{EvaluatingStrategic}, if one would be able to observe multiple or all equilibria from the set of Nash equilibria of the underlying game, the characterized feasible reward set is more meaningful. As this might be restrictive in practice it could give important theoretical insights.

\bibliographystyle{abbrvnat}
\bibliography{example_paper}
\newpage

\appendix
\section*{Contents of Appendix}
This appendix provides supplementary material to support the main findings of the paper. First, we give an overview of the used notations, including some additional notations needed for the proofs in the appendix. Then, we present the omitted analysis for the feasible reward set under a single equilibrium observation in \cref{Proofs of Chapter 4}. 
We then present the complete proofs for key results for the regularized Markov game setting in \cref{sec:proofs_regularized}. Afterwards, in \cref{Appendix:Identifiability} we give the missing proofs for the identifiability section. Then, we give the details for our presented numerical validations and some additional experiments that show that MAIRL can be superior to BC.
Finally, the appendix compiles a list of technical results, along with their proofs, that are referenced throughout this work. For a better overview we provide a table of contents.
\startcontents[appendixmaterialtoc]

\printcontents[appendixmaterialtoc] 
                 {l}               
                 {1}{              
                   \setcounter{tocdepth}{2} 
                 }
\newpage
\section{Notation and Symbols}
\label{notation}
In this part of the appendix, we include notation used in the main paper and some additional notation used for the proofs in the appendix.

\resizebox{\textwidth}{!}{\begin{tabular}{|l|l| } 
\hline
\textbf{Notation} & \textbf{Description} \\
\hline
$\mathcal{X}$ & Finite set  \\
$\mathbb{R}^{\mathcal{X}}$ & Set of all functions mapping from $\mathcal{X}$ to $\mathbb{R}$  \\
$\Delta^{\mathcal{X}}$ & Set of probability measures over $\mathcal{X}$  \\
$[n]$ & Set $\{1, ..., n\}$\\
$(\mathcal{X}, d) $& (Pre)metric space \\
$\mathcal{H}_d$ & Hausdorff (pre)metric  \\
$\mathcal{G}$ & Markov game without reward function $(n, \mathcal{S}, \mathcal{A}, P, \gamma, \rho)$ \\
$n$ & Number of players  \\
$\mathcal{S}$ & Finite state space \\
$\mathcal{A}^i$ & Action space for player $i$ \\
$\mathcal{A}$ & Joint action space $\mathcal{A}^1 \times ... \times \mathcal{A}^n$  \\
$P(s'|s, a)$ & Transition model (probability of next state $s'$ given state $s$ and joint action $a$) \\
$\gamma $& Discount factor  \\
$\rho $& Initial state distribution \\
$\pi^i$ & Policy (strategy) for player $i: \mathcal{S} \rightarrow \Delta^{\mathcal{A}^i}$ \\
$\boldsymbol{\pi} $& Joint policy $(\pi^1, ..., \pi^n) = (\pi^i, \pi^{-i})$ \\
$\boldsymbol{\pi}(\joa|s)$ & Probability of joint action $a$ under joint policy $\pi$ in state $s$, $\prod_{j=1}^{n} \pi^j(a^j|s)$  \\
$\Pi^i$ & Set of all policies for agent $i$ \\
$\Pi$ & Set of all joint policies \\
$\bar{w}_{s,a}^{\jopolicy, \rho}$ & Discounted probability of visiting state-action pair $(s,a)$ starting from $\rho$ under $\jopolicy$ \\
$R^i(s, a)$ & Reward function for agent $i: \mathcal{S} \times \mathcal{A} \rightarrow [-R_{max}^i, R_{\max}^i]$\\
$R $& Joint reward function $(R^1, ..., R^n)$  \\
$R_{\max}$ & Maximum absolute reward across all agents, $\max_{i \in [n]} R_{max}^i$ \\
$\mathcal{G} \cup R$ & Standard Markov game with reward \\
$Q_{\mathcal{G} \cup R}^{i, \pi}(s, a)$ & Q-function for agent $i$ under policy $\pi$  \\
$V_{\mathcal{G} \cup R}^{i, \pi}(s)$ & Value function for agent $i$ under policy $\pi$  \\
$\Nash$ & Nash Equilibrium policy\\
$\Pi^{Nash}(R)$ & Set of Nash Equilibria for reward $R$ \\
$V_{\lambda}^{i, (\mu, \nu)}(s)$ & Value function for player $i$ in $\lambda_i$-entropy regularized MG \\
$Q_{\lambda}^{i, (\mu, \nu)}(s, a, b)$ & Q-function for player $i$ in $\lambda_i$-entropy regularized MG \\
$(\mathcal{G}, \Nash)$ & MAIRL problem definition \\
$(\hat{\mathcal{G}}, \hat{\jopolicy}^{\mathrm{Nash}}) $& Recovered MAIRL problem  \\
$\mathcal{R}_{(\mathcal{G}, \pi^{Nash})}$ & Feasible reward set for a single observed NE $\pi^{Nash}$\\
$\hat{\pi}$ & Policy recovered from MAIRL problem  \\
$\mathcal{E}(\hat{\pi})$ & Nash Imitation Gap for MAIRL  \\
$\mathcal{R}_{(\mathcal{G}, \Pi^{Nash})}$ & Feasible reward set for the set of all NE $\Pi^{Nash}$ \\
$\mu^*, \nu^*$ & QRE equilibrium policies\\
$R^{\nu}(s, a)$ & Average reward for player 1 when player 2 plays $\nu$: $\sum_{b' \in \mathcal{B}} \nu(b'|s) R(s, a, b')$  \\
$P^{\nu}(s'|s, a)$ & Induced transition for player 1 when player 2 plays $\nu$: $\sum_{b' \in \mathcal{B}} \nu(b'|s) P(s'|s, a, b')$ \\
$R_A^1(s, a) + R_B^1(s, b)$ & Linearly separable reward for player 1  \\
$\Delta_{min}$ & Minimum probability bound for QRE policies  \\
$N_k(s, a, b, s')$ & Count of visits to $(s, a, b, s')$ up to iteration $k$  \\
$N_k(s, a, b) $& Count of visits to $(s, a, b)$ up to iteration $k$ \\
$N_k(s, a) $& Count of player 1 taking action $a$ in state $s$ up to iteration $k$  \\
$N_k(s, b) $& Count of player 2 taking action $b$ in state $s$ up to iteration $k$ \\
$N_k(s) $& Count of visits to state $s$ up to iteration $k$ \\
$\hat{P}_k(s'|s, a, b)$ & Empirical estimate of transition probability at iteration $k$  \\
$\hat{\mu}_k(a|s) $& Empirical estimate of policy $\mu$ at iteration $k$  \\
$\hat{\nu}_k(b|s) $& Empirical estimate of policy $\nu$ at iteration $k$  \\
\hline
\end{tabular}}
\newpage 

In this section, we introduce the additional notation needed for the matrix expression of the Q-function, the value function, and in particular, for an additional implicit condition for the feasible reward function (\cref{n-matrix}) similar to the one derived in \cite{DBLP:journals/corr/abs-1806-09795}.
To achieve this we use a similar notation from \cite{DBLP:journals/corr/abs-1806-09795}, adjusted to this work. First, we introduce for every agent $i \in [n]$ the stacked reward $\boldsymbol{R}^i$. For every state $s \in \gS$ the reward can be seen as a matrix of dimension $\vert \gA^i \vert \times \vert \prod_{j \neq i}^n\vert \gA^j\vert $. Doing this for every state and stacking them, results in a vector $\boldsymbol{R}^i \in \R^{\vert \gS\vert \vert \gA\vert}$, $\vert \gA \vert$ is the dimension of the joint action space. We additionally introduce the operator $\boldsymbol{\pi}$, which can be written as a $\vert \gS\vert \times \vert \gS\vert \vert \gA\vert$ matrix, structured in the following way. First, we need to fix an arbitrary order on the joint action space $[\vert \gA \vert]$ in the same way as already done for stacking the Reward for every agent. Given the order, we have that for $k \in [\vert S\vert]$, the $k$-th row is given by
\[\Phi_1^{\pi}(k), \ldots, \Phi_{\vert \gA \vert}^{\pi}(k),\]
where for $j \in [\vert \gA \vert]$ we have 
\[\Phi_j^{\pi}(k) = \left[\underbrace{0, \ldots, 0}_{k-1}, \prod_{i=1}^n \pi^i(a_j^i \mid k), \underbrace{0, \ldots, 0}_{\vert \gS \vert -k}\right].\]

Therefore, the resulting matrix has in its first $\vert \mathcal{S} \vert$ columns a diagonal matrix of size $\vert \mathcal{S} \vert \times \vert \mathcal{S} \vert$ with the corresponding probabilities of playing the first joint action in all possible states.
\begin{align*}
\begin{pmatrix}
    \prod_{i=1}^n \pi^i(a_1 \mid 1) & 0 & 0 & \cdots & 0 & \cdots \\
    0 & \prod_{i=1}^n \pi^i(a_1 \mid 2) & 0 & \cdots & 0 & \cdots \\
    0 & 0 & \prod_{i=1}^n \pi^i(a_1 \mid 3) & \cdots & 0 & \cdots \\
    \vdots & \vdots & \vdots & \ddots & \vdots & \cdots \\
    0 & 0 & 0 & \cdots & \prod_{i=1}^n \pi^i(a_1 \mid S) & \cdots 
\end{pmatrix}
\end{align*}

 The transition matrix $\boldsymbol{P}$ of a Markov game also depends on the joint actions, making the resulting transition matrix of dimension $\vert \mathcal{S} \vert \vert \mathcal{A} \vert \times \mathcal{S}$. This allows us to write the value function as a column vector of dimension $\mathbb{R}^{\vert \mathcal{S} \vert}$ and the Q-value function as a vector, identically as the reward vector, of dimension $\vert \mathcal{S} \vert \vert \mathcal{A} \vert \times 1$. Therefore, we can write:

\[
\boldsymbol{Q}^{i, \boldsymbol{\pi}} = \boldsymbol{R}^{i} + \gamma \boldsymbol{P} \boldsymbol{V}^{i, \boldsymbol{\pi}}, \quad \boldsymbol{V}^{i, \boldsymbol{\pi}} = \boldsymbol{\pi} \boldsymbol{Q}^{i, \boldsymbol{\pi}}.
\]
\newpage

\section{Related Work}\label{sec:related_work}
This work intersects with several fields of research, particularly \textbf{Inverse Reinforcement Learning}, \textbf{Multi-agent Inverse Reinforcement Learning }, and \textbf{Inverse (Algorithmic) Game Theory}.

\textbf{Theoretical Understanding of IRL.} IRL was first introduced by~\citet{NGRUSSEL}, emphasizing its ill-posed nature. Subsequent work tackled ambiguity via reformulations~\citep{10.1145/1015330.1015430,10.5555/1620270.1620297,10.5555/1625275.1625692,Ratliff2006BoostingSP,NIPS2011_c51ce410}. To avoid the ambiguity in IRL, recent research has addressed to characterize the set of feasible rewards instead of picking a single reward function. Theoretical efforts to characterize the feasible reward set were pioneered by~\citet{pmlr-v139-metelli21a}. The authors provide an explicit reward formulation, that shows that the reward depends on the expert policy and the transition dynamics. As in realistic scenarios it is not common to know the transition model and the expert policy, the authors provide an error propagation analysis on how estimation errors in these quantities transfer to the recovered reward. Additionally, they provide a uniform sampling algorithm with access to a generative model combined with a sample complexity analysis on how many samples are required to find a suitable reward function from the set of feasible rewards, that is also transferable to new environments. In~\citet{lindner2023active} the authors extend this to the finite horizon setting. Additionally, the authors provide the first algorithm that removes the assumption of a generative model and instead create exploration policies to mitigate the reward uncertainty dubbed active inverse reinforcement learning. Additionally, they provide a sample complexity analysis for the most general case and a problem-dependent variant. Further insights on the theoretical insights of IRL have been provided by \citep{metelli2023theoretical}. The authors investigate different metrics for the IRL problem, leading to a more nuanced analysis and the requirement of refined concentration inequalities. Additionally, they provide the first lower bound for IRL, addressing an open question , that IRL is not harder than forward RL. The first offline algorithm for IRL combined with a sample complexity analysis has been provided in \citep{zhao2024inverse}. The authors note limitations of so far introduced metrics in settings without a generative model. Additionally, the authors show that IRL is not harder than standard RL in the offline setting. The offline setting has also been considered in~\citet{lazzati2024offlineinverserlnew}. The authors provide a new formulation of the feasible reward set, more suitable for the offline setting. They introduce two new efficient algorithms designed for the offline setting, overcoming the new introduced challenges as the data coverage cannot be controlled anymore. An investigation how IRL translates to large state spaces has been obtained in~\citet{NEURIPS2024_62a9c802}. The authors provide the negative result, that the feasible reward set cannot be learned efficiently in large state spaces without additional assumptions. Instead of the feasible reward set, they provide a new framework, rewards compatibility and an efficient algorithm for this setting.

While these works analyze distances to value functions and expert policies, applying them to Markov games remains challenging due to the need for equilibrium-based solutions. This implies that the standard objectives for IRL, namely value based gaps, cannot be applied in multi-agent settings. Additionally, we show that MAIRL introduces new challenges due to the multiplicity of equilibria.

Another line of works, considers the case of reward identifiability. In~\citet{cao2021identifiabilityinversereinforcementlearning} the authors give an explicit reward characterization in the setting of regularized MDPs. In particular, the authors note that the value function can be chosen arbitrarily for a given optimal policy and therefore identifiability is not possible by a single expert policy. However, if one considers two MDPS with different transition dynamics and discount factors and the same optimal reward function, then identifiablity is possible if the MDPs are value-distinguishable.
Based on these observations, \citet{identify} derived explicit conditions what value-distinguishable MDPs are. In particular, they rewrote the reward identifiability problem as system of equations and derived rank conditions that need to be fulfilled to identify the rewards up to constants. Additionally, they provide insights for the case with unknown transition functions and transferability to new environments.

In \citep{pmlr-v139-kim21c} the authors study the type of MDPs under which reward identifiability is possible. Considering a deterministic MDP with an entropy regularized objective, the authors provide necessary and sufficient conditions whether and MDP is identifiable. Additionally, building on these findings they provide efficient algorithms to check if an MDP is identifiable.

In this work we address the question of identifiability in the context of multi-agents. We show that it is not possible to identify the reward function up to constants without additional assumptions. Instead it is only possible to obtain an average reward function or one poses additional structural assumptions on the underlying Markov game as e.g. linear separable rewards.

\textbf{Multi-agent Inverse Reinforcement Learning.} The first extension of IRL to multi-agent settings was introduced by~\citet{Natarajan2010MultiAgentIR}, focusing on a centralized controller in an average reward RL framework, but without addressing competitive settings requiring game-theoretic solutions. ~\citet{DBLP:journals/corr/LinBC14} extended this to Zero-Sum Markov games, introducing a Bayesian MAIRL framework based on observed Nash equilibria, later expanded by~\citet{DBLP:journals/corr/abs-1806-09795} to incorporate various solution concepts, though without sample complexity bounds.  
~\citet{yu2019multiagentadversarialinversereinforcement} extended Maximum Entropy IRL to multi-agent settings via the logistic best response equilibrium, focusing on recovering a single reward function rather than analyzing the feasible reward set. More recently,~\citet{goktas2024efficient} explored Inverse Multi-agent Learning with parameter-dependent payoffs, simplifying the problem by assuming access to samples from the reward function.~\citet{EvaluatingStrategic} approached MAIRL by decomposing it into multiple single-agent IRL tasks, applying utility-matching IRL algorithms on the induced MDPs. Additionally, this work is the first that notes that single Nash equilibrium observations can be limited. Instead of considering a single reward function, we formalize that that single equilibrium observations lead to an uninformative feasible reward set. Additionally, their approach does not address equilibrium multiplicity and lacks a sample complexity analysis. 
In a concurrent work, \citet{pmlr-v267-liao25i} address reward-function recovery in two-player zero-sum games with entropy regularization. They first analyze a static (matrix) game under a linear parametrization of the payoff matrix and derive a rank‐condition on the feature kernel under which the reward parameters are uniquely identifiable. They then extend their framework to linear Markov games (entropy-regularized zero-sum Markov decision processes) and propose algorithms with high-probability finite-sample guarantees for recovering the feasible set of reward (and transition) parameters.
Finally,~\citet{tang2024multiagentimitationlearningvalue} highlighted the inadequacy of value-based gaps in Multi-agent Imitation Learning, proposing regret as a more suitable objective when observing Correlated Equilibrium experts. In this work, we consider the Multi-agent Inverse Reinforcement Learning framework and consider the Nash equilibrium as well as the QRE. 

\textbf{Inverse (Algorithmic) Game Theory.} There is significant overlap between \emph{Multi-agent Inverse Reinforcement Learning} and \emph{inverse algorithmic game theory}. Many works in this area apply game-theoretic solution concepts to rationalize the behavior of observed players in specific types of games \citep{10.1007/978-3-540-92182-0_18,5438604}. A related work is by~\citet{Kuleshov2015InverseGT}, who developed polynomial-time algorithms for coarse correlated equilibria in succinct games, where the structure of the game is known and noted that in cases where the game structure is unknown, the problem is NP-hard. Their theorems indicate that without additional assumptions or more specific settings, polynomial-time algorithms cannot be expected for inversely solving Nash equilibria.
 In a more recent work, \citep{NEURIPS2022_cfce8338} introduce bounded rationality, i.e. considering QRE as the observed behavior, in the context of Stackelberg Games. In particular, the authors show that QRE observations are more informative than Nash equilibrium observations. This means that bounded rationality helps to be more robust against an irrational opponent. Additionally, this makes it possible to construct algorithms that have exponential dependencies on neither the number of leader actions nor the number of follower actions.

\section{Omitted analysis for \cref{sec:chapter3}}
\label{Proofs of Chapter 4}The first theorem serves as an extension of the two player version theorem by~\citet{DBLP:journals/corr/abs-1806-09795} (see section 4.6 in ~\citet{DBLP:journals/corr/abs-1806-09795}) to $n$-person games and general Nash equilibria. It makes use of the notation introduced in \cref{notation}.\\

\begin{theorem}
    \label{n-matrix}
    Let $\gwr$ be a $n$-person general-sum Markov game. A policy $\boldsymbol{\pi}$ is an NE strategy if and only if 
    \[(\Nash - \Tilde{\jopolicy}) (I - \gamma \boldsymbol{P}\Nash)^{-1}\boldsymbol{R}^{i} \geq 0.\]
    with the meaning that without $(s,a)$ symbols a matrix notation and $\Tilde{\jopolicy}$ is the policy with $\pi^{-i} = \onash$ and $\pi^{i}$ plays action $a$ with probability 1.
    \begin{proof}
        In the first step of the proof we state the theorem for the case where $n=2$ with the use of the definition of an NE. We only write the condition for agent 1 to understand the structure. For every action $a^1 \in \gA^1$ and every state $s \in \gS$ it must hold true that:
        \[\sum_{a^{2} \in \gA^2} \pi^{2, \mathrm{Nash}}(a^{2} \mid s) R^{1}(s, a^1a^2 ) + \gamma \sum_{a^{2} \in \gA^2} \pi^{2, \mathrm{Nash}}(a^2 \mid s) \sum_{s'} P(s' \mid s,a^1a^2) V^{\Nash}(s) \leq V^{\Nash}(s)\]
        If we now want to generalize this to a $n$-person Markov game, we get that for every player $i \in [n]$, every action $a^{i}$ and every state $s \in \gS$ it must hold true that:
        \begin{align*}
            &\sum_{a^{-i} \in \gA^{-i}} \pi^{-i, \mathrm{Nash}}(a^{-i} \mid s) R^{1}(s, a^ia^{-i} ) \\
            & \quad + \gamma \sum_{a^{-i} \in \gA^{-i}} \pi^{-i, \mathrm{Nash}}(a^{-i} \mid s) \sum_{s'} P(s' \mid s,a^ia^{-i}) V^{\Nash}(s) \leq V^{\Nash}(s)
        \end{align*}
        We can rewrite this equation in terms of the Q-function and get
        \begin{equation}
        \label{Q cond}
        \sum_{a^{-i} \in \gA^{-i}} \pi^{-i}(a^{-i} \mid s) Q^{\Nash}(s, \joa) \leq V^{\Nash}(s).
        \end{equation}
        Now we want to rewrite the equation for all states simultaneously. Therefore we recall the notation introduced in \cref{notation}.
        We have that 
        \[
\boldsymbol{Q}^{i, \boldsymbol{\pi}} = \boldsymbol{R}^{i} + \gamma \boldsymbol{P} \boldsymbol{V}^{i, \boldsymbol{\pi}}, \quad \boldsymbol{V}^{i, \boldsymbol{\pi}} = \boldsymbol{\pi} \boldsymbol{Q}^{i, \boldsymbol{\pi}}.
\]
Rewriting this equation for the Nash Policy $\Nash$ gives us
\[\boldsymbol{Q}^{i, \Nash} = (I- \gamma \boldsymbol{P} \Nash)^{-1}\boldsymbol{R}^{i}.\]
        Plugging in the derived equations in \eqref{Q cond} using matrix notation for all states $s \in \gS$ simultaneously and additionally denote the joint policy, where agent $i$ plays action $a^{i}$ with probability 1 and the other agents execute their Nash strategy $\pi^{-i, \mathrm{Nash}}$ as $\Tilde{\jopolicy}$, we get
        \[(\Tilde{\jopolicy} - \Nash) (I - \gamma \boldsymbol{P}\Nash)^{-1}\boldsymbol{R}^{i} \leq 0.\]
    \end{proof}
\end{theorem}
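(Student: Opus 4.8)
The plan is to reduce the matrix identity to the elementary one-shot deviation characterization of a Nash equilibrium and then reassemble it using the matrix machinery of \cref{notation}. First I would fix an agent $i$ and observe that, holding the other agents at $\onash$, agent $i$ faces a single-agent discounted MDP $\mathcal{M}_i$ whose reward is $r_i(s,a^i) = \sum_{a^{-i}} \onash(a^{-i}\mid s)\, R^i(s, a^i, a^{-i})$ and whose transition kernel is $p_i(s'\mid s, a^i) = \sum_{a^{-i}} \onash(a^{-i}\mid s)\, P(s'\mid s, a^i, a^{-i})$. The defining inequality $V^i(\nash) \geq V^i(\pi^i, \onash)$ for all $\pi^i \in \Pi^i$ and every starting state says precisely that $\nashi$ is an optimal stationary policy in $\mathcal{M}_i$.

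The key step is the equivalence between global optimality in $\mathcal{M}_i$ and the per-state-action inequality. By the Bellman optimality theorem for infinite-horizon discounted MDPs, a stationary policy is optimal if and only if it is greedy with respect to its own $Q$-function, i.e. for every state $s$ and every action $a^i \in \calA^i$,
\[
\sum_{a^{-i} \in \calA^{-i}} \onash(a^{-i}\mid s)\, \optQ(s, a^i, a^{-i}) \leq \optV(s).
\]
The forward direction is immediate from the definition of $\optV(s)$ as the value of an optimal deviation; for the reverse direction I would invoke the one-shot deviation principle (equivalently, the advantage/performance-difference identity), which guarantees that if no single-action deviation at any state is profitable then no full policy deviation is profitable. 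This is the one place where the argument that merely invokes ``the definition of an NE'' must be made rigorous, and it is the main obstacle: one has to justify that it suffices to test deterministic single-action deviations state by state rather than arbitrary, possibly nonstationary, policies $\pi^i$. The discount factor $\gamma < 1$ is what makes the contraction-based optimality argument go through here.

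With the per-action inequality in hand, I would vectorize over $\calS$ using the operators of \cref{notation}. The right-hand side is $(\nash\, \boldsymbol{Q}^{i,\nash})(s) = \optV(s)$, while the left-hand side is exactly $(\Tilde{\jopolicy}\, \boldsymbol{Q}^{i,\nash})(s)$, where $\Tilde{\jopolicy}$ applies agent $i$'s deterministic choice of $a^i$ together with $\onash$ for the remaining agents. Stacking all states and all such deterministic choices collapses the family of scalar inequalities into the single matrix inequality $(\Tilde{\jopolicy} - \nash)\boldsymbol{Q}^{i,\nash} \leq 0$, equivalently $(\nash - \Tilde{\jopolicy})\boldsymbol{Q}^{i,\nash}\geq 0$. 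Finally I would substitute the closed form $\boldsymbol{Q}^{i,\nash} = (I - \gamma \boldsymbol{P}\nash)^{-1}\boldsymbol{R}^{i}$, obtained by solving the coupled system $\boldsymbol{Q}^{i,\nash} = \boldsymbol{R}^i + \gamma \boldsymbol{P}\boldsymbol{V}^{i,\nash}$ and $\boldsymbol{V}^{i,\nash} = \nash\, \boldsymbol{Q}^{i,\nash}$; the inverse exists because $\gamma < 1$ and $\boldsymbol{P}\nash$ is a stochastic operator, so the Neumann series for $(I - \gamma\boldsymbol{P}\nash)^{-1}$ converges. Since each step above is an equivalence, both directions of the ``if and only if'' follow simultaneously, yielding $(\nash - \Tilde{\jopolicy})(I - \gamma\boldsymbol{P}\nash)^{-1}\boldsymbol{R}^i \geq 0$ as claimed.
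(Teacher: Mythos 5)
Your proposal is correct and follows essentially the same route as the paper's proof: reduce the NE condition to the per-state, per-action inequality $\sum_{a^{-i}}\onash(a^{-i}\mid s)\,\optQ(s,a^i,a^{-i})\le \optV(s)$, vectorize over states and deterministic deviations $\Tilde{\jopolicy}$, and substitute the closed form $\boldsymbol{Q}^{i,\nash}=(I-\gamma\boldsymbol{P}\nash)^{-1}\boldsymbol{R}^i$. In fact you are more careful than the paper on the two points it leaves implicit, namely justifying the reverse direction via the induced single-agent MDP and the one-shot deviation principle, and justifying invertibility of $I-\gamma\boldsymbol{P}\nash$ via the Neumann series.
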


The next lemma restates the condition by directly using the expectation of the advantage function with respect to the policy. \\

\begin{lemma}[Feasible Reward Set Implicit]
\label{lemma:implicit_optimal}
    A reward function $R = (R^1, \ldots, R^n)$ is feasible if and only if for a Nash policy $\Nash$, for every agent $i \in [n]$ and all $(s, a^i) \in \gS \times \gA^i$, it holds true that:
    \begin{align*}
    & \sum_{a^{-i} \in \mathcal{A}^{-i}} \onash(a^{-i} \mid s) \optadv(s, a^i, a^{-i}) = 0, \text{if } \Nashi(a^i \mid s) > 0, a^{-i} \in \supp(\onash(\cdot \mid s)). \\
    & \sum_{a^{-i} \in \mathcal{A}^{-i}} \onash(a^{-i} \mid s) \optadv(s, a^i, a^{-i}) \leq 0,  \text{if } \Nashi(a^i \mid s) = 0, a^{-i} \in \supp(\onash(\cdot \mid s)).
\end{align*}

\end{lemma}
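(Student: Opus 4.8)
The plan is to prove the equivalence between the implicit feasibility condition stated in terms of the switched-policy matrix condition of \cref{n-matrix} and the advantage-based condition of this lemma, and then to show the latter splits into the two cases depending on whether $\nashi(a^i \mid s) > 0$ or $= 0$. The starting point is the matrix inequality $(\Tilde{\jopolicy} - \nash)(I - \gamma \boldsymbol{P}\nash)^{-1}\boldsymbol{R}^i \leq 0$ established in \cref{n-matrix}, which by construction reads, componentwise for each state $s$ and each action $a^i$ played with probability $1$ by agent $i$, as $\sum_{a^{-i}} \onash(a^{-i} \mid s) \optQ(s, a^i, a^{-i}) \leq \optV(s)$. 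First I would rewrite this using $\optV(s) = \sum_{a^i} \nashi(a^i \mid s)\sum_{a^{-i}}\onash(a^{-i}\mid s)\optQ(s,\joa)$ so that the difference $\sum_{a^{-i}} \onash(a^{-i}\mid s)\optQ(s,a^i,a^{-i}) - \optV(s)$ is exactly $\sum_{a^{-i}}\onash(a^{-i}\mid s)\optadv(s,a^i,a^{-i})$, which is the quantity $\boldsymbol{A}$ appearing in the lemma.

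The forward direction then follows by reading off the inequality $\boldsymbol{A} \leq 0$ for every $(s,a^i)$, restricted to $a^{-i} \in \supp(\onash(\cdot\mid s))$ since terms outside the support vanish in the sum. To sharpen this to an equality when $\nashi(a^i \mid s) > 0$, the key observation is the standard averaging argument: the value $\optV(s)$ is itself the $\nashi(\cdot\mid s)$-weighted average of the quantities $\sum_{a^{-i}}\onash(a^{-i}\mid s)\optQ(s,a^i,a^{-i})$ over $a^i$. Hence $\sum_{a^i}\nashi(a^i\mid s)\,\boldsymbol{A} = 0$. Since every summand $\boldsymbol{A}$ is nonpositive (by the inequality just derived) and the weights $\nashi(a^i\mid s)$ are nonnegative, a nonnegative combination of nonpositive terms can vanish only if each term carrying positive weight is itself zero. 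Therefore $\boldsymbol{A} = 0$ whenever $\nashi(a^i\mid s) > 0$, giving the first case, while the second case $\boldsymbol{A} \leq 0$ for $\nashi(a^i\mid s) = 0$ is just the raw inequality.

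The converse direction reverses this reasoning: given the two case conditions, I would show they reconstitute the matrix inequality of \cref{n-matrix}, which by that theorem is equivalent to $\nash$ being an NE and hence to $R$ being feasible. Concretely, for any fixed $a^i$ the condition yields $\sum_{a^{-i}}\onash(a^{-i}\mid s)\optadv(s,a^i,a^{-i}) \leq 0$ in both cases, which rearranges back to $\sum_{a^{-i}}\onash(a^{-i}\mid s)\optQ(s,\joa) \leq \optV(s)$; aggregating over all states and all deterministic choices $a^i$ recovers the componentwise matrix inequality. I expect the main obstacle to be bookkeeping rather than conceptual: one must handle carefully that the conditions are only asserted for $a^{-i} \in \supp(\onash(\cdot\mid s))$, verifying that terms with $\onash(a^{-i}\mid s) = 0$ contribute nothing to any of the sums so that restricting to the support loses no information, and one must make explicit the averaging identity $\optV(s) = \sum_{a^i}\nashi(a^i\mid s)\sum_{a^{-i}}\onash(a^{-i}\mid s)\optQ(s,\joa)$ that drives the strengthening from inequality to equality on the support of $\nashi$.
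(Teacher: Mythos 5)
Your proposal is correct and follows essentially the same route as the paper's proof: both reduce feasibility to the componentwise inequality $\sum_{a^{-i}}\onash(a^{-i}\mid s)\,Q^{i,\nash}(s,a^i,a^{-i})\leq V^{i,\nash}(s)$ for every unilateral deviation and then use the Nash indifference property on the support of $\nashi$ to upgrade the inequality to an equality there. Your write-up is in fact more careful than the paper's (which states the support-indifference fact without the averaging argument and contains a typo in the displayed inequality), but the underlying argument is the same.
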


\begin{proof}
    As we know that $a^{-i} \in \supp(\onash(\cdot \mid s))$ for both cases, we get for all agents $i \in [n]$ and all actions $a^{i, Nash} \in \gA^i$ that fulfill $\pi^{i, \mathrm{Nash}}(a^{i, Nash} \mid s) >0$, that $\sum_{a^{-i} \in \gA^{-i}} \onash(a^{-i} \mid s) Q^{i, \Nash}(s, a^{i,\mathrm{Nash}} a^{-i}) > \sum_{a^{-i} \in \gA^{-i}} \onash(a^{-i} \mid s)Q^{i, \Nash}(s, a^{i} a^{-i})$. Additionally, we have that for all $a^{i, \mathrm{Nash}}$ with $\pi^{i, \mathrm{Nash}}(a^{i, \mathrm{Nash}} \mid s) > 0$ that $\sum_{a^{-i} \in \gA^{-i}} \onash(a^{-i} \mid s) Q^{i, \Nash}(s, a^{i, \mathrm{Nash}} a^{-i}) = V^{i, \Nash}(s)$.
\end{proof}
In the following we will constrain to the case of pure NE, therefore it holds true, that for some $a^{-i} \in \gA^{-i},$ we have $\pi(a^{-i} \mid s) = 1 \,\forall s \in \gS$.

\begin{lemma}
    \label{expression Q}
    Let $i \in [n]$ be an arbitrary agent. Then the $Q$-function of player $i$ satisfies the optimality conditions of \cref{lemma:implicit_optimal} for a pure Nash equilibrium if and only if for every $(s,\joa) \in \gS \times \gA$ there exists a function $A^{i} \in \R^{\gS \times \gA}_{\geq 0}$ and $V^{i} \in \R^{\gS}$ such that:
    \[\optQ(s, \joa) = -A^{i}(s, \joa) \indi \oind + V^{i}(s)\]
\end{lemma}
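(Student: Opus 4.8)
The plan is to prove the biconditional characterizing when the Nash $Q$-function admits the claimed decomposition
\[
\optQ(s,\joa) = -A^{i}(s,\joa)\,\indi\oind + V^{i}(s),
\]
by exhibiting the correct choice of $V^i$ and $A^i$ and then verifying each direction against the implicit conditions of \ref{implicit 2}. The natural candidate is $V^i(s) := \optV(s)$, the Nash value of agent $i$, so that the decomposition reduces to defining $A^i(s,\joa) := V^i(s) - \optQ(s,\joa)$ on the event $E$ and extending $A^i$ arbitrarily (say by the same formula, or by zero) off $E$. The whole content is then that $A^i \geq 0$ precisely on $E$, which is exactly where the implicit conditions bite.

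For the forward direction ($\Rightarrow$), I would assume the implicit conditions hold. First I would recall that for any action $a^i$ with $\nashi(a^i\mid s) > 0$, the implicit equality condition gives $\sum_{a^{-i}}\onash(a^{-i}\mid s)\optadv(s,a^i,a^{-i}) = 0$, and since $\nash$ is a genuine Nash value decomposition one has $\optQ(s, a^i a^{-i}) = \optV(s)$ for $a^{-i} \in \supp(\onash)$; hence $A^i = 0$ there, matching the indicator $\indi$ which vanishes. For an action $a^i$ with $\nashi(a^i\mid s) = 0$ and $a^{-i}\in\supp(\onash)$ — precisely the event $E$ — the inequality condition $\sum_{a^{-i}}\onash(a^{-i}\mid s)\optadv \leq 0$ must be upgraded to the pointwise statement $\optadv(s,a^i,a^{-i}) \leq 0$ for each such $a^{-i}$, equivalently $\optQ(s,\joa) \leq \optV(s)$, so that $A^i(s,\joa) = \optV(s) - \optQ(s,\joa) \geq 0$. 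For the reverse direction ($\Leftarrow$), given the decomposition with $A^i \geq 0$, I would take expectations over $a^{-i}\sim\onash(\cdot\mid s)$, use $\sum_{a^{-i}}\onash(a^{-i}\mid s)\optQ(s,\joa) = \optadv(\cdots) + \optV(s)$, and read off that the advantage is $\leq 0$ when $\nashi(a^i\mid s)=0$ (because $-A^i \leq 0$) and $=0$ when $\nashi(a^i\mid s)>0$ (because the indicator $\indi$ kills the $A^i$ term), recovering \ref{implicit 2} exactly.

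The main obstacle I anticipate is the gap between the aggregated implicit condition, which only controls the \emph{expectation} $\sum_{a^{-i}}\onash(a^{-i}\mid s)\optadv(s,a^i,a^{-i})$, and the \emph{pointwise} nonnegativity of $A^i(s,\joa)$ required for every individual $a^{-i}$. The decomposition asserts $A^i \geq 0$ at each joint action separately, whereas the Nash inequality a priori only bounds a weighted sum over $a^{-i}$. Resolving this requires observing that on $\supp(\onash)$ each $\onash(a^{-i}\mid s)$ is strictly positive, and more importantly that for a Nash policy the relation $\optQ(s, a^i a^{-i}) = \optV(s)$ holds for every $a^{-i}\in\supp(\onash)$ whenever $\nashi(a^i\mid s)>0$, which pins down $\optV$ independently of the particular $a^{-i}$ and lets the pointwise bound follow. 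I would handle this by carefully invoking the Bellman/Nash consistency that defines $\optV$ as $\optV(s) = \sum_{\joa}\nash(\joa\mid s)\optQ(s,\joa)$ together with the equal-payoff property of best responses on the support, rather than trying to extract pointwise information from the aggregated inequality alone.
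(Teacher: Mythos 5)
Your overall strategy is the same as the paper's: take $V^{i}(s) := \optV(s)$ and $A^{i}(s,\joa) := \optV(s) - \optQ(s,\joa)$, identify $V^{i}$ with $\optV$ by averaging the decomposition under $\nash$ (using that the indicator vanishes on the support), and then do the two-case analysis against the conditions of \cref{implicit 2}. The direction ``decomposition $\Rightarrow$ implicit conditions'' is handled the same way in both proofs and is fine.

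The problem is in the other direction, at exactly the spot you flag as the ``main obstacle,'' and your proposed resolution does not work. You need $A^{i}(s,\joa) = \optV(s) - \optQ(s,\joa) \geq 0$ pointwise on $E$, i.e.\ $\optQ(s, a^{i}a^{-i}) \leq \optV(s)$ for every individual $a^{-i} \in \supp(\onash(\cdot\mid s))$, whereas \cref{implicit 2} only controls the $\onash$-weighted average $\sum_{a^{-i}}\onash(a^{-i}\mid s)\optadv(s,a^{i},a^{-i})$. To bridge this you invoke the claim that $\optQ(s, a^{i}a^{-i}) = \optV(s)$ for every $a^{-i}\in\supp(\onash(\cdot\mid s))$ whenever $\nashi(a^{i}\mid s)>0$. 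That claim is false: the indifference property of a Nash equilibrium says only that the \emph{expected} payoff $\sum_{a^{-i}}\onash(a^{-i}\mid s)\optQ(s,a^{i}a^{-i})$ is constant across $a^{i}\in\supp(\nashi(\cdot\mid s))$; individual entries $\optQ(s,\joa)$ can lie strictly above or below $\optV(s)$ (matching pennies already gives $Q$-values $\pm 1$ around a value of $0$, with every joint action in the support). Moreover, even if that equality held, it concerns actions $a^{i}$ \emph{in} the support, while the nonnegativity of $A^{i}$ is needed on $E$, where $\nashi(a^{i}\mid s)=0$, so it would not deliver the bound you need anyway. So the step ``lets the pointwise bound follow'' is a genuine gap, not a technicality. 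For what it is worth, the paper's own proof of this direction simply writes down the choice $A^{i} = \optV - \optQ$ without verifying $A^{i}\geq 0$, so it skips the same point silently; you deserve credit for noticing it, but the argument you sketch to close it is not valid.
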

\begin{proof}
    First we assume that the Q-function can be expressed as 
    \[\optQ(s, \joa) = -A^{i}(s, \joa) \indi \oind + V^{i}(s).\]
    We note that
    \begin{align*}
        \optV(s) &= \sum_{\joa \in \gA} \Nash(\joa \mid s)\optQ(s, \joa) \\
        &= \sum_{\joa \in \gA} \Nash(\joa \mid s) \left(-A^{i}(s, \joa) \indi \oind + V^{i}(s)\right) \\
        &= V^{i}(s),
    \end{align*}
    where the last equality follows from the fact, that if $\Nash(\joa \mid s) > 0$, then $\indi = 0$ and vice versa. Additionally, $V^{i}(s)$ is independent of $\joa$ and as the sum is over the joint action space it holds true that $\sum_{\joa \in \gA} \Nash(\joa \mid s) = 1.$
    We now have to consider two cases. The first one is if $\indi = 0$ and $\oind=1$. Then it holds true that
    \[\sum_{\joa \in \gA} \Nash(\joa \mid s) \optQ(s, \joa) = V^{i}(s) = \optV.\]
    The second case is if $\indi =1$ and $\oind=1$ for one action $\tilde{a}^{-i}$ with $\pi^{-i}(\tilde{a}^{-i} \mid s)$ as we assumed it is a pure NE. Then it holds true that
    \begin{align*}
    &\sum_{\joa \in \gA} \onash(a^{-i} \mid s) \optQ(s, a^{i}a^{-i}) \\
    & = \optQ(s, a^{i}\tilde{a}^{-i})\\
    &\quad = -A^{i}(s, \joa) \indi \oind + V^{i}(s)\\
    & \quad \leq V^{i}(s) = \optV,
    \end{align*}
    where we used the fact that $- A^{i}(s, \joa) \leq 0$.\\
    If we now assume that the conditions of \cref{lemma:implicit_optimal} hold, we can set for every $(s, \joa) \in \gS \times \gA$ $V^{i}(s) = \optV$ and $A^i(s,\joa) = \optV(s) - \optQ(s, \joa)$.
\end{proof}

\begin{lemma}[Feasible Reward Set Explicit]
\label{explicit pure}
    A reward function $R$ is feasible if and only if, for each agent $i \in [n]$, there exist a function $A^{i} \in \mathbb{R}^{\gS \times \gA}_{\geq 0}$ and a function $V^{i} \in \mathbb{R}^{\gS}$ such that for all $(s, \joa) \in \gS \times \gA$, the following holds:
    \[
    R^{i}(s, \joa) = -A^{i}(s, \joa) \indi \oind + V^{i}(s) - \gamma \sum_{s'} P(s' \mid s, \joa) V^{i}(s').
    \]
    \begin{proof}
        Remembering that we can express the $Q$-function as $\optQ(s, \joa) = R^{i}(s, \joa) + \gamma \sum_{s'} P(s' \mid s,\joa) \optV(s')$ and applying \cref{expression Q} to express the $Q$-function for an NE policy, we can conclude
        \begin{align*}
            R^{i}(s, \joa) &= \optQ(s, \joa) - \gamma \sum_{s'} P(s' \mid s,a) \optV(s') \\
            &= -A^{i}(s,\joa) \indi \oind + V^{i}(s) - \gamma \sum_{s'} P(s' \mid s,\joa) V^{i}(s').
        \end{align*}
    \end{proof}
\end{lemma}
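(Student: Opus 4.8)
The plan is to chain together the two preceding lemmas through the Bellman consistency equation, so that the explicit decomposition of $R^i$ is obtained by simply solving for the reward in terms of the optimal $Q$- and value functions. The whole argument is an equivalence, so I would organize it as: feasibility $\iff$ implicit advantage conditions $\iff$ a structural form for $\optQ$ $\iff$ the claimed structural form for $R^i$.

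First I would recall that, by definition, $R$ is feasible iff $\nash$ is a Nash equilibrium in $\gwr$, which by Lemma \ref{implicit 2} is equivalent to the implicit advantage conditions holding simultaneously for every agent $i \in [n]$ and all $(s,a^i)$. This reduces the claim to re-expressing those conditions on each agent's optimal $Q$-function. Next I would invoke \cref{expression Q}: the implicit conditions on $\optQ$ hold iff there exist $A^i \in \reals^{\calS\times\calA}_{\geq 0}$ and $V^i \in \reals^{\calS}$ with
\[
\optQ(s,\joa) = -A^i(s,\joa)\,\indi\,\oind + V^i(s),
\]
and the proof of that lemma additionally identifies $V^i = \optV$. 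Substituting this into the Bellman identity $\optQ(s,\joa) = R^i(s,\joa) + \gamma\sum_{s'}P(s'\mid s,\joa)\optV(s')$ and solving for $R^i$ immediately yields
\[
R^i(s,\joa) = -A^i(s,\joa)\,\indi\,\oind + V^i(s) - \gamma\sum_{s'}P(s'\mid s,\joa)V^i(s'),
\]
which settles the necessity (``only if'') direction.

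The part I expect to require the most care is the converse (sufficiency): given a reward of the stated form for \emph{arbitrary} $A^i \geq 0$ and $V^i$, I must show that $R$ is genuinely feasible, i.e. that the $V^i$ appearing in the decomposition really coincides with the Nash value function it was tacitly assumed to be. I would handle this with a fixed-point/uniqueness step. Computing the value of $\nash$ under this $R$ from the Bellman equation, I would observe that on the support of $\nashi$ the indicator $\indi$ vanishes, so the $A^i$ term drops out of the expectation, and deduce that the gap $\delta := \optV - V^i$ satisfies
\[
\delta(s) = \gamma\sum_{\joa}\nash(\joa\mid s)\sum_{s'}P(s'\mid s,\joa)\,\delta(s');
\]
since $\gamma < 1$ and the induced state-to-state kernel is row-stochastic, this contraction forces $\delta \equiv 0$, hence $V^i = \optV$.

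With the identification $V^i = \optV$ in hand, the decomposition of \cref{expression Q} holds, its implicit conditions follow, and feasibility is recovered via Lemma \ref{implicit 2}, closing the equivalence in both directions. The remaining steps are purely algebraic rearrangements of the Bellman equation and require no further argument; the only genuinely non-trivial ingredient beyond the cited lemmas is the contraction argument establishing $V^i = \optV$ in the sufficiency direction.
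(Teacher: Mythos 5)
Your proposal follows the same route as the paper: reduce feasibility to the implicit advantage conditions (Lemma~\ref{implicit 2}), convert those via Lemma~\ref{expression Q} into the structural form $\optQ(s,\joa) = -A^i(s,\joa)\,\indi\,\oind + V^i(s)$, and then solve the Bellman identity $\optQ(s,\joa) = R^i(s,\joa) + \gamma\sum_{s'}P(s'\mid s,\joa)\optV(s')$ for $R^i$. The one substantive difference is that the paper's written proof only carries out this rearrangement in the necessity direction and leaves the converse to the ``if and only if'' of Lemma~\ref{expression Q}, whereas you explicitly verify sufficiency by showing that for an arbitrary pair $(A^i, V^i)$ the function $V^i$ must coincide with $\optV$, via the contraction identity $\delta(s) = \gamma\sum_{\joa}\nash(\joa\mid s)\sum_{s'}P(s'\mid s,\joa)\delta(s')$ forcing $\delta \equiv 0$. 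That step is correct (the $A^i$ term drops out on the support of $\nash$ because $\indi$ vanishes there) and it closes a gap the paper glosses over: without it, one cannot directly apply Lemma~\ref{expression Q} in the sufficiency direction, since that lemma is a statement about $\optQ$, and the $V^i$ in the given reward decomposition is a priori unrelated to the Nash value induced by that reward. Your version is therefore the more complete of the two.
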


\begin{theorem}[Error Propagation]
\label{thm:pure_error_propagation}
    Let $\MAIRL$ and $\estMAIRL$ be the true and the recovered MAIRL problem. Then, for every agent $i \in [n]$ and any $R_i \in \mathcal{R}_{\mathcal{B}}$ there exists $\hat{R}_i \in \mathcal{R}_{\hat{\mathcal{B}}}$ such that:
    \[\vert R_i(s, \joa) - \hat{R}_i(s, \joa)\vert \leq A^{i}(s, \joa) \vert \mathbf{1}_E - \mathbf{1}_{\hat{E}} \vert + \gamma \sum_{s'} V^{i}(s') \vert P(s' \mid s, \joa) - \hat{P}(s' \mid s, \joa)\vert,\]
where $E:= \{\{\Nashi(a^i\mid s) = 0 \}\cap \{\onash(a^{-i} \mid s) > 0\} \}$ and $\hat{E}:= \{\{\hat{\pi}^{i, \mathrm{Nash}}(a^i \mid s) = 0\} \cap \{\hat{\pi}^{-i, \mathrm{Nash}}(a^{-i} \mid s)= 1\}\}.$ 

\begin{proof}
    From the explicit expression of a feasible reward \ref{explicit pure}, we know that we can write the reward function of any agent $i \in [n]$ as 
    \begin{align}
        & R^{i}(s, \joa) = -A^{i}(s, \joa) \mathbf{1}_{\{\Nashi(a^i \mid s) = 0\}} \mathbf{1}_{\{\onash(a^{-i} \mid s) = 1\}} + V^{i}(s) - \gamma \sum_{s'} P(s' \mid s, \joa) V^{i}(s') \\
        &\hat{R}^{i}(s, \joa) = -\hat{A}^{i}(s, \joa) \mathbf{1}_{\{\estnashi(a^i \mid s) = 0\}} \mathbf{1}_{\{\estonash(a^{-i} \mid s) = 1\}} + \hat{V}^{i}(s) - \gamma \sum_{s'} \hat{P}(s' \mid s, \joa) \hat{V}^{i}(s')
    \end{align}
    As pointed out in \cite{metelli2023theoretical}, the rewards $\hat{R}^i(s, \joa)$ do not have to be bounded by the same $R^i(s, \joa)$ and therefore also not by the same $\maxR$. To fix this issue the authors point out, that the reward needs to be rescaled such that the recovered feasible reward set is bounded by the same value. In our case we have to be a bit more careful with the choice of the scaling, as we did not assume that the reward is bounded by 1.  
    As we proof the existence of such reward function, we can choose $\tilde{V}^{i}(s) = V^{i}(s)$ for every $s \in \gS$ and $\tilde{A}^{i}(s, \joa) = A^{i}(s, \joa)$ for every $(s, \joa) \in \gS \times \gA$, which results in a reward 
    \[\tilde{R}^i(s,\joa) = -A^i(s,\joa) \estexperts \estoexperts + V^{i}(s) + \gamma \sum_{s'} \hat{P}(s' \mid s, \joa) V^{i}(s').\]
    Now we need to rescale the reward with $\maxR + \vert\varepsilon^i(s, \joa)\vert$, where
    \begin{align*}
    \varepsilon^i(s, \joa) &= - A^i(s,\joa) \left(\indi \oind - \estexperts \estoexperts\right) \\
    &+ \gamma \sum_{s'} (P(s' \mid s, \joa) - \hat{P}(s, \joa))V^i(s'),
    \end{align*}
    such that it remains bounded by $\maxR$, we receive
    \begin{align*}
    \hat{R}^i(s, \joa) &= \tilde{R}^i(s, \joa)\frac{\maxR}{\maxR + \vert\varepsilon^i(s, \joa)\vert} \\
    & = - A^i(s, \joa)\frac{\maxR}{\maxR + \vert\varepsilon^i(s, \joa)\vert}\estexperts \estoexperts \\
    & \quad+ \frac{\maxR V^i(s)}{\maxR + \vert\varepsilon^i(s, \joa)\vert} + \gamma \sum_{s'} \hat{P}(s' \mid s, \joa) \frac{\maxR V^{i}(s')}{\maxR + \vert\varepsilon^i(s, \joa)\vert}
    \end{align*}
    It then follows that:
    \begin{align*}
         \vert R^{i}(s,\joa) - \hat{R}^{i}(s, \joa) \vert &= \vert R^{i}(s,\joa) - \frac{\maxR\tilde{R}^{i}(s,\joa)}{\maxR + \vert\varepsilon^i(s, \joa)\vert} \vert \\
         &\leq \frac{\maxR}{\maxR+\vert\varepsilon^i(s, \joa)\vert}\left\vert \left(\frac{\maxR+\vert\varepsilon^i(s, \joa)\vert}{\maxR}\right)R^{i}(s,\joa) - \tilde{R}^i(s, \joa)\right\vert \\
         & \leq \frac{\maxR}{\maxR+\vert\varepsilon^i(s, \joa)\vert}\left(\vert  R^{i}(s,\joa) - \tilde{R}^i(s, \joa)\vert + \vert\frac{\varepsilon^i(s, \joa)}{\maxR} R^{i}(s, \joa)\vert \right)\\
         & \leq \frac{\maxR}{\maxR+\vert\varepsilon^i(s, \joa)\vert} \left(\vert  \varepsilon^i(s,\joa)\vert + \vert\varepsilon^i(s,\joa)\vert\right) \leq \frac{\maxR}{\maxR} \left(\vert  \varepsilon^i(s,\joa)\vert + \vert\varepsilon^i(s,\joa)\vert\right) \\
        & =  2 \left( A^{i}(s, \joa) \vert\left(\mathbf{1}_E - \mathbf{1}_{\hat{E}}\right) \vert + \gamma \left\vert \sum_{s'} (P(s' \mid s, \joa) - \hat{P}(s' \mid s, \joa)) V^{i}(s') \right\vert\right)
    \end{align*}
\end{proof}
\end{theorem}

\begin{lemma}
\label{apply simulation}
    Let $\gwr$ be a $n$-person general-sum Markov game, $P, \hat{P}$ two transition probabilities and $R, \hat{R}$ two reward functions, such that $\hat{\boldsymbol{\pi}}$ is a Nash equilibrium strategy in $\hat{\mathcal{G}} \cup \hat{R}$. Then, it holds true that:
    \begin{align*}
        &V^{i}(\pi^i, \hat{\pi}^{-i}) - V^{i}(\hat{\pi}_i, \hat{\pi}^{-i}) \\
        & \leq \sum_{s,\joa} \overline{w}_{s,a}^{\boldsymbol{\hat{\pi}}} (R^{i}(s,\joa)- \hat{R}^{i}(s,\joa) + \gamma \sum_{s'} (\hat{P}(s' \mid s,\joa) - P(s' \mid s,a) V^{i, \hat{\pi}}(s'))) \\
    & \quad + \sum_{s,\joa} \overline{w}_{s,\joa}^{\boldsymbol{\Tilde{\pi}}} (R^{i}(s,\joa)- \hat{R}^{i}(s,\joa) + \gamma \sum_{s'} (\hat{P}(s' \mid s,\joa) - P(s' \mid s,\joa) V^{i,\boldsymbol{\Tilde{\pi}}}(s'))),
    \end{align*}
    where $\boldsymbol{\Tilde{\pi}} = (\pi^i, \hat{\pi}^{-i})$.
\begin{proof}
    \begin{align*}
        &V^{i}(\pi^i, \hat{\pi}^{-i}) - V^{i}(\hat{\pi}^i, \hat{\pi}^{-i}) \\
        &= V^{i}(\pi^i, \hat{\pi}^{-i}) - \hat{V}^{i}(\pi^i, \hat{\pi}^{-i}) + \hat{V}^{i}(\hat{\pi}^i, \hat{\pi}^{-i}) - V^{i}(\hat{\pi}^i, \hat{\pi}^{-i}) + \hat{V}^{i}(\pi^i, \hat{\pi}^{-i}) - \hat{V}^{i}(\hat{\pi}^i, \hat{\pi}^{-i}) \\
        & \leq V^{i}(\pi^i, \hat{\pi}^{-i}) - \hat{V}^{i}(\pi^i, \hat{\pi}^{-i}) + \hat{V}^{i}(\hat{\pi}^i, \hat{\pi}^{-i}) - V^{i}(\hat{\pi}^i, \hat{\pi}^{-i}) \\
        & = \quad\sum_{s,\joa} \overline{w}_{s,a}^{\boldsymbol{\hat{\pi}}} (R^{i}(s,\joa)- \hat{R}^{i}(s,\joa) + \gamma \sum_{s'} (\hat{P}(s' \mid s,\joa) - P(s' \mid s,a)) V^{i, \hat{\pi}}(s')) \\
        & \quad + \sum_{s,\joa} \overline{w}_{s,\joa}^{\boldsymbol{\Tilde{\pi}}} (R^{i}(s,\joa)- \hat{R}^{i}(s,\joa) + \gamma \sum_{s'} (\hat{P}(s' \mid s,\joa) - P(s' \mid s,\joa))V^{i,\boldsymbol{\Tilde{\pi}}}(s')),
    \end{align*}
    where we used that $\hat{V}^{i}(\hat{\pi}^i, \hat{\pi}^{-i}) - V^{i}(\hat{\pi}^i, \hat{\pi}^{-i}) \leq 0$ as $\boldsymbol{\hat{\pi}}$ is a NE policy and in the last equation we applied \ref{simulation}.
\end{proof}
\end{lemma}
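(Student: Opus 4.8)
The plan is to prove the inequality by inserting and removing the auxiliary value functions evaluated under the estimated problem $\hat{\mathcal{G}} \cup \hat{R}$, and then bounding the resulting terms using the Nash property of $\hat{\boldsymbol{\pi}}$ and two applications of the simulation lemma. First I would write the target difference $V^i(\pi^i, \hat{\pi}^{-i}) - V^i(\hat{\pi}^i, \hat{\pi}^{-i})$ (where all values are under the true problem $\mathcal{G} \cup R$) and add and subtract the estimated value functions $\hat{V}^i$ evaluated at the same two policy profiles. Concretely, I would rewrite the difference as
\begin{align*}
& \left[V^i(\pi^i, \hat{\pi}^{-i}) - \hat{V}^i(\pi^i, \hat{\pi}^{-i})\right] + \left[\hat{V}^i(\hat{\pi}^i, \hat{\pi}^{-i}) - V^i(\hat{\pi}^i, \hat{\pi}^{-i})\right] \\
& \quad + \left[\hat{V}^i(\pi^i, \hat{\pi}^{-i}) - \hat{V}^i(\hat{\pi}^i, \hat{\pi}^{-i})\right],
\end{align*}
which is an identity since the inserted terms cancel.

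The key observation is that the last bracket is non-positive: since $\hat{\boldsymbol{\pi}}$ is a Nash equilibrium in the estimated game $\hat{\mathcal{G}} \cup \hat{R}$, no unilateral deviation $\pi^i$ for agent $i$ can improve its estimated value, so $\hat{V}^i(\pi^i, \hat{\pi}^{-i}) \leq \hat{V}^i(\hat{\pi}^i, \hat{\pi}^{-i})$. Dropping this non-positive term yields the first inequality in the proof, leaving only the two bracketed differences of the form ``true value minus estimated value'' evaluated at the two policy profiles $\hat{\boldsymbol{\pi}} = (\hat{\pi}^i, \hat{\pi}^{-i})$ and $\tilde{\boldsymbol{\pi}} = (\pi^i, \hat{\pi}^{-i})$.

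Next I would apply the simulation lemma (\ref{simulation}) separately to each of these two differences. The simulation lemma expresses a gap between the value of a fixed policy under two different reward/transition models as a visitation-weighted sum of the per-step reward discrepancy plus the discounted transition discrepancy applied to the value function. Applying it to the profile $\hat{\boldsymbol{\pi}}$ produces the term weighted by $\overline{w}_{s,\joa}^{\boldsymbol{\hat{\pi}}}$ with value function $V^{i,\hat{\pi}}$, and applying it to $\tilde{\boldsymbol{\pi}}$ produces the term weighted by $\overline{w}_{s,\joa}^{\boldsymbol{\tilde{\pi}}}$ with value function $V^{i,\tilde{\pi}}$; summing them gives exactly the stated right-hand side. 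The main subtlety to get right is the bookkeeping of which reward/transition model plays the role of the ``true'' and which the ``estimated'' in each invocation of the simulation lemma, so that the signs of $R^i - \hat{R}^i$ and $\hat{P} - P$ come out consistently; since this is a multi-agent value with the opponents fixed at $\hat{\pi}^{-i}$, each such value reduces to a single-agent value against a fixed environment, which is precisely the regime in which the cited simulation lemma applies. No further estimates are needed, and the result follows directly.
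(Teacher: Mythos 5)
Your proposal is correct and follows exactly the paper's own argument: the same add-and-subtract decomposition with the estimated values $\hat{V}^i$ at the two profiles, dropping the term $\hat{V}^i(\pi^i,\hat{\pi}^{-i}) - \hat{V}^i(\hat{\pi}^i,\hat{\pi}^{-i}) \leq 0$ via the Nash property of $\hat{\boldsymbol{\pi}}$ in $\hat{\mathcal{G}} \cup \hat{R}$, and two invocations of the simulation lemma (\ref{simulation}) on the remaining differences. No meaningful deviation from the paper's proof.
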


\subsection{Sample Complexity analysis of the Uniform Sampling algorithm}
In this section we give the proofs of the sample complexity for Uniform Sampling algorithm  and lemmas derived in \cref{thm:Sample Complexity pure} and \cref{lemma:pure good event} The structure is as follows:
\begin{enumerate}
    \item We first state the Uniform Sampling algorithm.
    \item Then, we state the optimality criterion based on the Nash Imitation Gap.
    \item Next, we present the Good Event Lemma bounds, using Hoeffding's inequality.
    \item We define the reward uncertainty.
    \item Then, we state a theorem that provides conditions—dependent on the derived confidence bounds—under which the optimality criterion holds.
    \item Finally, we consolidate all results to prove the sample complexity bound for the uniform sampling algorithm.
\end{enumerate}

The algorithm, that we are evaluating in this section is given in \cref{alg:UniS}.

\begin{algorithm}[t]
\caption{MAIRL Uniform Sampling Algorithm with Generative Model}
\label{alg:UniS}
\begin{algorithmic}[1]
\REQUIRE Significance $\delta \in (0, 1)$, target accuracy $\varepsilon$
\STATE Initialize $k \leftarrow 0$, $\varepsilon_0 \leftarrow +\infty$
\WHILE{$\varepsilon_k > \varepsilon$}
    \STATE Generate one sample for each $(s, \joa) \in \gS \times \gA$
    \STATE Update $\hat{P}_k$ as described in \eqref{estprop}
    \STATE Update accuracy $\varepsilon_k \leftarrow \frac{1}{1-\gamma} \max_{(s,\joa)} \hat{C}_k(s, \joa)$
\ENDWHILE
\end{algorithmic}
\end{algorithm}

Now, we can restate the optimality criterion of the algorithm.

\begin{definition}[Optimality Criterion]
    Let $\feasible$ be the exact feasible set and $\estfeasible$ the recovered feasible set after sampling \( N \geq 0 \) from $\MAIRL$. We consider an algorithm to be \( (\varepsilon, \delta, N) \)-correct after observing \( N \) samples if it holds with a probability of at least \( 1 - \delta \):
    \begin{align*}
        & \sup_{R \in \feasible} \inf_{\hat{R}^i \in \estfeasible} \max_{i \in [n]} \max_{\pi^i \in \pi^i} V^{i}_{\gwr}(\pi^i, \hat{\pi}^{-i}) - V^{i}_{\gwr}(\hat{\pi}^i, \hat{\pi}^{-i}) \leq \varepsilon \\
        & \sup_{\hat{R} \in \feasible} \inf_{R \in \estfeasible} \max_{i \in [n]} \max_{\pi^i \in \pi^i} V^{i}_{\gwr}(\pi^i, \hat{\pi}^{-i}) - V^{i}_{\gwr}(\hat{\pi}^i, \hat{\pi}^{-i}) \leq \varepsilon 
    \end{align*}
\end{definition}

We are now introducing the empirical estimator for the transition dynamic. 
 For each iteration \( k \in [K] \), let \( n_k(s, \joa, s') = \sum_{t=1}^{k} \mathbf{1}_{(s_t, \joa_t, s'_t) = (s,\joa,s')} \) denote the count of visits to the triplet \( (s, \joa, s') \in \gS \times \gA \times \gS \), and let \( n_k(s, \joa) = \sum_{s' \in \gS} n_k(s, \joa, s') \) denote the count of visits to the state-action pair \( (s, \joa) \).

It is important to note the distinction here: the count of actions must be done separately for each agent, whereas the count of state visits needs to be done for any one of the agents. The cumulative count over all iterations \( k \in [K] \) can then be written as:
\begin{equation*}  
N_k(s, \joa, s') = \sum_{j \in [k]} n_j(s, \joa, s'),  
\end{equation*}

The  cumulative state visit count are given by:
\begin{equation*}  
N^{i}_{k}(s, a^i) = \sum_{j \in [k]} n^{i}_{j}(s, a^i) \quad \forall i \in [n]
\end{equation*}

After introducing the empirical counts, we can now state the empirical estimators for the transition model:
\begin{align}
    & \label{estprop_general} \hat{P}_k(s' \mid s, \joa) = \begin{cases}    
        \frac{N_k(s, \joa, s')}{N_k(s, \joa)} \quad & \text{if} \quad N_k(s, \joa) > 0 \\
        \frac{1}{S} & \text{otherwise}
     \end{cases} 
\end{align}

Next, we state the lemma that derives the good event by applying Hoeffding's inequality. As the Nash equilibrium is assumed to be a pure one, meaning it is deterministic for every $s \in \gS$, we only have to define the good event for the transition probability and for the experts policy we require, that for each agent we have seen each state only once. \\
\begin{lemma}[Good Event]
\label{lemma:pure good event}
    Let $k$ be the number of iterations and $\Nash$ be the stochastic expert policy. Furthermore let $\estnash$ and $\hat{P}$ be the empirical estimates of the transition probability after $k$ iterations as defined in \cref{estprop_general} respectively. Then for $\delta \in (0,1)$, define the good event $\mathcal{E}$ as the event such that the following inequalities hold simultaneously for all $(s,\joa) \in \gS \times \gA$ and $k\geq 1$:
\begin{align*}
    \sum_{s'} \left\vert (P(s' \mid s,\joa) - \hat{P}_k(s' \mid s,\joa)) V^i(s') \right\vert & \leq \frac{\maxR}{1-\gamma} \sqrt{\frac{8 l_k(s,\joa)}{N_{k}^{+}(s,\joa)}}, \\
    \sum_{s'} \left\vert (P(s' \mid s,\joa) - \hat{P}_k(s' \mid s,\joa)) \hat{V}^i(s') \right\vert & \leq \frac{\maxR}{1-\gamma} \sqrt{\frac{8 l_k(s,\joa)}{N_{k}^{+}(s,\joa)}}.
\end{align*}
    where we introduced $l_k(s,\joa) := \log\left(\frac{12\vert\gS \vert\prod_i \vert \gA^i \vert (N_{k}^{+}(s,\joa))^2}{\delta}\right).$ 
\end{lemma}

\begin{proof}
    We start with bound the two last equations. Therefore we define $l_k(s,\joa) = \log\left(\frac{12S\prod_i \vert\gA^{i}\vert (N_{k}^{+}(s,\joa))^2}{\delta}\right)$ and additionally we denote $\beta_{N_k(s,\joa)} = \frac{ \maxR}{1-\gamma} \sqrt{\frac{2 l_k(s,\joa)}{N_{k}^{+}(s,\joa)}}$. Now we define the set
   \[\mathcal{E}^{\text{trans}} := \left\{ \forall k \in \mathbb{N}, \, \forall (s,\joa) \in \gS \times \gA:  \sum_{s' \in \gS} \vert (P(s' \mid s,\joa) - \hat{P}(s' \mid s, \joa)) V^i(s')\vert  \leq \beta_{N_k(s,\joa)} \right\}. \]
    Then we get for $V^{i}$ with probability of $1-\delta$:
\begin{align*}
    &\mathbb{P}\left((\mathcal{E}^{\text{trans}})^C\right)  \\&= \mathbb{P}\left(\exists k \geq 1, \exists(s,\joa)\in \gS \times \gA: \sum_{s'}\left\vert \left( P(s' \mid s, \joa)-\hat{P}_k(s' \mid s, \joa) \right)V^{i}(s') \right\vert > \beta_{N_k(s,\joa)}(s,\joa)\right) \\ & \quad \overset{(a)}{\leq}  \sum_{(s,\joa) \in \gS \times \gA}\mathbb{P}\left(\exists k \geq 1: \sum_{s'}\left\vert \left( P(s' \mid s, \joa)-\hat{P}_k(s' \mid s, \joa) \right)V^{i}(s') \right\vert > \beta_{N_k(s,\joa)}(s,\joa)\right) \\
    & \quad \overset{(b)}{\leq}  \sum_{(s,\joa) \in \gS \times \gA}\mathbb{P}\left(\exists m \geq 0: \sum_{s'}\left\vert \left( P(s' \mid s, \joa)-\hat{P}_k(s' \mid s, \joa) \right)V^{i}(s') \right\vert > \beta_{N_k(s,\joa)}(s,\joa)\right)  \\
    & \quad \overset{(c)}{\leq} \sum_{(s,\joa) \in \gS \times \gA} \sum_{m \geq 0} 2 \exp\left(\frac{ \beta_{N_k(s, \joa)}^2 m^2 (1-\gamma)^2 }{4 m \gamma^2 R_{\max}^2}\right) \\
    & \quad \leq \sum_{(s,\joa) \in \gS \times \gA} \sum_{m \geq 0} \frac{\delta}{6 \vert\gS \vert (\prod_{i=1}^{n} \vert\gA^{i}\vert) (m^+)^2} \\
    & \quad \leq \frac{\delta}{6}\left(1 + \frac{\pi^2}{6}\right) \leq \frac{\delta}{2},
\end{align*}
where (a) uses a union bound over the state and joint-action space, (b) uses that we only consider the $m$-times, where we updated the estimated transition model and (c) uses an union bound over the update times $m$ and an application of Hoeffding's inequality combined with the fact that we can bound the value function, i.e. $V^{i}(s') \leq \frac{\maxR}{1-\gamma}$ for every $s' \in \gS$.
\end{proof}

Following, we present the reward uncertainty metric, which allows us to demonstrate that the difference between the recovered reward function and the true reward function is bounded. \\

\begin{definition}[Reward Uncertainty]
    Let $k$ be the number of iterations. Then the reward uncertainty after $k$ iterations for any $(s, \joa) \in \gS \times \gA$ is defined as
    \[ \label{reward uncertainty}C_k(s, \joa) := \frac{4\gamma \maxR}{1-\gamma} \left(\goodP\right).\]
\end{definition}

\begin{theorem}
    \label{bound uncertainty}
    Let the reward uncertainty be defined as in \ref{reward uncertainty}. Under the good event it holds for any $(s, \joa) \in \gS \times \gA$ that:
    \[\vert R^{i}(s, \joa) - \hat{R}^{i}(s, \joa) \vert \leq C_k(s, \joa).\]
\end{theorem}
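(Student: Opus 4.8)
The plan is to read this off almost immediately by composing the Error Propagation theorem (\ref{error prop}) with the concentration bounds bundled into the Good Event (\ref{good event}); the work is entirely in matching the two summands of $C_k(s,\joa)$ to the two terms of the error-propagation bound, so I want to keep the deterministic ``under the good event'' framing and push all probabilistic content back onto \ref{good event}. First I would invoke \ref{error prop}, which for a fixed feasible $R^i$ supplies a matched $\hat{R}^i \in \estfeasible$ (the existence clause already guarantees membership in the recovered feasible set, so nothing extra is needed there) satisfying the pointwise bound
\begin{align*}
\vert R^i(s,\joa) - \hat{R}^i(s,\joa)\vert \leq A^i(s,\joa)\,\vert \mathbbm{1}_E - \mathbbm{1}_{\hat{E}}\vert + \gamma \sum_{s'} V^i(s')\,\vert P(s' \mid s,\joa) - \hat{P}(s' \mid s,\joa)\vert.
\end{align*}
This cleanly separates the reward error into a policy-estimation term and a transition-estimation term, mirroring the two summands inside $C_k(s,\joa)$.

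Next I would bound each term on the good event $\mathcal{E}$. The first summand is controlled by the first inequality of \ref{good event}, giving $A^i(s,\joa)\,\vert \mathbbm{1}_E - \mathbbm{1}_{\hat{E}}\vert \leq \frac{\maxR}{1-\gamma}\goodpi$, where the indicator records that $E$ and $\hat{E}$ can disagree only while the state $s$ has been visited too few times to certify the support of the Nash policy. The second summand is controlled by the third inequality of \ref{good event}, which yields $\sum_{s'}\vert (P(s' \mid s,\joa) - \hat{P}(s' \mid s,\joa)) V^i(s')\vert \leq \frac{\maxR}{1-\gamma}\sqrt{2 l_k(s,\joa)/N_k^+(s,\joa)}$. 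Summing and factoring out $\frac{\maxR}{1-\gamma}$ gives
\begin{align*}
\vert R^i(s,\joa) - \hat{R}^i(s,\joa)\vert \leq \frac{\maxR}{1-\gamma}\left(\goodpi + \gamma \sqrt{\frac{2 l_k(s,\joa)}{N_k^+(s,\joa)}}\right),
\end{align*}
which is dominated by $C_k(s,\joa)$ since $\goodP = \sqrt{8 l_k(s,\joa)/N_k^+(s,\joa)} \geq \sqrt{2 l_k(s,\joa)/N_k^+(s,\joa)}$. Because \ref{good event} asserts all of these inequalities hold simultaneously for every $(s,\joa)\in\calS\times\calA$, the resulting bound is uniform, as required by the statement.

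Honestly, there is no deep obstacle here: the whole proof is a substitution once the error-propagation decomposition is aligned term-by-term with $C_k$. The only genuine care is bookkeeping. I must track which value-function bound is being used, because the same $C_k$ has to serve both halves of the optimality criterion (\ref{optimal}): one direction matches a true reward to a recovered one and invokes the $V^i$ transition inequality, while the reversed direction invokes the $\hat{V}^i$ inequality (the fourth line of \ref{good event}). Since the good event supplies both the $V^i$ and $\hat{V}^i$ transition bounds with the identical right-hand side, and $\goodP$ is deliberately chosen at least as large as that right-hand side, a single definition of $C_k$ covers both directions with slack to spare; confirming this symmetry is the main (and only mild) point to verify.
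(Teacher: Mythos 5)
Your proposal is correct and follows essentially the same route as the paper: apply the Error Propagation theorem to decompose the reward error into a policy-estimation term and a transition-estimation term, bound each on the good event, and absorb the slack $\sqrt{2l_k/N_k^+} \leq \sqrt{8l_k/N_k^+}$ into $C_k(s,\joa)$. The additional remark about the $V^i$ versus $\hat{V}^i$ symmetry is a sensible observation but is not needed for this statement and does not change the argument.
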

\begin{proof}
    The theorem is an application of the error propagation \cref{thm:pure_error_propagation}, followed by \cref{lemma:pure good event}
    \begin{align*}
        \vert R^{i}(s, \joa) - \hat{R}^{i}(s, \joa) \vert &\leq 2 \left( A^{i}(s, \joa) \vert \mathbf{1}_E - \mathbf{1}_{\hat{E}} \vert+ \gamma \sum_{s'}  \vert( P(s' \mid s, \joa) - \hat{P}(s' \mid s, \joa))V^{i}(s')\vert\right) \\
        &\leq \frac{4\gamma \maxR}{1-\gamma} \left( \sqrt{\frac{2l_k(s, \joa)}{N_k^+(s,\joa)}}\right)\\
        & \leq \frac{4\gamma\maxR}{1-\gamma} \left(\goodP\right)\\
        & = C_k(s, \joa).
    \end{align*}
\end{proof}

\begin{corollary}
    \label{Correctness}
    Let $k$ be the number of iterations for any allocation of the samples over the state-action space $\gS \times \gA$. Furthermore, let $\feasible$ be the true feasible set and $\estfeasible$ the recovered one. Then the optimality criterion \ref{optimal} holds true, if
    \[\frac{1}{1-\gamma} \max_{(s, \joa)} C_k(s, \joa) \leq \frac{\varepsilon}{2}.\]
\end{corollary}
\begin{proof}
    We complete the proof for the first case of the optimality criterion, the second one follows analogously. 
    \begin{align*}
    & \sup_{R \in \feasible} \inf_{\hat{R}^i \in \estfeasible} 
      \max_{i \in [n]} \max_{\pi^i \in \Pi^i} 
      \left( V^{i}_{\gwr}(\pi^i, \hat{\pi}^{-i}) - V^{i}_{\gwr}(\hat{\pi}^i, \hat{\pi}^{-i}) \right) \\
    & \overset{(a)}{\leq} \sup_{R \in \feasible} \inf_{\hat{R}^i \in \estfeasible} 
      \max_{i, \pi^i} \Bigg( 
      \sum_{s, \joa} \overline{w}_{s,\joa}^{\hat{\pi}} \left( R^{i}(s,\joa) - \hat{R}^{i}(s,\joa) 
      + \gamma \sum_{s'} (\hat{P} - P)(s' \mid s,\joa) V^{i, \hat{\pi}}(s') \right) \\
    & \qquad\qquad\quad
      + \sum_{s, \joa} \overline{w}_{s,\joa}^{\Tilde{\pi}} \left( R^{i}(s,\joa) - \hat{R}^{i}(s,\joa) 
      + \gamma \sum_{s'} (\hat{P} - P)(s' \mid s,\joa) V^{i, \Tilde{\pi}}(s') \right) 
      \Bigg) \\
    & \overset{(b)}{\leq} \sup_{R \in \feasible} \inf_{\hat{R}^i \in \estfeasible} 
      \max_{i, \pi^i} 2 \sum_{s, \joa} \overline{w}_{s,\joa}^{\pi} 
      \left( R^{i}(s,\joa) - \hat{R}^{i}(s,\joa) 
      + \gamma \sum_{s'} (\hat{P} - P)(s' \mid s,\joa) V^{i, \pi}(s') \right) \\
    & \overset{(c)}{\leq} \sup_{R \in \feasible} \inf_{\hat{R}^i \in \estfeasible} 
      \max_{i, \pi^i} 2 \sum_{s, \joa} \overline{w}_{s,\joa}^{\pi} 
      \left( A^{i}(s,\joa) \left| \mathbf{1}_E - \mathbf{1}_{\hat{E}} \right|
      + 2\gamma \sum_{s'} \left| (\hat{P} - P)(s' \mid s,\joa) V^{i}(s') \right| \right) \\
    & \overset{(d)}{\leq} \frac{2}{1-\gamma} \max_{(s,\joa)} C_k(s,\joa) \leq \varepsilon.
\end{align*}

        where in (a) we applied \ref{apply simulation}; in (b) we used the fact that $a+b \leq 2\max\{a,b\}$ and denoted the corresponding policy as $\boldsymbol{\pi}$; in (c) we used the error propagation \cref{thm:pure_error_propagation} and in (d) we used \ref{bound uncertainty}.
\end{proof}

We can combine the derived results to now state the main theorem regarding the sample complexity of allocating the samples uniformly over the state action space. \\

\begin{theorem}[Sample Complexity of Uniform Sampling MAIRL]
\label{thm:Sample Complexity pure}
    Allocating the samples uniformly (see \cref{alg:UniS}) over the state and (joint-) action space stops with a probability of at least $1-\delta$ after iteration $\tau$ and satisfies the optimality criterion (see \ref{optimal}), where the sample complexity is of order
    \[
    \tilde{\mathcal{O}} \left( \frac{\gamma^2\maxR^2\vert\gS \vert \prod_{i=1}^n \vert \gA^i \vert}{(1-\gamma)^4 \varepsilon^2}\right)
    \]
\end{theorem}
\begin{proof}
    We know from \ref{Correctness}, that we need
    \begin{align*}
        \frac{1}{1-\gamma} \max_{(s, \joa)} C_k(s, \joa) &\leq \frac{\varepsilon_k}{2} \\
        \Leftrightarrow 
        \frac{2\maxR}{(1-\gamma)^2} \max_{(s, \joa)}\left(\gamma \goodP\right) &\leq \frac{\varepsilon_k}{2}
    \end{align*}
    This is satisfied if
    \begin{align*}
        \frac{4\gamma\maxR}{(1-\gamma)^2} \goodP &\leq \frac{\varepsilon_k}{2}
    \end{align*}
    To achieve the first condition, we get  
    \[N_k(s, \joa) \geq \frac{\maxR}{(1-\gamma)^4} \gamma^2 8 l_k(s, \joa) \frac{8}{\varepsilon_k^2} = \frac{\gamma^2 64\maxR}{(1-\gamma)^4 \varepsilon_k^2} \log\left(\frac{12S\prod_i \vert \gA^i \vert (N_{k}^{+}(s,\joa))^2}{\delta}\right)\]
    Applying Lemma B.8 by \cite{pmlr-v139-metelli21a} we get that 
    \[N_k(s, \joa) \leq \frac{256\gamma^2 \maxR^2}{(1-\gamma)^4 \varepsilon_k^2} \log \left(\frac{128\gamma^2\maxR^2}{(1-\gamma)^4 \varepsilon_k^2} \sqrt{\frac{12\vert\gS \vert\prod_i \vert \gA^i \vert}{\delta}}\right).\]
    At each iteration we are allocating the samples uniformly over $\gS \times \gA$ and recalling that $\tau_{s,a} = S \prod_i \vert \gA^i \vert N_k(s,\joa) $ therefore we get 
    \[\tau \leq \frac{256 \vert\gS \vert\prod_i \vert \gA^i \vert\gamma^2 \maxR^2}{(1-\gamma)^4 \varepsilon_k^2} \log \left(\frac{128\gamma^2\maxR^2}{(1-\gamma)^4 \varepsilon_k^2} \sqrt{\frac{12\vert\gS \vert\prod_i \vert \gA^i\vert}{\delta}}\right)\]
    Now we only have to achieve that we have seen each state at least once, to correctly estimate the policies for every agent. Therefore, we force that $N_k(s) \geq 1$.
    As we here need to allocate samples uniformly over the state space only but for every agent separately and recalling that $\tau_{s} = \vert\gS \vert N_k(s)$, we get
    \[\tau_s \leq n\vert S \vert\]
    With $\tau = \tau_{s,a} + \tau_s$ we get in total 
    \begin{align*}
        \tau \leq &\frac{128 \vert\gS \vert\prod_i \vert \gA^i \vert\gamma^2 \maxR^2}{(1-\gamma)^4 \varepsilon_k^2} \log \left(\frac{64a\gamma^2\maxR^2}{(1-\gamma)^4 \varepsilon_k^2} \sqrt{\frac{12\vert\gS \vert\prod_i \vert \gA^i\vert}{\delta}}\right) +n \vert \gS \vert.
    \end{align*}
    This is exactly of order 
    \[\tilde{\mathcal{O}} \left( \frac{\gamma^2\maxR^2\vert\gS\vert \prod_{i=1}^n \vert \gA^i \vert}{(1-\gamma)^4 \varepsilon^2}\right)\]
\end{proof}

\section{Hardness result}

In this section, we want to quantify the non-expressiveness of the recovered feasible reward set under a single NE observation. Therefore, we give the following hardness result. The idea is that the observed NE only covers a part of the environment and the definition of NE only ensures robustness against single agent deviations. 

\begin{theorem}
    Let us consider any IRL algorithm $\mathrm{Alg}_{\mathrm{IRL}}$ that chooses $\hat{R} \in \mathcal{R}_{(\hat{\gG}, \hat{\jopolicy}^{\mathrm{Nash}})}$ that is not a constant reward, i.e. $\hat{R} \neq C$ for $C \in [-R_{\max}, R_{\max}].$ Furthermore consider a forward MARL algorithm $\mathrm{Alg}_{\mathrm{MARL}}$ that guarantees learning a policy $\tilde{\jopolicy} \in \Pi^{\mathrm{Nash}}.$ Then, there exists a Markov game, such that even if $\hat{\jopolicy} \in \Pi_{\mathrm{Nash}}$ and  $\hat{R} \in \gR_{(\gG,\Nash)}$ it holds true that $\mathcal{E}(\jopolicy')$ is of order $(1-\gamma)^{-1}.$ 
\end{theorem}

\begin{proof}
    We consider the following 2-player general-sum Markov game $\gG \cup R.$ Let $\gS = \{s_0,s_1, s_2,s_3,s_4\}$ with $\gA^1 = \gA^2 =\{a_1, a_2\}.$ Furthermore, let the transition dynamics be given by 
        
\begin{align*}
    &P(\cdot \mid s_0, \joa) = 
        \begin{dcases*}
                s_1 & \text{if $\joa = (a_1a_1)$,}
                \\
                s_2 & \text{if $\joa \in \{(a_1a_2), (a_2a_1)\}$,}
                \\
                s_3 & \text{if $\joa = (a_2,a_2)$,}
        \end{dcases*}
\end{align*}
In states $s_1,s_2,s_3$ the transition is defined to stay in the respective state with probability 1. Furthermore, let the true reward of the Markov game be given by
\begin{align*}
    &R^1(s_0, \joa) = 
        \begin{dcases*}
                3 & \text{if $\joa = (a_1a_1)$,}
                \\
                0 & \text{if $\joa \in \{(a_1a_2), (a_2a_1)\}$,}
                \\
                2 & \text{if $\joa = (a_2,a_2)$,}
        \end{dcases*}
        &R^2(s_0, \joa) = 
        \begin{dcases*}
                2 & \text{if $\joa = (a_1a_1)$,}
                \\
                0 & \text{if $\joa \in \{(a_1a_2), (a_2a_1)\}$,}
                \\
                3 & \text{if $\joa = (a_2,a_2)$,}
        \end{dcases*}
\end{align*}
For the other states we have that $R(s_1, \joa) = R(s_3, \joa) = 1  \, \forall \joa \in \mathcal{A}$ and $R(s_2, \joa) = 0  \, \forall \joa \in \mathcal{A}.$ This indicates that the Markov game has two pure NE strategies $\Nash_1$ with$ \pi_1(a_1 \mid s_0) = \pi_2(a_1 \mid s_0) = 1$ and  $\Nash_2$ with  $\pi_1(a_2 \mid s_0) = \pi_2(a_2 \mid s_0) = 1$ and any distribution in states $s_1,s_2,s_3.$ Note that this game can be seen as a Markov game extension of the Normal Form Game Battle of the Saxes that rewards the NE strategies in subsequent states. An illustration of this game can be found in \cref{fig:lowerbound}. Let us assume that the observed Nash equilibrium is $\Nash_1.$ Next, we apply any$\mathrm{Alg}_{\mathrm{IRL}}$ that returns $\hat{R} \in \gR_{(\hat{\gG}, \hat{\jopolicy}^{\mathrm{Nash}})}.$ Note, that the Nash equilibrium for the state $s_0$ and $s_1$ will be recovered perfectly and a potential $\hat{R}$ is given by
\begin{align*}
    &\hat{R}^1(s_0, \joa) = 
        \begin{dcases*}
                2 & \text{if $\joa = (a_1a_1)$,}
                \\
                -1 & \text{if $\joa = (a_1a_2)$,}\\
                2 & \text{if $\joa = (a_2a_1)$,}
                \\
                -2 & \text{if $\joa = (a_2,a_2)$,}
        \end{dcases*}
        &\hat{R}^2(s_0, \joa) = 
        \begin{dcases*}
                2 & \text{if $\joa = (a_1a_1)$,}
                \\
                2 & \text{if $\joa = (a_1a_2)$,}\\
                -1 & \text{if $\joa = (a_2a_1)$,}
                \\
                -2 & \text{if $\joa = (a_2,a_2)$,}
        \end{dcases*}
\end{align*}

Additionally, the rewards $\hat{R}(s_1, \joa) = 1, \hat{R}(s_3, \joa) = -1  \, \forall \joa \in \mathcal{A}$ and $R(s_2, \joa) = 1  \, \forall \joa \in \mathcal{A}.$ Then, we note that $\hat{\jopolicy}$ is indeed a NE under $\hat{R}$ and also in the true underlying environment $\gG \cup R.$

However, in the recovered Markov game $\hat{\gG} \cup \hat{R}$ there exists another pure equilibrium solutions $\Nash_3, \Nash_4 \notin \Pi_{\gG \cup R}^{\mathrm{Nash}},$ where for $\Nash_3$ we have $\pi^1_3(a_1 \mid s_0) = 1$ and $\pi^2_3(a_2 \mid s_0) = 1$ and $\Nash_4$ is given by $\pi^1_3(a_2 \mid s_0) = 1$ and $\pi^2_3(a_1 \mid s_0) = 1.$

If one now applies a forward MARL algorithm that guarantees convergence to a any (pure) NE defined by $\tilde{\jopolicy}$, i.e. satisfying
\[\mathcal{E}_{\hat{R}} = \max_{i \in \{1,2\}} \max_{\pi^i \in \Pi^i} V_{\hat{R}}^i(\pi^i, \tilde{\pi}^{-i}) -  V_{\hat{R}}^i(\tilde{\jopolicy}) = 0.\]
Then, assuming that $\tilde{\jopolicy}$ in the true Markov game it holds true that
\begin{align*}
    \mathbb{E}_{\mathrm{Alg}_{\mathrm{MARL}}}[\mathcal{E}_R(\tilde{\jopolicy})] 
    \geq \frac{1}{(1-\gamma)} \mathbb{P}(\tilde{\jopolicy} \neq \Nash_1).
\end{align*}
This holds true because for $\tilde{\jopolicy} = \Nash_1,$ we have that $\mathcal{E}(\Nash_1) = 0$ as this is also a NE in the original Markov game. However for $\Nash_3$ and  $\Nash_4$ which both go to state $s_2$ a Best response would either be to go $s_1$ or $s_3$ resulting in a exploitability for 1 for all future states. Assuming that the algorithm returns a NE uniformly across the set of NE, we get 
\begin{align*}
    \mathbb{E}_{\mathrm{Alg}_{\mathrm{MARL}}}[\mathcal{E}_R(\tilde{\jopolicy})] 
    \geq \frac{1}{(1-\gamma)} \mathbb{P}(\tilde{\jopolicy} \neq \Nash_1) = \frac{2}{3(1-\gamma)}.
\end{align*}
This is exactly of the order $(1-\gamma)^{-1}$ and completes the proof. 
\end{proof}

Next, we want to provide further intuition on this phenomenon by giving the Normal Form Game that is the origin of the considered Markov game instance.

\begin{example}
We consider the general form of a coordination game as a Normal Form Game (NFG):
    \begin{center}
    \label{Fig:Stag Hunt}
    \[
    \begin{array}{c|c|c}
        & \text{Player 2: Stag} & \text{Player 2: Hare} \\
        \hline
        \text{Player 1: Stag} & (A, A) & (C, B) \\
        \hline
        \text{Player 1: Hare} & (B, C) & (D, D) \\
    \end{array}
    \]
    \end{center}
In general coordination games, we have that $D > B$ and $D-B < A+D-B-C$.
    Assume we observe the pure Nash equilibrium strategy \((\text{Stag}, \text{Stag})\). The feasible reward set \(\gR\) then contains all rewards that satisfy:
\[
R^1(\text{Stag}, \text{Stag}) \geq R^1(\text{Hare}, \text{Stag})  \land  R^2(\text{Stag}, \text{Stag}) \geq R^2(\text{Stag}, \text{Hare}),
\]
while for all other reward values, \textbf{any} rewards are feasible, i.e., \(R^1(\text{Stag}, \text{Hare})\), \(R^1(\text{Hare}, \text{Hare}) \in \mathbb{R}^{\gA^1 \times \gA^2}\). 

This flexibility in reward specification allows for undesirable scenarios (see \cref{Fig:antiexample}), such as:
\begin{itemize}[noitemsep]
    \item \textbf{Changing the nature of the game}: The game can transform into an anti-coordination variant with additional pure Nash equilibria not present in the original game.
    \item \textbf{Losing equilibria}: Rewards can be defined so that "Stag" becomes the unique dominant strategy for player 1.
\end{itemize}
The following are two examples of feasible rewards if observing the NE expert \((\text{Stag}, \text{Stag})\):
    \begin{center}
     \label{Fig:antiexample}
    \vspace{-.6cm}
    \[
    \begin{array}{c|c|c}
        & \text{Player 2: Stag} & \text{Player 2: Hare} \\
        \hline
        \text{Player 1: Stag} & (2, 2) & (0, 0) \\
        \hline
        \text{Player 1: Hare} & (0, 0) & (-1, -1) \\
    \end{array}
\]
\[
    \begin{array}{c|c|c}
        & \text{Player 2: Stag} & \text{Player 2: Hare} \\
        \hline
        \text{Player 1: Stag} & (2, 2) & (-1, 2) \\
        \hline
        \text{Player 1: Hare} & (2, -1) & (-10, -10) 
    \end{array}
    \]
\end{center}
\end{example}

This example highlights that even in simple NFGs, the feasible reward set encompasses too many reward functions, including those that significantly alter the game's equilibria. This contrasts with the single-agent IRL setting, where the feasible reward set contains degenerate rewards like constant ones, but due to the fact that all equilibria obtain the same value, preserving the meaning of the environment. In the multi-agent setting, this second source of ambiguity allows for strategic behavior entirely absent in the original game, which is highly undesirable if the goal is to recover meaningful reward functions for transfer to new environments.

\section{Proofs of \cref{section:regularizedMAIRL}}
\label{sec:proofs_regularized}
In this section, we will give the missing proofs of \cref{section:regularizedMAIRL}. We start by giving again the definition of a feasible reward function for an observed pair of expert policies. In particular, if the observed expert policy is a QRE equilibrium.

\begin{definition}[Feasible Reward Set (regularized)]
    A reward function $R$ is feasible for an MAIRL problem $(\gG, (\mu^*, \nu*))$ if and only if the observed policy pair forms an equilibrium in $\gG \cup R.$ 
\end{definition}

Additionally, we will restate \cref{optimal} in terms of regularized games.
\begin{definition}[Regularized Optimality Criterion]
    \label{def:optimal_qre}
    Let $\gR:=\gR_{(\gG,(\mu^*, \nu^*))}$ be the exact feasible set and $\hat{\gR}:= \gR_{(\hat{\gG}, (\hat{\mu}^*, \hat{\nu}^*))}$ the recovered feasible set after observing \( N \geq 0 \) samples from the underlying MAIRL problem $\MAIRL$. We consider an algorithm to be \( (\varepsilon, \delta, N) \)-correct after observing \( N \) samples if with a probability of at least \( 1 - \delta \) it holds:
    \begin{align*}
        & \sup_{R \in \gR} \inf_{\hat{R} \in \hat{\gR}} \max \{\max_{\mu_\lambda} V^1_\lambda(\mu, \hat{\nu}) - V^1_\lambda(\hat{\mu}, \hat{\nu}), \max_{\nu_\lambda} V^2_\lambda(\hat{\mu}, \nu) - V^2_\lambda(\hat{\mu}, \hat{\nu}) \}\leq \varepsilon \, \\ &\sup_{\hat{R} \in \hat{\gR}} \inf_{R \in \gR} \max \{\max_{\mu_\lambda} V^1_\lambda(\mu, \hat{\nu}) - V^1_\lambda(\hat{\mu}, \hat{\nu}), V^2_\lambda(\hat{\mu}, \hat{\nu}) - \min_{\nu_\lambda} V^2_\lambda(\hat{\mu}, \nu ) \}\leq \varepsilon,
    \end{align*}
    where we used $\mu_{\lambda}, \nu_{\lambda}$ to denote entropy regularized policies.
\end{definition}
This optimality criterion can be seen as the \emph{soft} version of the Nash Imitation Gap(\cref{def:Nash_gap}).

Next, note that a policy is considered optimal, i.e. a QRE equilibrium, if the policies satisfy 
\begin{align}
\label{def:optpolicy}
    & \mu^*(a \mid s) = \frac{\exp\left( \frac{1}{\lambda} \sum_{b' \in \gB} \nu^*(b' \mid s) Q^{*,1}_{\lambda}(s,a,b')\right)}{\sum_{a' \in \gA}\exp\left( \frac{1}{\lambda} \sum_{b' \in \gB} \nu^*(b' \mid s) Q^{*,1}_{\lambda}(s,a',b')\right)}, \\
    & \nu^*(b \mid s) = \frac{\exp\left( \frac{1}{\lambda} \sum_{a' \in \gA} \mu^*(a' \mid s) Q^{*,2}_{\lambda}(s,a',b)\right)}{\sum_{b' \in \gA}\exp\left( \frac{1}{\lambda} \sum_{a' \in \gA} \mu^*(a' \mid s) Q^{*,2}_{\lambda}(s,a',b')\right)}.
\end{align}

Similarly to the analysis done in the single-agent setting the goal is to derive an explicit characterization of the reward function \citep{metelli2023theoretical, lindner2023active, metelli2023theoretical, zhao2024inverse, cao2021identifiabilityinversereinforcementlearning}. The idea is to rewrite the formulation of the optimal policy in terms of the reward function by using the definition of the value function and the $Q$-function. We will present the analysis only for player $1$, it holds analogously for player 2. For a better readability we drop the superscript for the player. 

\begin{lemma}[Feasible Explicit (regularized)]
\label{lem:average_explicit}
    A reward function $R$ for the regularized Markov game i feasible if and only if there exists $V \in \R^{\gS}$ and $|\gB|-1$ many functions $R' \in [-R_{\max}, R_{\max}]^{\gS \times \gA \times \gB}$ such that for all $(s,a,b)$ it holds that
    \[
\resizebox{\textwidth}{!}{$
R(s,a, b) 
= \frac{1}{\nu^*(b \mid s)
}\left(\lambda \log(\mu^*(a \mid s)) + V(s)  - \gamma\sum_{s'} \sum_{b' \in \gB} \nu^*(b' \mid s) P(s' \mid s, a, b') V(s') - \sum_{b'\neq b} \nu^*(b' \mid s) R(s, a, b')\right).
$}
\] 
\end{lemma}

\begin{proof}
First, assume that the reward function $R$ is feasible. By definition, this implies that $\mu^*$ is an optimal policy for agent 1 under $R$ when agent 2 plays $\nu^*$. Let $V^*_{\lambda}(s)$ be the corresponding unique entropy-regularized optimal value function for agent 1. The optimal policy $\mu^*(a \mid s)$ (see \cref{def:optpolicy}) is given by
\begin{align*}
    \mu^*(a \mid s) &= \frac{\exp\left( \frac{1}{\lambda} \sum_{b' \in \gB} \nu^*(b' \mid s) Q^{*}_{\lambda}(s,a,b')\right)}{\sum_{a' \in \gA}\exp\left( \frac{1}{\lambda} \sum_{b' \in \gB} \nu^*(b' \mid s) Q^{*}_{\lambda}(s,a',b')\right)}.
\end{align*}
Recognizing that the denominator relates to the soft value function $V^*_{\lambda}(s) = \lambda \log \sum_{a \in \gA}\exp\left( \frac{1}{\lambda} \sum_{b' \in \gB} \nu^*(b' \mid s) Q^{*}_{\lambda}(s,a,b')\right)$, we can write
\begin{align*}
    \mu^*(a \mid s) &= \exp\left( \frac{1}{\lambda} \left(\sum_{b' \in \gB} \nu^*(b' \mid s)Q^*_{\lambda}(s,a,b') - V^*_{\lambda}(s)\right)\right).
\end{align*}
Using the definition of the Q-function, $Q^*_{\lambda}(s,a,b') = R(s,a,b') + \gamma \sum_{s' \in \gS} P(s' \mid s,a,b')V^*_{\lambda}(s')$, we substitute this into the expression for $\mu^*(a \mid s)$:
\begin{align*}
    &\mu^*(a \mid s) \\
    &= \exp\left( \frac{1}{\lambda} \left(\sum_{b' \in \gB} \nu^*(b' \mid s)R(s,a,b') + \gamma \sum_{b' \in \gB} \nu^*(b' \mid s) \sum_{s' \in \gS} P(s' \mid s,a,b')V^*_{\lambda}(s') - V^*_{\lambda}(s)\right)\right).
\end{align*}
Taking the logarithm and rearranging terms yields
\begin{align*}
    &\resizebox{\textwidth}{!}{$
\sum_{b' \in \gB} \nu^*(b' \mid s)R(s,a,b') = \lambda \log(\mu^*(a \mid s)) + V^*_{\lambda}(s) - \gamma \sum_{b' \in \gB} \nu^*(b' \mid s) \sum_{s' \in \gS} P(s' \mid s,a,b')V^*_{\lambda}(s').
    $}
\end{align*}
Let $K_{V^*_{\lambda}}(s,a)$ denote the right-hand side of the equation above. Then, for any specific action $b \in \gB$ such that $\nu^*(b \mid s) > 0$, we can express $R(s,a,b)$ as
\begin{align*}
    R(s,a, b) = \frac{1}{\nu^*(b \mid s)} \left(K_{V^*_{\lambda}}(s,a) - \sum_{b' \in \gB \setminus \{b\}} \nu^*(b'\mid s) R(s, a, b')\right).
\end{align*}
This matches the form specified in the lemma, where $V(s)$ is taken as $V^*_{\lambda}(s)$, and the $|\gB|-1$ functions $R'$ correspond to the components $R(s,a,b')$ for $b' \neq b$ from the original feasible reward $R$.

For the opposing direction, assume there exists an arbitrary function $V \in \R^{\gS}$ and a reward function $R$ (composed of $|\gB|-1$ given functions $R'(s,a,b')$ for $b' \neq b$ that are within $[-R_{\max}, R_{\max}]$, and the remaining component $R(s,a,b)$ defined by the formula) such that for all $(s,a,b)$ it holds that
\begin{align*}
R(s,a, b) = \frac{1}{\nu^*(b \mid s)} &\left(\lambda \log(\mu^*(a \mid s)) + V(s) - \gamma\sum_{s' \in \gS} \sum_{b'' \in \gB} \nu^*(b'' \mid s) P(s' \mid s, a, b'') V(s') \right.\\
&\left.\quad - \sum_{b' \in \gB \setminus \{b\}} \nu^*(b' \mid s) R(s, a, b')\right).
\end{align*}
This structural definition implies that the expected reward for agent 1, $R^{\nu^*}(s,a) = \sum_{b' \in \gB} \nu^*(b' \mid s) R(s,a,b')$, satisfies
\begin{align}
\label{eq:expected_reward_form}
    R^{\nu^*}(s,a) = \lambda \log(\mu^*(a \mid s)) + V(s) - \gamma\sum_{s' \in \gS} \sum_{b'\in \gB} \nu^*(b' \mid s) P(s' \mid s, a, b') V(s').
\end{align}
Let $P^{\nu^*}(s' \mid s,a) = \sum_{b' \in \gB} \nu^*(b' \mid s) P(s' \mid s,a,b')$ be the expected transition probability from agent 1's perspective. Then \eqref{eq:expected_reward_form} becomes
$R^{\nu^*}(s,a) = \lambda \log(\mu^*(a \mid s)) + V(s) - \gamma \sum_{s' \in \gS} P^{\nu^*}(s' \mid s,a) V(s')$.
We now show that $R$ is feasible by demonstrating that $V(s)$ is the value function of policy $\mu^*$ for agent 1 (given agent 2 plays $\nu^*$) and that $\mu^*$ is the optimal policy.

First, let $V^{\mu^*}(s)$ be the value function for agent 1 when it follows policy $\mu^*$ and agent 2 follows $\nu^*$, with rewards $R(s,a,b)$. The Bellman equation for $V^{\mu^*}(s)$ is given by
\begin{align*}
    V^{\mu^*}(s) &= \sum_{a \in \gA} \mu^*(a|s) \left( R^{\nu^*}(s,a) + \gamma \sum_{s' \in \gS} P^{\nu^*}(s'|s,a) V^{\mu^*}(s') - \lambda \log \mu^*(a|s) \right).
\end{align*}
Substituting the expression for $R^{\nu^*}(s,a)$ from \eqref{eq:expected_reward_form}:
\begin{align*}
    V^{\mu^*}(s) &= \sum_{a \in \gA} \mu^*(a|s) \left( \left[ \lambda \log(\mu^*(a \mid s)) + V(s) - \gamma \sum_{s' \in \gS} P^{\nu^*}(s' \mid s,a) V(s') \right] \right. \\
    &\qquad \left. + \gamma \sum_{s' \in \gS} P^{\nu^*}(s'|s,a) V^{\mu^*}(s') - \lambda \log \mu^*(a|s) \right) \\
    &= \sum_{a \in \gA} \mu^*(a|s) \left( V(s) - \gamma \sum_{s' \in \gS} P^{\nu^*}(s' \mid s,a) V(s') + \gamma \sum_{s' \in \gS} P^{\nu^*}(s'|s,a) V^{\mu^*}(s') \right).
\end{align*}
Since $\sum_{a \in \gA} \mu^*(a|s) = 1$, we have
\begin{align*}
    V^{\mu^*}(s) = V(s) + \gamma \sum_{a \in \gA} \mu^*(a|s) \sum_{s' \in \gS} P^{\nu^*}(s' \mid s,a) (V^{\mu^*}(s') - V(s')).
\end{align*}
Let $g(s) = V^{\mu^*}(s) - V(s)$. Then $g(s) = \gamma \sum_{a \in \gA} \mu^*(a|s) \sum_{s' \in \gS} P^{\nu^*}(s' \mid s,a) g(s')$. This equation, $g = \gamma \mathcal{P}_{\mu^*,\nu^*} g$, where $\mathcal{P}_{\mu^*,\nu^*}$ is the Bellman operator for policy evaluation, implies that $g(s)=0$ is the unique solution since $\gamma \in [0,1)$ ensures $\mathcal{P}_{\mu^*,\nu^*}$ is a contraction. Thus, $V^{\mu^*}(s) = V(s)$ for all $s \in \gS$.

Next, we show that $\mu^*(a|s)$ is the entropy-regularized optimal policy for agent 1. The optimal policy, $\pi^{*,1}(a|s)$, is given by $\pi^{*,1}(a|s) \propto \exp\left(\frac{1}{\lambda} E_Q^{*,1}(s,a)\right)$, where $E_Q^{*,1}(s,a)$ is the expected Q-value using the optimal value function $V(s)$:
\begin{align*}
    E_Q^{*,1}(s,a) &= R^{\nu^*}(s,a) + \gamma \sum_{s' \in \gS} P^{\nu^*}(s'|s,a) V(s').
\end{align*}
Substituting the expression for $R^{\nu^*}(s,a)$ from \eqref{eq:expected_reward_form}:
\begin{align*}
    E_Q^{*,1}(s,a) &= \left[ \lambda \log(\mu^*(a \mid s)) + V(s) - \gamma \sum_{s' \in \gS} P^{\nu^*}(s' \mid s,a) V(s') \right] + \gamma \sum_{s' \in \gS} P^{\nu^*}(s'|s,a) V(s').
\end{align*}
Rewriting this expression gives exactly the form of an optimal policy in the entropy regularized Markov game. Since $\mu^*(a|s)$ is the optimal policy for agent 1 under the reward $R$ (when agent 2 plays $\nu^*$), the reward function $R$ is feasible. This completes the proof.
\end{proof}

In the following, we want to investigate how an error in estimating the expert policy pair and the transition function translates to the recovered reward function. Note that compared to the single-agent case it is required to estimate the policy of opponent accurately as well. 

The next lemma is of great importance, to derive the error propagation. It states how estimating the induced transition probability is related to the joint transition probability.

\begin{lemma}
    \label{induced transition}
    Let  $V : \gS \to \R$ be a function. Furthermore, let $P^{\nu}$ be the induced transition probability and $\widehat{P}^{\nu}$ the estimated one of an underlying Markov game with transition dynamic $P$. With out loss of generality, we assume that we are fixing the policy of agent $2$, i.e. $\nu$.  Then it holds true, that
    \begin{align*}
    &| \sum_{s'} P^{\nu}(s' \mid s,a)V(s')  - \sum_{s'}\widehat{P}^{\nu}(s' \mid s,a) V(s')| \\
    &\leq \max_b| \sum_{s'}V(s')\left( P(s' \mid s,a,b) - \hat{P}(s' \mid s,a,b)\right)|
    \end{align*} 
\end{lemma}
\begin{proof}
    \begin{align*}
        &| \sum_{s'} P^{\nu}(s' \mid s,a)V(s)  - \sum_{s'}\widehat{P}^{\nu}(s' \mid s,a) V(s')|\\
        &= | \sum_{s'} \sum_{b \in \gB}\nu(b \mid s)P(s' \mid s,a,b)V(s')  - \sum_{s'} \sum_{b \in \gB}\hat{\nu}(b \mid s)\widehat{P}(s' \mid s,a,b) V(s')| \\
        &\leq \sum_{b \in \gB} \max(\nu(b \mid s), \hat{\nu}(b \mid s)) |\sum_{s'} P(s' \mid s,a,b)V(s')  - \sum_{s'} \widehat{P}(s' \mid s,a,b) V(s')| \\
        & \leq \max_b|\sum_{s'} P(s' \mid s,a,b)V(s')  - \sum_{s'} \widehat{P}(s' \mid s,a,b) V(s')| \\
        &= \max_b|\sum_{s'} V(s') \left(P(s' \mid s,a,b) -\widehat{P}(s' \mid s,a,b)\right)|
    \end{align*}
\end{proof}

With the introduced Lemma, we can now introduce an error propagation theorem. The idea is that we use the explicit reward function from \cref{lem:average_explicit} and bound the individual terms of the true underlying MAIRL problem and the estimated one.
\begin{theorem}[Error propagation]
\label{thm:error_propagation}
    Let the MAIRL problem be given by $(\gG, (\mu^*, \nu^*))$ for a Markov game and let $(\hat{\gG}, (\hat{\mu}^*, \hat{\nu}^*)$ be another MAIRL problem. Then, we have that
    \begin{align*}
        |R(s,a,b) - \hat{R}(s,a,b)| &\leq \frac{2}{\nu^*(b \mid s)\hat{\nu}^*(b \mid s)}\bigg(\lambda |\log\mu^*(a \mid s) - \log\hat{\mu}^*(a \mid s)|  \\
        & \quad + \gamma \max_{b} |\sum_{s'} V(s')P(s' \mid s,a,b) - \hat{P}(s' \mid s,a,b)| + \maxR \mathrm{TV}(\nu, \hat{\nu})\bigg)
    \end{align*}
\end{theorem}
\begin{proof}
In the first step, we use the derived explicit form of the reward derived in  \cref{lem:average_explicit}.
    \begin{align*}      
\hat{R}(s, a, b) = \frac{1}{\hat{\nu}^*}(b \mid s) &\bigg(\lambda\log(\hat{\mu}^*(a \mid s) )+ \hat{V}(s) \\
&- \gamma \sum_s \sum_{b'}\hat{\nu}^*(b' \mid s)\hat{P}(s'\mid s,a,b') \hat{V}(s') - \sum_{b'} \nu^*(b' \mid s)\hat{R}(s,a,b') \bigg)\\
 \end{align*}
 As pointed out in \cite{metelli2023theoretical}, the rewards $\hat{R}(s, a,b)$ do not have to be bounded by the same $R_{\max}$ as $R(s, a, b)$. To fix this issue the authors point out, that the reward needs to be rescaled such that the recovered feasible reward set is bounded by the same value. In our case we have to be a bit more careful with the choice of the scaling, as we did not assume that the reward is bounded by 1.  
    As we proof the existence of such reward function, we can choose $\tilde{V}(s) = V(s)$ for every $s \in \gS$ and $\tilde{R}(s, a,b') = R(s, a, b') \forall b' \neq b$ for every $(s, a,b) \in \gS \times \gA \times \gB$, which results in rewards 
    \begin{align*}
        &\tilde{R}(s,a,b) =\\
        &\frac{\lambda\log(\hat{\mu}^*(a \mid s) ) + V(s) - \gamma \sum_s \sum_{b'}\hat{\nu}^*(b \mid s)\hat{P}(s'\mid s,a,b') V(s') - \sum_{b' \neq b}\hat{\nu}^*(b' \mid s)R(s,a, b')}{\hat{\nu}^*(b \mid s)}.
    \end{align*}
    Now we need to rescale the reward with $\maxR + \vert\varepsilon^i(s, a, b)\vert$ respectively, 
\begin{align*}
   & \varepsilon(s, a,b) \\
   &= \explicit\\
   & \quad- \estexplicit,
\end{align*}
such that it remains bounded by $\maxR$, we receive $\hat{R}(s, a,b) = \tilde{R}(s, a,b)\frac{\maxR}{\maxR + \vert\varepsilon(s, a,b)\vert}.$

    It then follows that:
    \begin{align*}
         \vert R(s,a,b) - \hat{R}(s, a,b) \vert &= \vert R(s,a,b) - \frac{\maxR\tilde{R}(s,a,b)}{\maxR + \vert\varepsilon(s, a,b)\vert} \vert\\&
         \leq \frac{\maxR}{\maxR+\vert\varepsilon(s, a,b)\vert}\left\vert \left(\frac{\maxR+\vert\varepsilon(s, a,b)\vert}{\maxR}\right)R(s,a,b) - \tilde{R}(s, a,b)\right\vert \\
         & \leq \frac{\maxR}{\maxR+\vert\varepsilon(s, a,b)\vert}\left(\vert  R(s,a,b) - \tilde{R}(s, a,b)\vert + \vert\frac{\varepsilon(s, a,b)}{\maxR} R(s, a,b)\vert \right)\\
         & \leq \frac{\maxR}{\maxR+\vert\varepsilon(s, a,b)\vert} \left(\vert  \varepsilon(s,a,b)\vert + \vert\varepsilon(s,a,b)\vert\right)\\
         &\leq \frac{\maxR}{\maxR} \left(\vert  \varepsilon(s,a,b)\vert + \vert\varepsilon(s,a,b)\vert\right) \\
         &= 2 |\varepsilon(s,a,b)|
    \end{align*}
    In the next step, we bound the $|\varepsilon(s,a,b)|$
    \begin{align*}
       & |\varepsilon(s,a,b)| \\
       &= \explicit\\
        &- \estexplicit\\
    &\leq \frac{1}{\nu^*(b \mid s)\hat{\nu}^*(b \mid s)}\left(\lambda |\log\mu^*(a \mid s) - \log\hat{\mu}^*(a \mid s)| \right. \\
    &\left.+\gamma  \left\vert\sum_{s'} \sum_{b'} (\nu(b' \mid s)P(s' \mid s, a,b') - \hat{\nu}^*(b' \mid s)\hat{P}(s, a,b'))V(s')\right\vert \right.\\
    &+ \left.\left\vert\sum_{b' \neq b} R(s,a,b')(\nu^*(b' \mid s)-\hat{\nu}^*(b \mid s))\right\vert\right) \\
    &\overset{(i)}{\leq}\frac{1}{\nu^*(b \mid s)\hat{\nu}^*(b \mid s)}\left( \lambda |\log\mu^*(a \mid s) - \log\hat{\mu}^*(a \mid s)| \right.\\
    &\left.+ \gamma  \left\vert \max_{b' \in \gB}\vert\sum_{s'}  P(s' \mid s, a,b') - \hat{P}(s, a,b'))V(s')\right\vert  + \left\vert\sum_{b' \neq b} R(s,a,b')(\nu^*(b' \mid s)-\hat{\nu}^*(b \mid s))\right\vert\right) \\
    &\overset{(ii)}{\leq} \frac{1}{\nu^*(b \mid s)\hat{\nu}^*(b \mid s)}\left(\lambda |\log\mu^*(a \mid s) - \log\hat{\mu}^*(a \mid s)|\right. \\
    &\left. + \gamma  \left\vert \max_{b' \in \gB}\sum_{s'}  P(s' \mid s, a,b') - \hat{P}(s, a,b'))V(s')\right\vert + \maxR\sum_{b' \neq b}\left\vert(\nu^*(b' \mid s)-\hat{\nu}^*(b \mid s))\right\vert\right)\\
    &\leq\frac{1}{\nu^*(b \mid s)\hat{\nu}^*(b \mid s)}\left( \lambda |\log\mu^*(a \mid s) - \log\hat{\mu}^*(a \mid s)| \right.\\
    &\left. \quad+ \gamma  \left\vert \max_{b' \in \gB}\vert\sum_{s'}  P(s' \mid s, a,b') - \hat{P}(s, a,b'))V(s')\right\vert + \maxR\sum_{b' \neq b} \mathrm{TV}(\nu^*, \hat{\nu}^*)\right),
    \end{align*}
    where we used \cref{induced transition} for $(i)$, then the assumption that $R(s,a,b)$ is bounded by $\maxR$ in $(ii)$ and last, we added $\vert \nu^*(b \mid s) - \hat{\nu}^*(b \mid s)\vert$ to obtain the definition of the total variation with the triangle inequality.
\end{proof}

We now again, want to use the empirical estimators to do a sample complexity analysis. The part of the transition probability can be obtained similar to the case of the pure NE, while for the policy we need to bound with high probability
\[ |\log(\mu^*(a \mid s)) - \log(\hat{\mu}^*(a \mid s))|, \, |\log(\nu^*(a \mid s)) - \log(\hat{\nu}^*(a \mid s))|.\]

Let us first introduce the assumption, also common in single-agent IRL, that the lowest probability of an action taken from the expert is bounded away from zero by some constant.
\begin{assumption}
\label{assumption:lowerprob}
    Let $\mu^*, \nu^*$ be the QRE equilibrium expert policies. Then we assume that 
    \[\min_{a\in \gA, b \in \gB}(\mu^*(a\mid s), \nu^*(b \mid s)) \geq \Delta_{\min}.\]
\end{assumption}

Now we are introducing the empirical estimators, used for recovering the MAIRL problem.

For both estimation tasks, the expert policies and the transition probability, we employ empirical estimators. For each iteration \( k \in [K] \), let \( n_k(s, a,b, s') = \sum_{t=1}^{k} \mathbf{1}_{(s_t, a_t,b_t, s'_t) = (s,a,b,s')} \) denote the count of visits to the triplet \( (s, a,b, s') \in \gS \times (\gA \times \gB)\times \gS \), and let \( n_k(s, a,b) = \sum_{s' \in \gS} n_k(s, a,b, s') \) denote the count of visits to the state-action pair \( (s, a) \). Additionally, we introduce \( n_{k}(s, a) = \sum_{t=1}^{k} \mathbf{1}_{(s_t, a_t) = (s,a)} \) and \( n_{k}(s, b) = \sum_{t=1}^{k} \mathbf{1}_{(s_t, b_t) = (s,b)} \) as the count of times action \( a \) and respectively  \( b \) was sampled in state \( s \in \gS \) for each agent \( i \), and \( n_k(s) = \sum_{a \in \gA} n_{k}(s, a) \) as the count of visits to state \( s \) for any agent.

It is important to note the distinction here: the count of actions must be done separately for each agent, whereas the count of state visits needs to be done for both of the agents.

The cumulative count of actions for each agent and the cumulative state visit count are given by:
\begin{equation*}  
N_{k}(s, a, b) = \sum_{j \in [k]} n_{j}(s, a, b)  \quad N(s) = \sum_{j \in [k]} n_j(s). 
\end{equation*}
\normalsize
After introducing the empirical counts, we can now state the empirical estimators for the transition model and the expert's policy:
\begin{align}
    & \label{estprop} \hat{P}_k(s' \mid s, a,b) = \begin{cases}    
        \frac{N_k(s, a,b, s')}{N_k(s, a,b)} \quad & \text{if} \quad N_k(s, a,b) > 0 \\
        \frac{1}{S} & \text{otherwise}
     \end{cases} \\
    & \label{estpol} \hat{\mu}_{k}(a \mid s) = \begin{cases}
        \frac{N_{k}(s, a)}{N_k(s)} \quad & \text{if} \quad N_k(s) > 0 \\
        \frac{1}{\vert \gA \vert} \quad & \text{otherwise}.
    \end{cases} \\
    &\hat{\nu}_{k}(b \mid s) = \begin{cases}
        \frac{N_{k}(s, b)}{N_k(s)} \quad & \text{if} \quad N_k(s) > 0 \\
        \frac{1}{\vert \gB \vert} \quad & \text{otherwise}.
    \end{cases}
\end{align}

Next, we state the lemma that derives the good event. Note that here we prove something stronger regarding the transition model, i.e. that the good event holds for all $s,a,b$, therefore also for $\max_{b \in \gB}.$ \\
\begin{lemma}[Good Event Regularized Games]
    \label{lemma:good_event}
    Let $k$ be the number of iterations and $(\mu^*, \nu^*)$ be the QRE expert policies. Furthermore let $(\hat{\mu},\hat{\nu})$ and $\hat{P}$ be the empirical estimates of the Nash expert and the transition probability after $k$ iterations as defined in \cref{estpol} and \cref{estprop} respectively. Then for $\delta \in (0,1)$, define the good event $\mathcal{E}$ as the event such that the following inequalities hold simultaneously for all $(s,a,b) \in \gS \times \gA \times \gB$ and $k\geq 1$, which holds with probability at least $1- \delta$:
\begin{align*}
    \left\vert \log(\mu(a \mid s)) - \log(\hat{\mu}(a \mid s))\right\vert & \leq \frac{1}{\Delta_{\min}} \sqrt{\frac{2\log(10\vert\gS \vert\vert \gA\vert (N_k^+(s))^2/\delta)}{N_k^+(s)}}, \\
    \sum_{s'} \left\vert (P(s' \mid s,a,b) - \hat{P}_k(s' \mid s,a,b)) V(s') \right\vert & \leq \frac{\maxR}{1-\gamma} \sqrt{\frac{8 l_k(s,a,b)}{N_{k}^{+}(s,a,b)}}, \\
    \sum_{s'} \left\vert (P(s' \mid s,a,b) - \hat{P}_k(s' \mid s,a,b)) \hat{V}(s') \right\vert & \leq \frac{\maxR}{1-\gamma} \sqrt{\frac{8 l_k(s,a,b)}{N_{k}^{+}(s,a,b)}} \\
    \sum_{b \in \gB }\vert \nu(b\mid s) - \hat{\nu}(b \mid s)\vert &\leq\sqrt{\frac{2 \vert \gB\vert\log(10\vert \gS\vert \vert \gB\vert (N_k^+(s))^2/\delta)}{N_k^+(s)}} \\
    \frac{1}{\hat{\nu}(b \mid s)\nu(b \mid s)} &\leq \frac{1}{\Delta_{\min}^2} \sqrt{2 \frac{\log(10\vert \gS\vert\vert \gB\vert (N_k^+(s))^2/\delta)}{N^+_k(s)}}
\end{align*}
    where we introduced $l_k(s,a,b) := \log\left(\frac{30\vert\gS\vert \vert \gA \vert \vert \gB \vert (N_{k}^{+}(s,a,b))^2}{\delta}\right)$. 
\end{lemma}

\begin{proof}
    We start the proof by defining the good event for the transition model, which proceeds in a similar way as already seen in \cref{lemma:pure good event}.
    We start with bound of the transition dynamics. Note that here we prove something stronger, that the good event holds for all $s,a,b$, therefore also for $\max_{b \in \gB}$.  Therefore we define $l_k(s,a,b) := \log\left(\frac{30 \vert\gS\vert \vert \gA \vert \vert \gB \vert (N_{k}^{+}(s,a,b))^2}{\delta}\right)$ and additionally we denote $\beta_{N_k(s,a,b)} = \frac{ \gamma\maxR}{1-\gamma} \sqrt{\frac{2 l_k(s,a,b)}{N_{k}^{+}(s,a,b)}}$. Now we define the set
   \[
\mathcal{E}^{\mathrm{trans}}
:=\bigl\{\forall k\in\mathbb N,\forall(s,a,b)\in\gS\times\gA\times\gB:\;
\sum_{s'}\bigl|P(s'\!\mid\!s,a,b)-\hat P(s'\!\mid\!s,a,b)\bigr|\,V(s')
\le\beta_{N_k(s,a,b)}\bigr\}.
\]
Then we get for $V$ with probability of $1-\delta$:
\begin{align*}
    &\mathbb{P}\left((\mathcal{E}^{\text{trans}})^C\right) \\
    &= \mathbb{P}\left(\exists k \geq 1, \exists(s,a,b)\in \gS \times \gA \times \gB: \right.\\
    &\left. \qquad\sum_{s'}\left\vert \left( P(s' \mid s, a,b)-\hat{P}_k(s' \mid s, a,b) \right)V(s') \right\vert > \beta_{N_k(s,a,b)}(s,a,b)\right)
    \\ & \overset{(a)}{\leq}  \sum_{(s,a,b) \in \gS \times \gA \times \gB}\mathbb{P}\left(\exists k \geq 1: \sum_{s'}\left\vert \left( P(s' \mid s, a,b)-\hat{P}_k(s' \mid s, a,b) \right)V(s') \right\vert > \beta_{N_k(s,a,b)}\right) \\
    & \overset{(b)}{\leq}  \sum_{(s,a,b) \in \gS \times \gA \times \gB}\mathbb{P}\left(\exists m \geq 0: \sum_{s'}\left\vert \left( P(s' \mid s, a,b)-\hat{P}_k(s' \mid s, a,b) \right)V(s') \right\vert > \beta_{N_k(s,a,b)}\right)  \\
    &  \overset{(c)}{\leq} \sum_{(s,a,b) \in \gS \times \gA \gB} \sum_{m \geq 0} 2 \exp\left(\frac{ \beta_{N_k(s, a,b)}^2 m^2 (1-\gamma)^2 }{4 m \gamma^2 R_{\max}^2}\right) \\
    & \leq \sum_{(s,a,b) \in \gS \times \gA \times \gB} \sum_{m \geq 0} \frac{\delta}{15 \vert\gS\vert  \vert\gA\vert \vert\gB\vert (m^+)^2} \\
    & \leq \frac{\delta}{15}\left(1 + \frac{\pi^2}{6}\right) \leq \frac{\delta}{5},
\end{align*}
where (a) uses a union bound over the state and joint-action space, (b) uses that we only consider the $m$-times, where we updated the estimated transition model and (c) uses an union bound over the update times $m$ and an application of Hoeffding's inequality combined with the fact that we can bound the value function, i.e. $V^{i}(s') \leq \frac{\gamma\maxR}{1-\gamma}$ for every $s' \in \gS$.
Next, we will consider the first equation regarding estimating the log probability of the expert policy. In a first step we define $\beta_2(s) := \frac{1}{\Delta_{\min}} \sqrt{\frac{\log(10\vert \gS \vert\vert \gA \vert (N_k^+(s))^2 /\delta)}{2N_k^+(s)}}$ the following set 
\[
\mathcal{E}^{\log} := \left\{\forall k \in \mathbb{N}, \, \forall (s,a) \in \gS \times \gA: \left\vert \log(\mu(a \mid s)) - \log(\hat{\mu}(a \mid s))\right\vert  \leq  \beta_2(s) \right\}.
\]
\begin{align*}
    &\mathbb{P}\left((\mathcal{E}^{\log})^C\right) = \mathbb{P}\left(\exists k \geq 1, \exists(s,a)\in \gS \times \gA: \left\vert \log(\mu(a \mid s)) - \log(\hat{\mu}_k(a \mid s))\right\vert > \beta_2(s)\right) \\
    & \quad \overset{(a)}{\leq}  \sum_{(s,a) \in \gS \times \gA}\mathbb{P}\left(\exists k \geq 1: \left\vert \log(\mu(a \mid s)) - \log(\hat{\mu}_k(a \mid s))\right\vert > \beta_2(s)\right) \\
    & \quad \overset{(b)}{\leq}  \sum_{(s,a) \in \gS \times \gA}\mathbb{P}\left(\exists m \geq 0: \left\vert \log(\mu(a \mid s)) - \log(\hat{\mu}_m(a \mid s))\right\vert > \beta_2(s)\right)  \\
    & \quad \overset{(c)}{\leq} \sum_{(s,a) \in \gS \times \gA} \sum_{m \geq 0} \frac{ \delta}{10\vert \gS \vert\vert \gA \vert m^2} \\
    & \quad \leq \frac{\delta}{10}\left(1 + \frac{\pi^2}{6}\right) \leq \frac{\delta}{5},
\end{align*}
where (a) uses a union bound over the state and action space of player 1, (b) uses that we only consider the $m$-times, where we updated the estimated transition model and (c) we can applied \cref{logprob}. To be precise, $(c)$ only holds if $N$ is large enough, however, we will late only consider this case, therefore we use it directly.

For the second last step we define the good event for the total variation
\[\mathcal{E}^{\mathrm{TV}} := \left\{ \forall k \in \mathbb{N}, \, \forall (s,b) \in \gS \times \gB: \mathrm{TV}(\nu, \hat{\nu})  \leq \sqrt{\frac{ \vert \gB\vert \log(10\vert \gS \vert\vert \gB \vert (N_{k}^+(s))^2 / \delta)}{N_{k}^+(s)}}\right\}. \]
We will the bound the probability of the complement of this event by $\delta$ and can then take the intersection of both events to get the total result. We will skip some intermediate steps, as they are similar to the ones obtained above.
\begin{align*}
    \mathbb{P}((\mathcal{E}^{\mathrm{TV}})^C) &= \mathbb{P}\left(\exists k \in \mathbb{N}, \, \exists (s,b) \in \gS \times \gB: \mathrm{TV}(\nu, \hat{\nu})  > \sqrt{\frac{ \vert \gB\vert \log(10\vert \gS \vert\vert \gB \vert (N_{k}^+(s))^2 / \delta)}{N_{k}^+(s)}}\right)\\ 
    &\overset{(a'')}{\leq} \sum_{(s,a) \in \gS \times \gA} \sum_{m \geq 0} \mathbb{P}\left(\mathrm{TV}(\nu, \hat{\nu})  > \sqrt{\frac{ \vert \gB\vert \log(10\vert \gS \vert \vert \gB \vert m^2/ \delta)}{m}}\right)\\
    &\overset{(b'')}{\leq} \frac{\delta}{5},
\end{align*}
where (a'') uses a union bound over the state and action space, (b'') uses \cref{lemma:l1_concentration}. Also here to be precise, $(b'')$ only holds if $N$ is large enough, however, we will late only consider this case, therefore we use it directly.
To bound the second last event, we can apply \cref{lemma:inverse_probability}.
To complete this proof, we first define $\beta_3(s) := \frac{1}{\Delta^2_{\min}}\sqrt{\frac{2 \log(10\vert \gS \vert \vert \gB \vert (N_{k}^+(s))^2/ \delta)}{N_{k}^+(s)}}$ and once again apply the same argument for the good event,  
\[\mathcal{E}^{\mathrm{invprop}} := \left\{ \forall k \in \mathbb{N}, \, \forall (s,b) \in \gS \times \gB: \frac{1}{\nu(b \mid s) \hat{\nu}(b \mid s)} \leq \beta_3(s)\right\}, \]
now combined with \cref{lemma:inverse_probability}. 
As all the good events holds with probability $\delta/5$, we get that 
\[\mathbb{P}\left(\mathcal{E}^{\mathrm{TV}} \cap \mathcal{E}^{\log} \cap \mathcal{E}^{trans} \cap \mathcal{E}^{\mathrm{invprop}}\right) > 1-\delta.\]
\end{proof}

Following, we present the reward uncertainty metric, which allows us to demonstrate that the difference between the recovered reward function and the true reward function is bounded. \\

\begin{definition}[Reward Uncertainty]
\label{definition:reward_uncertainty }
    Let $k$ be the number of iterations. Then the reward uncertainty after $k$ iterations for any $(s, a, b) \in \gS \times \gA \times \gB$ is defined as
\begin{align*}
C_k(s, a,b) := \frac{4\gamma\maxR}{(1-\gamma) \Delta^2_{\min}}&\left( \sqrt{\frac{8 \vert \gB\vert\log(10\vert \gS\vert \vert \gB\vert (N_k^+(s))^2/\delta)}{N_k^+(s)}} \right. \\
& \left. \quad +  \sqrt{\frac{2\log(10\vert \gS\vert \vert \gA\vert (N_k^+(s))^2/\delta)}{N_k^+(s)}} + \sqrt{\frac{2l_k(s, a,b)}{N_k^+(s,a,b)}}\right).
\end{align*} 
\end{definition}

\begin{theorem}
    \label{thm:bound_uncertainty}
    Let the reward uncertainty be defined as in \ref{reward uncertainty}. Under the good event it holds for any $(s, a,b) \in \gS \times \gA$ that:
    \[\vert R^{i}(s, a,b) - \hat{R}^{i}(s, a,b) \vert \leq C_k(s, a,b).\]
\end{theorem}
\begin{proof}
    The theorem is an application of the error propagation \cref{thm:error_propagation}, followed by \cref{lemma:good_event}
    \begin{align*}
        &\vert R^{i}(s,a,b) - \hat{R}^{i}(s, a,b) \vert\\
        &\leq \frac{2}{\nu^*(b \mid s)\hat{\nu}^*(b \mid s)}\left(\lambda |\log\mu^*(a \mid s) - \log\hat{\mu}^*(a \mid s)| \right.\\
        & \left. \quad + \gamma \max_{b} |\sum_{s'} V(s')P(s' \mid s,a,b) - \hat{P}(s' \mid s,a,b)| + \maxR \mathrm{TV}(\mu, \hat{\mu})\right) \\
        &\leq \frac{4\maxR}{1-\gamma} \left(\goodinverse+\goodTV \right.\\
        &\left. \qquad \qquad \qquad+\goodmu +  \gamma\sqrt{\frac{2l_k(s, a,b)}{N_k^+(s,a,b)}}\right)\\
        & \leq \frac{4\gamma\maxR}{(1-\gamma) \Delta^2_{\min}}\left( \sqrt{\frac{8 \vert \gB\vert\log(10\vert \gS\vert \vert \gB\vert (N_k^+(s))^2/\delta)}{N_k^+(s)}}  +  \sqrt{\frac{2\log(10\vert \gS\vert \vert \gA\vert (N_k^+(s))^2/\delta)}{N_k^+(s)}} \right. \\
        &\left. \qquad \qquad \qquad \qquad+ \sqrt{\frac{2l_k(s, a,b)}{N_k^+(s,a,b)}}\right)\\
        & = C_k(s, a, b).
    \end{align*}
\end{proof}
Before stating the correctness of the algorithm, we want to mention that the derivations of \cref{apply simulation} also hold for the regularized case. Therefore, we can continue with the the correctness result for the regularized case.
\begin{corollary}
    \label{Corollary:Correctness}
    Let $k$ be the number of iterations for any allocation of the samples over the state-action space $\gS \times \gA$. Furthermore, let $\feasible$ be the true feasible set and $\estfeasible$ the recovered one. Then the optimality criterion \ref{optimal} holds true, if
    \[\frac{1}{1-\gamma} \max_{(s, a,b)} C_k(s, a,b) \leq \frac{\varepsilon}{2}.\]
\end{corollary}
\begin{proof}
    We complete the proof for the first case of the optimality criterion, the second one follows analogously. 
    \begin{align*}
        & \sup_{R \in \feasible} \inf_{\hat{R} \in \estfeasible}  \max \{\max_{\mu_{\lambda}} V^1_\lambda(\mu, \hat{\nu}) - V^1_\lambda(\hat{\mu}, \hat{\nu}), V^1_\lambda(\hat{\mu}, \hat{\nu}) - \min_{\nu_{\lambda}} V^1_\lambda(\hat{\mu}, \nu) \}\\
        & \leq \frac{2}{1- \gamma} \max_{(s, a,b)} C_k(s, a,b) \leq \varepsilon,
        \end{align*}
        where  we applied \ref{apply simulation} and we used the fact that $a+b \leq 2\max\{a,b\}$ followed by the error propagation \cref{thm:error_propagation} and then we used \ref{bound uncertainty}.
\end{proof}

We can combine the derived results to now state the main theorem regarding the sample complexity of the Uniform Sampling
\begin{theorem}
    Let \cref{minAssumption} hold true. Then, allocating the samples uniformly over $\gS \times \gA \times \gB$ and using the empirical estimators introduced in \cref{estpol} and \cref{estprop}, we can stop the sampling procedure with a probability of at least $1-\delta$ after iteration $\tau$ and satisfy the optimality criterion \cref{def:optimal_qre}, where the sample complexity is of order
    \[\tilde{\mathcal{O}} \left(\frac{\gamma^2\maxR^2 \vert \gS  \vert\vert \gA \vert \vert \gB \vert}{(1-\gamma)^4 \varepsilon^2 \Delta^4_{\min}} \right)\]
\end{theorem}

\begin{proof}
We know from \ref{Correctness}, that we need
    \begin{align*}
        \frac{1}{1-\gamma} \max_{(s, a,b)} C_k(s, a,b) &\leq \frac{\varepsilon_k}{2} \\
    \end{align*}
    By the definition of $C_k(s, a,b)$ this is satisfied if
    \begin{align*}
        \frac{4\gamma\maxR}{(1-\gamma)^2 \Delta_{\min}^2} \sqrt{\frac{2l_k(s, a,b)}{N_k^+(s,a,b)}} &\leq \frac{\varepsilon_k}{6}\\
        \frac{4\gamma\maxR}{(1-\gamma)^2 \Delta_{\min}^2} \sqrt{\frac{\log(10\vert \gS \vert \vert \gA \vert (N_k^+(s))^2/\delta)}{N_k^+(s)}} &\leq \frac{\varepsilon_k}{6},\\
        \frac{4\gamma\maxR}{(1-\gamma)^2 \Delta_{\min}^2}\sqrt{\frac{8\vert \gB \vert\log(\vert \gS\vert \vert \gB \vert (N_k^+(s))^2/\delta)}{N_k^+(s)}} &\leq\frac{\varepsilon_k}{6}.
    \end{align*}
    To achieve the first condition, we get  
    \begin{align*}
        N_k(s, a,b) &\geq \frac{\maxR^2}{(1-\gamma)^4 \Delta_{\min}^4} \gamma^2 24^2 l_k(s, a,b) \frac{2}{\varepsilon_k^2} \\
        &= \frac{\gamma^2 1152\maxR^2}{(1-\gamma)^4 \Delta_{\min}^4 \varepsilon_k^2} \log\left(\frac{12\vert\gS\vert \gB \vert \vert \gA \vert (N_{k}^{+}(s,a,b))^2}{\delta}\right)
    \end{align*}
    Applying Lemma B.8 by \cite{pmlr-v139-metelli21a} we get that 
    \[N_k(s, a,b) \leq \frac{4608\gamma^2 \maxR^2}{(1-\gamma)^4 \varepsilon_k^2 \Delta_{\min}^4} \log \left(\frac{2304\gamma^2\maxR^2}{(1-\gamma)^4 \varepsilon_k^2} \sqrt{\frac{12\vert\gS\vert \gB \vert \vert \gA \vert}{\delta}}\right).\]
    At each iteration we are allocating the samples uniformly over $\gS \times \gA \times \gB$ and recalling that $\tau_{s,a,b} =\vert\gS\vert \vert\gB \vert \vert \gA \vert N_k(s,a,b) $ therefore we get 
    \[\tau_{s,a,b} \leq \frac{4608 \gamma^2 \vert\gS\vert \gB \vert \vert \gA \vert\maxR^2}{(1-\gamma)^4 \Delta_{\min}^4\varepsilon_k^2} \log \left(\frac{2304\gamma^2\maxR^2}{(1-\gamma)^4 \varepsilon_k^2 \Delta_{\min}^4} \sqrt{\frac{12\vert\gS\vert \vert\gB \vert \vert \gA \vert}{\delta}}\right)\]
    Now we have to achieve the second condition,
   \begin{align*}N_k(s) &\geq \frac{24^2\gamma^2\maxR^2}{(1-\gamma)^4 \Delta^4_{\min}}  \frac{\log(10\vert\gS\vert \vert \gA\vert (N^+_k(s))^2 / \delta)}{\varepsilon_k^2}  \\
   &= \frac{\gamma^2 576\maxR^2}{(1-\gamma)^4 \varepsilon_k^2 \Delta^4_{\min}} \log(10\vert\gS\vert \vert \gA\vert (N^+_k(s))^2/ \delta)\\
   &= \frac{\gamma^2 576\maxR^2}{(1-\gamma)^4 \varepsilon_k^2 \Delta^4_{\min}} \left(\log(10\vert\gS\vert \vert \gA\vert/ \delta) + 2\log((N^+_k(s)))\right)
   \end{align*}
    If we additionally force that $N_k(s) \geq 1$ and apply Lemma 15 of \cite{DBLP:journals/corr/abs-2006-06294} with $ 1/\Delta^2 =\frac{\gamma^2 576\maxR^2}{(1-\gamma)^4 \varepsilon_k^2 \Delta^4_{\min}}, a= \log(10\vert\gS\vert \vert \gA\vert/\delta), b=2, c=0, d=1$, we get that 
    \begin{align*}
        N_k(s) \leq 1+ \frac{\gamma^2 576\maxR^2}{(1-\gamma)^4 \varepsilon_k^2 \Delta^4_{\min}} &\bigg(\log(10\vert\gS\vert \vert \gA\vert/\delta) \\
        & \quad + 2\log\left(\left(\frac{\gamma^2 576\maxR^2}{(1-\gamma)^4 \varepsilon_k^2 \Delta^4_{\min}}\right)^2\left(\log(10\vert\gS\vert \vert \gA\vert / \delta) + 2\right) \bigg)\right)
    \end{align*}
    As we here need to allocate samples uniformly over the state space only but for every agent separately and recalling that $\tau_{s_1} = S N_k(s)$, we get
    \begin{align*}
        \tau_{s_1} \leq \vert \gS \vert+  \frac{\vert \gS \vert\gamma^2 576\maxR^2}{(1-\gamma)^4 \varepsilon_k^2 \Delta^4_{\min}} &\bigg(\log(10\vert\gS\vert \vert \gA\vert/\delta) \\
        & \quad +2\log\left( \left(\frac{\gamma^2 576\maxR^2}{(1-\gamma)^4 \varepsilon_k^2 \Delta^4_{\min}}\right)^2\left(\log(10\vert\gS\vert \vert \gA\vert / \delta) + 2\right) \right)\bigg).
    \end{align*}
    Lastly, we can proceed in a similar way compared to the last step,
   \begin{align*}
       N_k(s) &\geq \frac{\maxR^2}{(1-\gamma)^4 \Delta^4_{\min}} \gamma^2 24^2  \frac{8\vert \gB\vert\log(10\vert\gS\vert \vert \gB\vert (N^+_k(s))^2 / \delta)}{\varepsilon_k^2}\\
       &= \frac{\gamma^2 4608\maxR^2 \vert \gB\vert}{(1-\gamma)^4 \varepsilon_k^2 \Delta^4_{\min}} \log(10\vert\gS\vert \vert \gB\vert (N^+_k(s))^2/ \delta)
    \end{align*}
    If we additionally force that $N_k(s) \geq 1$ and again  apply Lemma 15 of \cite{DBLP:journals/corr/abs-2006-06294} with $ 1/\Delta^2 =\frac{\vert \gB\vert\gamma^2 4608\maxR^2}{(1-\gamma)^4 \varepsilon_k^2 \Delta^4_{\min}}, a= \log(10\vert\gS\vert \vert \gA\vert/\delta), b=2, c=0, d=1$ we get that 
    \begin{align*}
        N_k(s) \leq 1+ \frac{\vert \gB\vert\gamma^2 4608\maxR^2}{(1-\gamma)^4 \varepsilon_k^2 \Delta^4_{\min}} &\bigg(\log(10\vert\gS\vert \vert \gB\vert/\delta)\\
        & \quad + 2\log\left(\left(\frac{\vert \gB\vert\gamma^2 4608\maxR^2}{(1-\gamma)^4 \varepsilon_k^2 \Delta^4_{\min}}\right)^2(\log(10\vert\gS\vert \vert \gB\vert / \delta) + 2)\right) \bigg)
        \end{align*}
    As we here need to allocate samples uniformly over the state space only but for every agent separately and recalling again that $\tau_{s_2} = S N_k(s)$, we get
    \begin{align*}
        \tau_{s_2} \leq \vert \gS \vert+  \frac{ \vert \gS \vert \vert \gB\vert\gamma^2 4608\maxR^2}{(1-\gamma)^4 \varepsilon_k^2 \Delta^4_{\min}} &\bigg(\log(10\vert\gS\vert \vert \gB\vert/\delta) \\
        &+ 2\log\left(\left(\frac{\vert \gB\vert\gamma^2 4608\maxR^2}{(1-\gamma)^4 \varepsilon_k^2 \Delta^4_{\min}}\right)^2(\log(10\vert\gS\vert \vert \gB\vert / \delta) + 2)\right) \bigg).
    \end{align*}
    Finally, as $\tau = \tau_{s,a} + \tau_{s_1} + \tau_{s_2}$ we get that this is exactly of order 
    \[\tilde{\mathcal{O}} \left(\frac{\gamma^2\maxR^2 \vert \gS  \vert\vert \gA \vert \vert \gB \vert}{(1-\gamma)^4 \varepsilon^2 \Delta^4_{\min}}   \right)\]
\end{proof}

One can see that the problem scales with $|\gA| |\gB|,$ which is due to the used union bound. Scaling this up to $n$-player games the union bound implies that the bound will scale exponentially in the number of players. Additionally, as we have to bound the inverse probability the term $\Delta_{\min}$ appears on the bound. A sufficiently large $\Delta_{\min}$ can e.g. be obtained if the regularization parameter $\lambda$ is small.

\section{Identifiability in multi-agent games?}
\label{Appendix:Identifiability}

This appendix provides supplementary details and proofs for the identifiability results discussed in \cref{section:Identifiability}. We follow the notation established in the main text, where applicable, using $R^1(s,a,b)$ to denote Player 1's reward, $R^{\nu}(s, a) = \sum_{b' \in \gB} \nu(b' \mid s) R^1(s, a, b')$ for the average reward received by Player 1 when Player 2 uses policy $\nu$, and $P^{\nu}_{a}$ for the induced transition matrix for Player 1 under $\nu$. $\lambda_1$ denotes Player 1's entropy regularization parameter.

\paragraph{Average identifiability.} We start with the case, where we try to identify the average reward function (up to constants) for any player. The theorem is a direct consequence of Theorem 3 by~\citet{identify}. 
\begin{theorem}
    Let a Markov game be given with two different opponents $\nu_1,\nu_2$ that induce different dynamics $P^{\nu_1}, P^{\nu_2}$ and discount factors $\gamma_1, \gamma_2.$ Suppose that in both Games we observe QRE equilibrium policy pairs $(\mu_1, \nu_1)$ and  a different $\nu_2$ with a best responding policy $\mu_2$ such that they have same average reward functions $R^{\nu_1} = R^{\nu_2}.$ Additionally, define $P^{\nu_i}_{a} \in \R^{\gS \times \gS}$ the induced transition matrix of expert $i \in \{1,2\}$. Then, the \emph{average} reward  player 1 receives can be recovered up to a constant if and only if 
    \[
\text{rank} \left(
\begin{array}{cc}
I - \gamma_1 P_{a_1}^{\nu_1} & I - \gamma_2  P_{a_1}^{\nu_2}  \\
\vdots & \vdots \\
I - \gamma_1  P_{a_{|\mathcal{A}|}}^{\nu_1}  & I - \gamma_2  P_{a_{|\mathcal{A}|}}^{\nu_2} 
\end{array}
\right) = 2|\mathcal{S}| - 1.
\]
Analogously this holds for player 2.
\end{theorem}

\begin{theorem}[Sample Complexity for Induced Transitions]
\label{thm:sample_complexity_induced_transitions_appendix}
To estimate the induced transition model $P^{\nu^*}$ for Player 1 (where $\nu^*$ is Player 2's true policy) such that the maximum $L_1$ error over all $(s,a)$ rows is bounded by $\varepsilon$, i.e., $\max_{s,a} \|P^{\nu^*}(\cdot \mid s,a) - \widehat{P}^{\hat{\nu}}(\cdot \mid s,a)\|_1 \le \varepsilon$, with probability at least $1-\delta$, the total number of samples $N$ is of the order:
\[\mathcal{O}\left(\frac{|\mathcal{S}|^2|\mathcal{A}||\mathcal{B}| \log(|\mathcal{S}||\mathcal{A}||\mathcal{B}|/\delta)}{\varepsilon^2}\right)
\]
where $\widehat{P}^{\hat{\nu}}(s'|s,a) = \sum_{b \in \mathcal{B}} \hat{\nu}(b|s) \hat{P}(s'|s,a,b)$ uses empirical estimates $\hat{\nu}$ of $\nu^*$ and $\hat{P}$ of the true dynamics $P$.
\end{theorem}

\begin{proof}
The estimated induced transition is $\widehat{P}^{\hat{\nu}}(\cdot|s,a) = \sum_b \hat{\nu}(b|s) \hat{P}(\cdot|s,a,b)$. The true induced transition is $P^{\nu^*}(\cdot|s,a) = \sum_b \nu^*(b|s) P(\cdot|s,a,b)$. We want to bound the $L_1$ error.
We use the triangle inequality and properties of the $L_1$ norm:
\begin{align*}
&\|P^{\nu^*}(\cdot|s,a) - \widehat{P}^{\hat{\nu}}(\cdot|s,a)\|_1 \\
&= \left\| \sum_{b} \nu^*(b|s) P(\cdot|s,a,b) - \sum_{b} \hat{\nu}(b|s) \hat{P}(\cdot|s,a,b) \right\|_1 \\
&= \left\| \sum_{b} (\nu^*(b|s) - \hat{\nu}(b|s)) P(\cdot|s,a,b) + \sum_{b} \hat{\nu}(b|s) (P(\cdot|s,a,b) - \hat{P}(\cdot|s,a,b)) \right\|_1 \\
&\le \sum_{b} |\nu^*(b|s) - \hat{\nu}(b|s)| \cdot \|P(\cdot|s,a,b)\|_1 + \sum_{b} \hat{\nu}(b|s) \|P(\cdot|s,a,b) - \hat{P}(\cdot|s,a,b)\|_1
\end{align*}
Since $\|P(\cdot|s,a,b)\|_1 = 1$ and $\sum_b \hat{\nu}(b|s) = 1$:
\[ \|P^{\nu^*}(\cdot|s,a) - \widehat{P}^{\hat{\nu}}(\cdot|s,a)\|_1 \le \|\nu^*(\cdot|s) - \hat{\nu}(\cdot|s)\|_1 + \max_{b' \in \gB} \|P(\cdot|s,a,b') - \hat{P}(\cdot|s,a,b')\|_1 \]
To ensure $\|P^{\nu^*}(\cdot|s,a) - \widehat{P}^{\hat{\nu}}(\cdot|s,a)\|_1 \le \varepsilon$ with high probability, we need to ensure both terms on the right are sufficiently small, e.g., $\le \varepsilon/2$.

Estimating the multinomial distribution $\nu^*(\cdot|s)$ over $|\gB|$ actions requires $N_\nu(s)$ samples of Player 2's actions at state $s$. Applying \cref{lemma:l1_concentration} gives us that to achieve $\|\nu^*(\cdot|s) - \hat{\nu}(\cdot|s)\|_1 \le \varepsilon/2$ with probability $1-\delta'$, requires that $N_\nu(s)$ is of the order $\mathcal{O}\left(\frac{|\gB|}{\varepsilon^2}\right)$ samples.

 Similarly, we can bound the transition dynamics. In particular, if we apply \cref{lemma:l1_concentration}, then we get that the amount of samples required to minimize it with high probability is of the order $\mathcal{O}\left(\frac{|\gS||\gA||\gB|}{\varepsilon^2}\right).$ As the part of the transition dynamics dominates, the total number of samples is then of the order is then given by
\[\mathcal{O}\left(\frac{|\gS||\gA||\gB|}{\varepsilon^2}\right).\]
\end{proof}

With this result we recover Theorem 8 by \citet{identify} for every player. For completeness we restate the result here. 
\begin{theorem}
    Suppose that we estimate the transition dynamics $P^{\nu_1}_{a}$ and $P^{\nu_1}_{a}$ by $P_a^{\tilde{\nu}_1} $ and $P_a^{\tilde{\nu}_2}$ such that $\Vert P^{\nu_1}_{a} - P^{\tilde{\nu}_1}_{a} \Vert_1 \leq \varepsilon$ and $\Vert P^{\nu_2}_{a} - P^{\tilde{\nu}_2}_{a} \Vert_1 \leq \varepsilon \quad \forall a \in \mathcal{A}.$ Assume that the estimated transition dynamics satisfy \cref{eq:rank_conditions}, then the true transition dynamics satisfy \cref{eq:rank_conditions}, if for the second smallest eigenvalue $\sigma$ of the following matrix
\[\left(
\begin{array}{cc}
I - \gamma_1 P_{a_1}^{\tilde{\nu}_1} & I - \gamma_2  P_{a_1}^{\tilde{\nu}_2}  \\
\vdots & \vdots \\
I - \gamma_1  P_{a_{|\mathcal{A}|}}^{\tilde{\nu}_1}  & I - \gamma_2  P_{a_{|\mathcal{A}|}}^{\tilde{\nu}_2} 
\end{array}
\right)\]
    it holds true that
    \[\sigma > \varepsilon \sqrt{2 \vert \mathcal{A}\vert} \max\{\gamma_1,\gamma_2\}.\]
\end{theorem}

\noindent\paragraph{Reward identification in linear separable Markov games.}
As the so far discussed theorems only work for the average reward case, we want to identify conditions that allow us to identify the rewards in the multi-agent case. As discussed in \cref{section:Identifiability} one potential solution to achieve this is to assume linear separable rewards that naturally disentangle the rewards into a part that only depends on action $a$ and a part that only depends on action $b.$

We can immediately see that $R^{\nu^*}(s,a) = \sum_{b \in \gB} \nu^*(b \mid s) R(s,a,b) = R_A(s,a) + \sum_{b \in \gB} \nu^*(b \mid s) R_B(s,b).$
This means, that the average reward equation \cref{eq:average reward}, can be rewritten. In particular, we get for every $(s,a) \in \gS \times \gA$ 
\[R_A(s,a) = \lambda \log(\mu^*(a \mid s)) + V(s) - \gamma \sum_{s'}P^{\nu^*}(s' \mid, s,a) V(s') - \sum_{b \in \gB} \nu^*(b \mid s) R_B(s,b).\]

Next, we again consider the case, where we have two Markov games where we have the same reward function for player 1, in particular for action $a$. Using the explicit formulation of the reward, we get the following:
\begin{align*}
    R_A(s,a) &= \lambda \log(\mu_1^*(a \mid s)) + V_1(s) - \gamma \sum_{s'}P_1^{\nu_1^*}(s' \mid, s,a) V_1(s') - \sum_{b \in \gB} \nu_1^*(b \mid s) R_B(s,b) \\
    &= \lambda \log(\mu_2^*(a \mid s)) + V_2(s) - \gamma \sum_{s'}P_2^{\nu_2^*}(s' \mid, s,a) V_2(s') - \sum_{b \in \gB} \nu_2^*(b \mid s) R_B(s,b) 
\end{align*}
Let us consider two cases. The first case is that in both environments the opponent policy is the same, meaning that we have $\nu^*_1 = \nu^*_2.$ Then, we see immediately that $\sum_{b \in \gB} \nu^*(b \mid s) R_B(s,b)$ cancels out and we get
\[
\resizebox{\textwidth}{!}{$
\lambda \log(\mu_1^*(a \mid s)) + V_1(s) - \gamma \sum_{s'}P_1^{\nu^*}(s' \mid s,a) V_1(s') = \lambda \log(\mu_2^*(a \mid s)) + V_2(s) - \gamma \sum_{s'}P_2^{\nu^*}(s' \mid s,a) V_2(s').
$}
\]

Therefore, we reconstruct the single-agent case, as we obtain for every $(s,a) \in \gS \times \gA:$
\[
\resizebox{\textwidth}{!}{$ V_1(s) - V_2(s) - \gamma \sum_{s'}P_1^{\nu^*}(s' \mid, s,a) V_1(s') + \gamma \sum_{s'}P_2^{\nu^*}(s' \mid, s,a) V_2(s') = \lambda (\log(\mu_2^*(a \mid s)) -  \log(\mu_1^*(a \mid s))). 
$}
\]

Therefore, we can again write this as a system of equations but this time for the same $\nu^*.$ We can summarize these findings in the following result.
\begin{proposition}
Let a Markov game with a QRE equilibrium policy pair $(\mu^*_1, \nu^*_1)$ be given. Additionally, let another Markov game with the same $\nu_1^*$ but a different transition Model $P_2$ and discount factor $\gamma_2$ and therefore also different best response $\mu^*_2$ be given such that $R^1_A$ is the same for both environments. Then, $R^1_A$ is identifiable if and only if
    \[
\text{rank} \left(
\begin{array}{cc}
I - \gamma_1 P_{1,a_1}^{\nu^*_1} & I - \gamma_2  P_{2,a_1}^{\nu^*_1}  \\
\vdots & \vdots \\
I - \gamma_1  P_{1,a_{|\mathcal{A}|}}^{\nu_1}  & I - \gamma_2  P_{2,a_{|\mathcal{A}|}}^{\nu^*_1} 
\end{array}
\right) = 2|\mathcal{S}| - 1.
\]
Analogously this holds for player 2.
\end{proposition}

For the second case, we assume that the opponent policies are different in the two Markov games, meaning that $\nu_1^* \neq \nu^*_2.$ In this case we need an additional assumption, namely $R_B(s, b_0) = 0.$ his constraint fixes the baseline for $R_B(s,b)$, allowing us to determine how much player 2's different actions contribute to player 1's reward, relative to this baseline.  
\begin{align*}
    &V_1(s) - V_2(s) - \gamma \sum_{s'}P_1^{\nu_1^*}(s' \mid, s,a) V_1(s') + \gamma \sum_{s'}P_2^{\nu_2^*}(s' \mid, s,a) V_2(s')\\
    & \quad + \sum_{b\neq b_0} R_B(s,b) (\nu^*_2(b \mid s) - \nu^*_1(b \mid s)) = \lambda \log(\mu_2^*(a \mid s)) - \lambda \log(\mu_1^*(a \mid s)). 
\end{align*}
This we can now again express as a system of equations as the above holds for every $(s,a) \in \gS \times \gA^1$ and we get:
\[
 \left(
\begin{array}{ccc}
M_{R_B}&I - \gamma_1 P_{a_1}^{\nu_1} & I - \gamma_2  P_{a_1}^{\nu_2}  \\
\vdots & \vdots &\vdots \\
M_{R_B}&I - \gamma_1  P_{a_{|\mathcal{A}|}}^{\nu_1}  & I - \gamma_2  P_{a_{|\mathcal{A}|}}^{\nu_2} 
\end{array}
\right)  \left(\begin{array}{c}
       R_B\\
      V_1 \\
      V_2
\end{array} \right) = \left(\begin{array}{c}
       \lambda (\log(\mu_2(\cdot \mid a_1)) - \log(\mu_1(\cdot \mid a_1)))\\
      \vdots \\
      \lambda (\log(\mu_2(\cdot \mid a_{|\gA|})) - \log(\mu_1(\cdot \mid a_{|\gA|})))
\end{array} \right),
\]
where $R_B$ is the reward vector for every $s \in \gS$ and $b \in \gB \setminus\{b_0\}$ and each $M_{R_B}$ is an $|\gS| \times |\gS||\gB|$ block diagonal matrix with the following structure
\[
M_{R_B} = \left(
\begin{array}{cccc}
\boldsymbol {\nu}(s_1)&\boldsymbol{0} & \ldots &\boldsymbol{0}  \\
\boldsymbol {0}&\boldsymbol {\nu}(s_2) & \ldots &\boldsymbol{0}  \\
\vdots & \vdots &\ddots  & \vdots\\
\boldsymbol {0}&\boldsymbol {0}  & \ldots&\boldsymbol{\nu}(s_{|\gS|}) 
\end{array}
\right),
\]
where each $\boldsymbol {\nu}(s_i)$ for $s_i \in \gS$ is a $1 \times (|\gB| -1)$ row vector where every element is the difference of the policies for the actions $\nu_2(b \mid s) - \nu_1(b \mid s)$ for $b\in \gB \setminus \{b_0\}.$

This means that the matrix on the left has shape $|\gA| |\gS| \times |\gS|(|\gB|+1).$ As we require identification up to constants we need that the matrix has rank $|\gS|(|\gB|+1) -1.$ We can conclude our findings in the following proposition.
\begin{proposition}
    Let us assume that we have two-player general-sum Markov games with different transition functions $P_1, P_2$ and $\gamma_1,\gamma_2$ be given. Additionally, assume that the rewards for player 1 are the same under the QRE $(\mu_1^*, \nu_1^*)$ and the other observed best responding policy $\mu_2^*$ to $\nu^*_2$. Additionally, suppose player 1's reward function is linearly separable: $R_1(s,a,b) = R_A(s,a) + R_B(s,b)$. To ensure unique decomposition between $R_A$ and $R_B$, the normalization $R_B(s,b_0) = 0$ is imposed for some fixed $b_0 \in \gB$ and for all $s \in \mathcal{S}$. Then, player 1's reward function $R_1(s,a,b)$ (and its components $R_A(s,a)$ and $R_B(s,b)$ under the given normalization) can be recovered up to a single additive constant if and only if:
    \[
    \mathrm{rank}\left(
\begin{array}{ccc}
M_{R_B}&I - \gamma_1 P_{a_1}^{\nu_1} & I - \gamma_2  P_{a_1}^{\nu_2}  \\
\vdots & \vdots &\vdots \\
M_{R_B}&I - \gamma_1  P_{a_{|\mathcal{A}|}}^{\nu_1}  & I - \gamma_2  P_{a_{|\mathcal{A}|}}^{\nu_2} 
\end{array}
\right)  = |\gS|(|\gB| + 1) -1.
    \]
\end{proposition}

\section{Experimental evaluation}
\label{Appendix:Experiments}

In this section, we first give the details for the figures in \cref{fig:lowerbound} and \cref{fig:qre_plots}. Then, we aim to demonstrate the advantages of IRL in the multi-agent setting compared to Behavior Cloning. 

It is important to emphasize that the primary goal of this paper is to address the IRL problem from a theoretical perspective by defining a new objective and presenting the first algorithm to characterize the feasible set of rewards. In particular, we will demonstrate the case of pure NE strategies in a simple environment. Although, we have shown that in the general case one needs to observe multiple equilibria to infer a meaningful reward function, we will demonstrate that one can still obtain a good performance in simpler environments. We motivate this simple example given that calculating the NE is PPAD-hard. The idea is to emphasize the need for MAIRL framework and motivate future research on computationally more feasible equilibrium solutions. 
\subsection{Numerical verifications for Nash and QRE equilibrium observations.}

In this section, we give the details for the numerical examples provided in \cref{fig:lowerbound} and \cref{fig:qre_plots} respectively. The idea of both experiments is the same. The considered environment is the one used in \cref{prop:counter_example} and illustrated in \cref{fig:lowerbound}.

\paragraph{Expert observations.}
In the case of Nash equilibrium experts, we can simply take any Nash equilibrium solver to calculate the equilibrium for state $s_0$. This holds true because in the following states the Nash equilibrium actions of the Normal Form Game in $s_0$ are rewarded ($s_1$ and $s_3$), while the other actions are not $s_2.$ 
Regarding the equilibrium observations for the QRE, we again only consider state $s_0$ and run a simple algorithm that iteratively computes the expected reward of a player keeping the other players strategy fixed and then updates the policy. This is repeated until the strategies of both players are not changing anymore.

\paragraph{Calculating new equilibria and exploitability.}
For both expert observations, we then use any IRL algorithm that picks a reward function, such that the observed equilibrium is feasible under this reward. Then, we compute again the new equilbria and compare the list of original Nash equilibria with the ones under the recovered reward function. From this list, we then randomly select an equilibrium and calculate the exploitability of the picked strategy in the original Markov game. This we repeat for $10000$ iterations and compute the average exploitability and the average correlations.

\subsection{MAIRL vs. Behavior Cloning.}
In this section, we empirically evaluate the benefits of MAIRL compared to BC and describe the details on the environments in the following paragraphs. One of the advantages of IRL over BC lies in its ability to transfer the recovered reward function to new environments with different transition probabilities. This is particularly significant in a multi-agent setting, where even minor changes in transition probabilities can alter not only the individual behavior of agents but also the interactions between them.

For our experiments, we utilize the $3\times3$ Gridworld example, also considered in \cite{10.5555/945365.964288}. To recover the feasible reward set and learn the expert policy with BC, we consider a scenario where the transition probabilities are deterministic. The Nash experts are learned via NashQ-Learning as proposed in \cite{10.5555/945365.964288}. The resulting Nash Experts and more details on the environments can be found in \ref{MAGrid}.

\begin{figure*}[!ht]    \begin{center}\includegraphics[width=.98\textwidth]{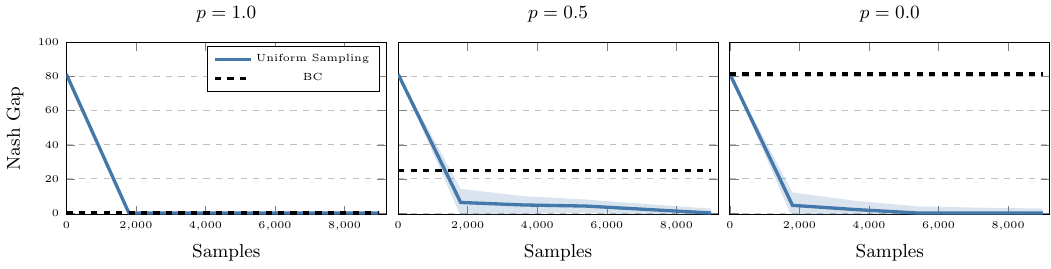}
    \end{center}
    \caption{Nash Gap in Grid Games for different transition probabilities.}
\end{figure*}

Using the Uniform Sampling algorithm to recover the entire feasible set, we then apply a Random Max Gap MAIRL algorithm to extract a reward function from the feasible set, similar to the approach introduced in Appendix C of \cite{pmlr-v139-metelli21a}. A detailed description can be found in \cref{MaxGap}. To test the transferability of the recovered reward function, we alter the transition probabilities so that in states $(0, \text{any})$ and $(\text{any}, 2)$, taking action "up" is only successful with a probability of $0.5$; otherwise, the agent remains in the same state (as in Grid Game 2 in \cite{10.5555/945365.964288}). In a second scenario, we introduce an obstacle into the environment, that prohibits the agent from passing through. While in the first scenario, the BC strategy still performs reasonably, the second altered environment leads to a failure to reach the goal state for agent 1, resulting in the maximal Nash Gap.

We observe that while BC may perform better in the original environment, for the first iterations, the Uniform Sampling Algorithm proves superior when transferring the reward function, especially as the number of samples increases and the environment changes.

\paragraph{Multi-agent Gridworld.}
\label{MAGrid}
In this section, we describe the environments used for the experiments. The environments are similar to the ones used in \citet{10.5555/945365.964288}. We adjust them in such that for the random transition probabilities in the states $(0,\text{any})$ and $(2, \text{any})$ the environment still has different goals for each agent. Additionally, we introduce the scenario, where an obstacle is added in the middle of the environment, that bounces the agent back with probability 1. This results in a failure of reaching the goal for agent 1 in the BC case. \\
In the left column, we draw the learned Nash path for both agents in the deterministic environment, when applying the NashQ Learning algorithm to retrieve an expert policy.
\begin{figure}
    \centering
    \begin{minipage}{0.24\textwidth}
    \begin{tikzpicture}
        \def\gridsize{3}

        \foreach \x in {0,...,\gridsize} {
            \foreach \y in {0,...,\gridsize} {
                \draw[thin, gray] (\x, 0) -- (\x, \gridsize);
                \draw[thin, gray] (0, \y) -- (\gridsize, \y);
            }
        }

        \fill[green!30] (0, 2) rectangle (1, 3);

        \fill[red!50] (2, 2) rectangle (3, 3);

        \filldraw[red] (0.5, 0.5) circle (0.2);

        \filldraw[green] (2.5, 0.5) circle (0.2);

        \draw[->, thick, green] (2.5, 0.75) -- (2.5, 1.3);
        \draw[->, thick, green] (2.5, 1.5) -- (1.75, 1.5);
        \draw[->, thick, green] (1.25, 1.5) -- (0.5, 1.5);
        \draw[->, thick, green] (0.5, 1.65) -- (0.5, 2.3);

        \draw[->, thick, red] (0.5, 0.75) -- (0.5, 1.3);
        \draw[->, thick, red] (0.65, 1.65) -- (0.65, 2.3);
        \draw[->, thick, red] (0.75, 2.5) -- (1.4, 2.5);
        \draw[->, thick, red] (1.65, 2.5) -- (2.4, 2.5);

    \end{tikzpicture}
    \end{minipage}
    \begin{minipage}{0.24\textwidth}
    
    \begin{tikzpicture}
        \def\gridsize{3}

        \foreach \x in {0,...,\gridsize} {
            \foreach \y in {0,...,\gridsize} {
                \draw[thin, gray] (\x, 0) -- (\x, \gridsize);
                \draw[thin, gray] (0, \y) -- (\gridsize, \y);
            }
        }

        \fill[green!30] (0, 2) rectangle (1, 3);

        \fill[red!50] (2, 2) rectangle (3, 3);

        \filldraw[red] (0.5, 0.5) circle (0.2);

        \filldraw[green] (2.5, 0.5) circle (0.2);
    \end{tikzpicture}
    \end{minipage}
    \begin{minipage}{0.2424\textwidth}
    \begin{tikzpicture}
        \def\gridsize{3}

        \foreach \x in {0,...,\gridsize} {
            \foreach \y in {0,...,\gridsize} {
                \draw[thin, gray] (\x, 0) -- (\x, \gridsize);
                \draw[thin, gray] (0, \y) -- (\gridsize, \y);
            }
        }

        \fill[green!30] (0, 2) rectangle (1, 3);

        \fill[red!50] (2, 2) rectangle (3, 3);

        \filldraw[red] (0.5, 0.5) circle (0.2);

        \filldraw[green] (2.5, 0.5) circle (0.2);

        \filldraw[black!20] (0.2, 1.2) rectangle (0.8, 1.4);
        \filldraw[black!20] (2.2, 1.2) rectangle (2.8, 1.4);
    \end{tikzpicture}
    \end{minipage}
    \begin{minipage}{0.24\textwidth}
    \begin{tikzpicture}
        \def\gridsize{3}

        \foreach \x in {0,...,\gridsize} {
            \foreach \y in {0,...,\gridsize} {
                \draw[thin, gray] (\x, 0) -- (\x, \gridsize);
                \draw[thin, gray] (0, \y) -- (\gridsize, \y);
            }
        }

        \fill[green!30] (0, 2) rectangle (1, 3);

        \fill[red!50] (2, 2) rectangle (3, 3);

        \filldraw[red] (0.5, 0.5) circle (0.2);

        \filldraw[green] (2.5, 0.5) circle (0.2);

        \filldraw[black!60] (0.2, 1.2) rectangle (0.8, 1.4);
    \end{tikzpicture}
    \end{minipage}

    \caption{Multi-agent grid world environments with different transition probabilities and learned NE path}
    \label{Grid Worlds}
\end{figure}

\paragraph{Max Gap MAIRL.}
\label{MaxGap}
In this section, we describe the Max Gap MAIRL algorithm, an extension of the approach presented by~\citet{pmlr-v139-metelli21a} (see Appendix C in ~\citet{pmlr-v139-metelli21a}) to the multi-agent setting. This algorithm is chosen due to the limited number of existing works that address the selection of feasible reward functions in general-sum Markov games with a Nash expert, particularly without imposing additional assumptions on the reward structure. Given the simplicity of the chosen environments, the Max Gap MAIRL procedure is a suitable choice.

The algorithm operates as follows: for each agent $i \in [n]$, a random reward function $\Tilde{R}^{i}$ is selected such that $\|\Tilde{R}^{i}\| \leq \maxR$. The next step involves finding a reward function $R^{i}$ that minimizes the squared 2-norm distance to the randomly chosen reward $\Tilde{R}^{i}$, subject to two constraints: (1) $R^{i}$ must belong to the recovered feasible set, and (2) it must maximize the maximal reward gap, thereby enforcing the feasible reward condition as introduced in \ref{explicit pure}. This results in the following constrained quadratic optimization problem:
\begin{align*}
    &\max_{R^{i} \in \R^\gS, A^{i, \jopolicy}} A^{i, \jopolicy} \\
    &\text{s.t.} \quad (\Nash - \tilde{\jopolicy}) (I - \gamma \boldsymbol{P}\Nash)^{-1}\boldsymbol{R}^{i} \geq  A^{i, \boldsymbol{\pi}} \mathbf{1}_{\{\Nashi = 0\}} \mathbf{1}_{\{\onash > 0\}}\boldsymbol{1}_{\gS \times \gA},\\
    &\qquad \|R^{i}\|_{\infty} \leq \maxR.
\end{align*}

\section{Technical Results}
In this section, we present results that were used throughout this work.

\begin{theorem}[compare Theorem 1 in~\citet{cao2021identifiabilityinversereinforcementlearning}]
\label{thm:single_identify}
For a fixed policy $\bar{\pi}(a|s) > 0$, discount factor $\gamma \in [0,1)$, and an arbitrary choice of function $v : \mathcal{S} \to \mathbb{R}$, there is a unique corresponding reward function
\[
r(s,a) = \lambda \log \bar{\pi}(a|s) - \gamma \sum_{s' \in \mathcal{S}} P(s'|s,a)v(s') + v(s)
\]
such that the MDP with reward $r$ yields a value function $V^{*}_{\lambda} = v$ and entropy-regularized optimal policy $\pi^*_{\lambda} = \bar{\pi}$.
\end{theorem}

\begin{proof}
Fix $r$ as in the statement of the theorem. Then the corresponding value function is given by
\begin{align*}
V^{*}_{\lambda}(s) &= \lambda \log \sum_{a \in \mathcal{A}} \exp \left( \frac{1}{\lambda} \left( r(s,a) + \gamma \sum_{s' \in \mathcal{S}} P(s'|s,a)V^{*}_{\lambda}(s') \right) \right) \\
&= v(s) + \lambda \log \sum_{a \in \mathcal{A}} \bar{\pi}(a|s) \exp \left( \frac{\gamma}{\lambda} \sum_{s' \in \mathcal{S}} P(s'|s,a)(V^{*}_{\lambda}(s') - v(s')) \right),
\end{align*}
which rearranges to give
\begin{equation}
\label{eq:g_function}
\exp(g(s)) = \sum_{a \in \mathcal{A}} \bar{\pi}(a|s) \exp \left( \gamma \sum_{s' \in \mathcal{S}} P(s'|s,a)g(s') \right)
\end{equation}

with $g(s) = (V^{*}_{\lambda}(s) - v(s))/\lambda$. Applying Jensen's inequality, we can see that, for $\underline{s} \in \arg \min_{s \in \mathcal{S}} g(s)$,
\[
\exp \left( \min_{s} g(s) \right) = \exp \left( g(\underline{s}) \right) \geq \exp \left( \gamma \sum_{a \in \mathcal{A}, s' \in \mathcal{S}} \bar{\pi}(a|\underline{s}) P(s'|\underline{s},a) g(s') \right).
\]
However, the sum on the right is a weighted average of the values of $g$, so
\[
\sum_{a \in \mathcal{A}, s' \in \mathcal{S}} \bar{\pi}(a|\underline{s}) P(s'|\underline{s},a) g(s') \geq \min_{s} g(s).
\]
Combining these inequalities, along with the fact $\gamma < 1$, we conclude that $g(s) \geq 0$ for all $s \in \mathcal{S}$.

Again applying Jensen’s inequality to \cref{eq:g_function}, for $\bar{s} \in \arg \max_{s \in \mathcal{S}} g(s)$ we have
\[
\max_{s} \left\{ \exp \left( g(s) \right) \right\} = \exp \left( g(\bar{s}) \right) \leq \sum_{a \in \mathcal{A}, s' \in \mathcal{S}} \bar{\pi}(a|\bar{s}) P(s'|\bar{s},a) \exp \left( \gamma g(s') \right).
\]
As the sum on the right is a weighted average, we know
\[
\sum_{a \in \mathcal{A}, s' \in \mathcal{S}} \bar{\pi}(a|\bar{s}) P(s'|\bar{s},a) \exp \left( \gamma g(s') \right) \leq \max_{s} \left\{ \exp \left( \gamma g(s) \right) \right\}.
\]
Hence, as $\gamma < 1$, we conclude that $g(s) \leq 0$ for all $s \in \mathcal{S}$.

Combining these results, we conclude that $g \equiv 0$, that is, $V^{*}_{\lambda} = v$. Finally, we substitute the definition of $r$ and the value function $v$ into (6) to see that the entropy-regularized optimal policy is $\pi^{*}_{\lambda} = \bar{\pi}$.
\end{proof}

The next lemma is an extension of Lemma 3 from \cite{NEURIPS2019_a724b912} for the multi-agent setting, accounting that different Nash equilibria can have different values. \\

\begin{lemma}[Simulation Lemma]
    \label{simulation}
    Let $i \in [n]$ be any agent. Then it holds true that
    \begin{align*}
        &\hat{V}^{i, \jopolicy}(s) - V^{i, \jopolicy}(s) \\&= \sum_{s, \joa} \visit \left(\hat{R}^{i}(s, \joa) - R^{i}(s, \joa) + \gamma \left( \sum_{s'}\left(\hat{P}(s' \mid s, \joa) - P(s' \mid s, \joa) \right)V^{i, \jopolicy}(s')\right)\right)\end{align*}
    \begin{proof}
        Let the starting distribution be a dirac measure on some $s \in \gS$. It then holds that
        \begin{align*}
            &\hat{V}^{i, \jopolicy}(s) - V^{i, \jopolicy}(s) \\
            &= \hat{R}^{i}(s, \joa) - R^{i}(s, \joa) + \gamma \left(\sum_{s'}\hat{P}(s' \mid s,\joa)\hat{V}^{i, \jopolicy}(s') - P(s' \mid s, \joa) V^{i, \jopolicy}(s')\right) \\
            & =\hat{R}^{i}(s, \joa) - R^{i}(s, \joa) + \gamma \left(\hat{P}(s' \mid s, \joa)- P(s' \mid s, \joa)\right) V^{i, \jopolicy}(s') \\
            &\quad + \gamma \sum_{s'}\hat{P}(s' \mid s,\joa)(\hat{V}^{i, \jopolicy}(s') - V^{i, \jopolicy}(s'))
        \end{align*}
        The proof follows by induction.
    \end{proof}
\end{lemma}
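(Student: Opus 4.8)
The plan is to prove this multi-agent simulation lemma exactly as in the single-agent case, by freezing the joint policy $\jopolicy = \boldsymbol{\pi}$ so that both Markov games collapse to (model-dependent) Markov reward processes. The key observation is that once the policy is fixed, $\hat{V}^{i,\jopolicy}$ and $V^{i,\jopolicy}$ each satisfy their own Bellman fixed-point equation, and the claimed identity is simply a statement about the difference of two linear fixed points driven by the mismatch in rewards and dynamics.

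First I would write the two Bellman equations at an arbitrary state $s$,
\[V^{i,\jopolicy}(s) = \sum_{\joa} \jopolicy(\joa \mid s)\round{R^{i}(s,\joa) + \gamma \sum_{s'} P(s' \mid s,\joa) V^{i,\jopolicy}(s')},\]
together with the analogous equation for $\hat{V}^{i,\jopolicy}$ using $\hat{R}^{i}$ and $\hat{P}$. Subtracting the two and adding and subtracting the cross term $\gamma \sum_{s'} \hat{P}(s' \mid s,\joa) V^{i,\jopolicy}(s')$ inside the bracket splits the one-step difference into two pieces: a \emph{per-step mismatch} $\hat{R}^{i} - R^{i} + \gamma \sum_{s'}(\hat{P} - P)V^{i,\jopolicy}$, evaluated with the true value function, and a \emph{propagated} term $\gamma \sum_{s'} \hat{P}(s' \mid s,\joa)\round{\hat{V}^{i,\jopolicy}(s') - V^{i,\jopolicy}(s')}$. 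This is precisely the two-line identity the proof sketch above records for the Dirac-initialized case.

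The resulting equation has the affine form $\Delta = g + \gamma \hat{P}^{\jopolicy}\Delta$, where $\Delta(s) := \hat{V}^{i,\jopolicy}(s) - V^{i,\jopolicy}(s)$, the operator $\hat{P}^{\jopolicy}$ is the state-to-state transition induced by $\hat{P}$ and $\jopolicy$, and $g$ collects the per-step mismatch. Since $\gamma < 1$ and the value functions are bounded by $\maxR\horizon$, the map $\gamma \hat{P}^{\jopolicy}$ is a contraction, so I can invert to obtain $\Delta = (I - \gamma \hat{P}^{\jopolicy})^{-1} g$; equivalently, I can unroll the recursion by induction on the horizon, which is the route the excerpt defers to. Expanding $(I - \gamma \hat{P}^{\jopolicy})^{-1}$ as a Neumann series and regrouping the accumulated discounted probabilities of reaching each state-action pair reproduces exactly the discounted occupancy $\visit$, giving the stated equality.

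The one point that needs care — more bookkeeping than genuine obstacle — is tracking \emph{which} model each quantity lives under: the occupancy weights $\visit$ are generated by the \textbf{estimated} dynamics $\hat{P}$, since they arise from inverting $I - \gamma \hat{P}^{\jopolicy}$, whereas the value function in the transition-mismatch term is the \textbf{true} $V^{i,\jopolicy}$. This asymmetry is what makes the result an exact equality rather than merely a bound, and it is precisely the feature that \cref{apply simulation} later exploits by applying the lemma twice with different policies. Everything else is the standard telescoping argument, and only the boundedness of the value functions is needed to justify convergence of the series.
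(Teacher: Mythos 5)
Your proposal is correct and follows essentially the same route as the paper: subtract the two Bellman fixed-point equations, add and subtract the cross term $\gamma \sum_{s'}\hat{P}(s'\mid s,\joa)V^{i,\jopolicy}(s')$ to isolate the per-step mismatch, and unroll the resulting recursion $\Delta = g + \gamma \hat{P}^{\jopolicy}\Delta$ (the paper defers this last step to ``induction,'' which you make explicit via the Neumann series). Your remark that the occupancy weights must be generated by the \emph{estimated} dynamics $\hat{P}$ — since the true $V^{i,\jopolicy}$ appears in the mismatch term — is exactly right and is a detail the paper's notation leaves implicit.
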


\begin{lemma}
\label{logprob}
     Let $\mu^*, \nu^*$ be the QRE equilibrium expert policies, such that \cref{assumption:lowerprob} holds true and $\hat{\mu}^*, \hat{\nu}^*$ the respective empirical estimators with samples $N$. Then for $\delta \in (0,1)$ it holds true with probability $1-\delta$ if $\sqrt{\frac{\log(2/\delta)}{2N_k^+(s)}} \leq \frac{\Delta_{\min}}{2}$, that for $a \in \gA, b \in \gB$
    \begin{align*}
         &|\log(\mu^*(a \mid s)) - \log(\hat{\mu}^*(a \mid s))| \leq \frac{2}{\Delta_{\min}} \sqrt{\frac{\log(2/\delta)}{2N}} \\
         &|\log(\nu^*(a \mid s)) - \log(\hat{\nu}^*(a \mid s))| \leq \frac{2}{\Delta_{\min}} \sqrt{\frac{\log(2/\delta)}{2N}}
    \end{align*}
\end{lemma}
\begin{proof}
The difference in logarithms can be bounded using the inequality:
\[
\left\vert \log(\mu(a \mid s)) - \log(\hat{\mu}(a \mid s))\right\vert \leq \frac{|\mu(a \mid s) - \hat{\mu}(a \mid s)|}{\min(\mu(a \mid s), \hat{\mu}(a \mid s))}.
\]
This follows from the fact that the derivative of \( \log(x) \) is \( 1/x \), so the difference in logarithms is controlled by the relative difference in probabilities.

By Hoeffding's inequality, for any \( (s, a) \in \gS \times \gA \), with probability at least \( 1 - \delta/(SA) \), we have:
\[
|\mu(a \mid s) - \hat{\mu}(a \mid s)| \leq \sqrt{\frac{\log(2/\delta)}{2N_k^+(s)}}.
\]

Since \( \mu(a \mid s) \geq \Delta_{\min} \), and assuming \( \hat{\mu}(a \mid s) \) is close to \( \mu(a \mid s) \), we have:
\[
\hat{\mu}(a \mid s) \geq \mu(a \mid s) - \sqrt{\frac{\log(2/\delta)}{2N_k^+(s)}} \geq \Delta_{\min} - \sqrt{\frac{\log(2SA/\delta)}{2N_k^+(s)}}.
\]
To ensure \( \hat{\mu}(a \mid s) > 0 \), we require:
\[
\sqrt{\frac{\log(2/\delta)}{2N_k^+(s)}} < \Delta_{\min}.
\]
Under this condition, the denominator satisfies:
\[
\min(\mu(a \mid s), \hat{\mu}(a \mid s)) \geq \Delta_{\min} - \sqrt{\frac{\log(2/\delta)}{2N_k^+(s)}}.
\]

Substitute the bounds for \( |\mu(a \mid s) - \hat{\mu}(a \mid s)| \) and \( \min(\mu(a \mid s), \hat{\mu}(a \mid s)) \) into the inequality for the difference in logarithms:
\[
\left\vert \log(\mu(a \mid s)) - \log(\hat{\mu}(a \mid s))\right\vert \leq \frac{\sqrt{\frac{\log(2/\delta)}{2N_k^+(s)}}}{\Delta_{\min} - \sqrt{\frac{\log(2/\delta)}{2N_k^+(s)}}}.
\]
If \( \sqrt{\frac{\log(2/\delta)}{2N_k^+(s)}} \leq \frac{\Delta_{\min}}{2} \), then:
\[
\left\vert \log(\mu(a \mid s)) - \log(\hat{\mu}(a \mid s))\right\vert \leq \frac{2 \sqrt{\frac{\log(2/\delta)}{2N_k^+(s)}}}{\Delta_{\min}}.
\]
\end{proof}

\begin{lemma}
\label{lemma:inverse_probability}
Let \( p_i \) be a probability such that \( p_i \geq \Delta_{\min} > 0 \), and let \( \hat{p}_i \) be the empirical estimate of \( p_i \) based on \( n \) independent samples. Then, for any \( \delta \in (0, 1) \), with probability at least \( 1 - \delta \), the following bound holds:
\[
\left| \frac{1}{p_i} - \frac{1}{\hat{p}_i} \right| \leq \frac{\sqrt{\frac{\log(2/\delta)}{2n}}}{\Delta_{\min} \left( \Delta_{\min} - \sqrt{\frac{\log(2/\delta)}{2n}} \right)}.
\]
Furthermore, if \( \sqrt{\frac{\log(2/\delta)}{2n}} \leq \frac{\Delta_{\min}}{2} \), the bound simplifies to:
\[
\left| \frac{1}{p_i} - \frac{1}{\hat{p}_i} \right| \leq \frac{2 \sqrt{\frac{\log(2/\delta)}{2n}}}{\Delta_{\min}^2}.
\]
\end{lemma}

\begin{proof}

The difference can be rewritten as:
\[
\left| \frac{1}{p_i} - \frac{1}{\hat{p}_i} \right| = \frac{|\hat{p}_i - p_i|}{p_i \hat{p}_i}.
\]

By Hoeffding's inequality, for any \( \delta \in (0, 1) \), with probability at least \( 1 - \delta \), we have:
\[
|\hat{p}_i - p_i| \leq \sqrt{\frac{\log(2/\delta)}{2n}}.
\]
Let \( \varepsilon = \sqrt{\frac{\log(2/\delta)}{2n}} \). Then, with high probability:
\[
|\hat{p}_i - p_i| \leq \varepsilon.
\]

Since \( p_i \geq \Delta_{\min} \) and \( |\hat{p}_i - p_i| \leq \varepsilon \), we have:
\[
\hat{p}_i \geq p_i - \varepsilon \geq \Delta_{\min} - \varepsilon.
\]
To ensure \( \hat{p}_i > 0 \), we require \( \varepsilon < \Delta_{\min} \).

Using \( p_i \geq \Delta_{\min} \) and \( \hat{p}_i \geq \Delta_{\min} - \varepsilon \), the denominator satisfies:
\[
p_i \hat{p}_i \geq \Delta_{\min} (\Delta_{\min} - \varepsilon).
\]

Substitute the bounds for \( |\hat{p}_i - p_i| \) and \( p_i \hat{p}_i \) into the expression for the difference:
\[
\left| \frac{1}{p_i} - \frac{1}{\hat{p}_i} \right| = \frac{|\hat{p}_i - p_i|}{p_i \hat{p}_i} \leq \frac{\varepsilon}{\Delta_{\min} (\Delta_{\min} - \varepsilon)}.
\]
This gives the first part of the lemma.

If \( \varepsilon \leq \frac{\Delta_{\min}}{2} \), then \( \Delta_{\min} - \varepsilon \geq \frac{\Delta_{\min}}{2} \), and the bound simplifies to:
\[
\left| \frac{1}{p_i} - \frac{1}{\hat{p}_i} \right| \leq \frac{\varepsilon}{\Delta_{\min} \cdot \frac{\Delta_{\min}}{2}} = \frac{2 \varepsilon}{\Delta_{\min}^2}.
\]
Substituting \( \varepsilon = \sqrt{\frac{\log(2/\delta)}{2n}} \), we obtain:
\[
\left| \frac{1}{p_i} - \frac{1}{\hat{p}_i} \right| \leq \frac{2 \sqrt{\frac{\log(2/\delta)}{2n}}}{\Delta_{\min}^2}.
\]
This completes the proof of the lemma.
\end{proof}

\begin{lemma}[Concentration Inequality for Total Variation Distance, see e.g. Thm 2.1 by~\citet{concentrationofmissingmass}]  \label{lemma:l1_concentration}
Let $\gX = \{1, 2, \cdots, |\gX|\}$ be a finite set. Let $P$ be a distribution on $\gX$. Furthermore, let $\widehat{P}$ be the empirical distribution given $m$ i.i.d. samples $x_1, x_2, \cdots, x_n$ from $P$, i.e.,
\begin{align*}
    \widehat{P}(j) = \frac{1}{n} \sum_{i=1}^{n} \mathbb{I} \{ x_i = j \}.
\end{align*}
Then, with probability at least $1-\delta$, we have that 
\begin{align*}
    \| P - \widehat{P} \|_1 := \sum_{x \in \gX} | P(x) - \widehat{P}(x) | \leq \sqrt{\frac{2 |\gX| \log(1/\delta) }{n}}.
\end{align*}
\end{lemma}

\begin{proof}
Define the function $f(x_1, \dots, x_n) = \sum_{x \in \gX} |\widehat{P}(x) - P(x)|$, where $\widehat{P}$ is the empirical distribution. Replacing one sample $x_i$ can change $f$ by at most $2/n$, since the empirical frequencies change by at most $1/n$ per coordinate and total variation sums these differences.

By McDiarmid’s inequality, we have for any $\varepsilon > 0$,
\[
\Pr\left( f - \mathbb{E}[f] \geq \varepsilon \right) \leq \exp\left(-\frac{n \varepsilon^2}{2}\right).
\]

Berend and Kontorovich (2013) show that $\mathbb{E}[f] \leq \sqrt{\frac{|\gX|}{n}}$. Setting the failure probability to $\delta$, we solve
\[
\exp\left(-\frac{n \varepsilon^2}{2}\right) = \delta \quad \implies \quad \varepsilon = \sqrt{\frac{2 \log(1/\delta)}{n}}.
\]

Therefore, with probability at least $1 - \delta$,
\[
\| P - \widehat{P} \|_1 \leq \sqrt{\frac{|\gX|}{n}} + \sqrt{\frac{2 \log(1/\delta)}{n}} \leq \sqrt{\frac{2 |\gX| \log(1/\delta)}{n}},
\]
\end{proof}

\end{document}